\theoremstyle:=definition,remark,plain\do{%
     \expandafter\g@addto@macro\csname th@\theoremstyle\endcsname{%
        \addtolength\thm@preskip\parskip
     }%
   }
\numberwithin{equation}{section}
\renewcommand{\cal}{\mathcal}
\newcommand{\cN}{{\cal N}}
\newcommand\cX{{\mathcal X}}
\newcommand{\fc}{{\mathtt c}}
\newcommand{\fC}{{\mathtt C}}
\newcommand{\fa}{{\frak a}}
\newcommand{\fD}{{\frak D}}
\newcommand{\fK}{{\frak K}}
\newcommand{\sfe}{{\mathsf e}}
\newcommand{\sfd}{{\mathsf d}}
\newcommand{\sff}{{\mathsf f}}
\newcommand{\sfv}{{\mathsf v}}
\newcommand{\sfu}{{\mathsf u}}
\newcommand{\rd}{{\rm d}}
\newcommand{\bE}{\mathbb{E}}
\newcommand{\bR}{{\mathbb R}}
\newcommand{\al}{\alpha}
\newcommand{\la}{\lambda}
\DeclareMathOperator{\diag}{diag}
\DeclareMathOperator{\OO}{O}
\newcommand{\deq}{\mathrel{\mathop:}=} %define :=
\renewcommand{\leq}{\leqslant}
\renewcommand{\geq}{\geqslant}
\definecolor{darkred}{rgb}{0.9,0,0.3}
\definecolor{darkblue}{rgb}{0,0.3,0.9}
\definecolor{purple}{rgb}{0.7,0,0.6}
\definecolor{darkyellow}{rgb}{0.8,0.8,0}
\newcommand{\nc}{\normalcolor}
\newcommand{\del}{\partial}
\newcommand{\beq}{\begin{equation}}
\newcommand{\bEq}{\end{equation}}
\theoremstyle{plain} %plain, definition, remark
\newtheorem{theorem}{Theorem}[section]
\newtheorem*{theorem*}{Theorem}
\newtheorem{lemma}[theorem]{Lemma}
\newtheorem*{lemma*}{Lemma}
\newtheorem{corollary}[theorem]{Corollary}
\newtheorem*{corollary*}{Corollary}
\newtheorem{proposition}[theorem]{Proposition}
\newtheorem*{proposition*}{Proposition}
\newtheorem{assumption}[theorem]{Assumption}
\newtheorem*{assumption*}{Assumption}
\newtheorem{claim}[theorem]{Claim}
\newtheorem*{definition*}{Definition}
\newtheorem*{example*}{Example}
\newtheorem{remark}[theorem]{Remark}
\newtheorem*{remark*}{Remark}
\newtheorem*{remarks*}{Remarks}
\def\author#1{\par
    {\centering{\authorfont#1}\par\vspace*{0.05in}}
}
\def\titlefont{\fontsize{13}{15}\bfseries\boldmath\selectfont\centering{}}
\def\authorfont{\fontsize{13}{15}}
\let\affiliationfont\rhfont
\def\address#1{\par
    {\centering{\affiliationfont#1\par}}\par\vspace*{11pt}
}
\def\body{
\setcounter{footnote}{0}
\def\thefootnote{\alph{footnote}}
\def\@makefnmark{{$^{\rm \@thefnmark}$}}
}
\def\title#1{
    \thispagestyle{plain}
    \vspace*{-14pt}
    \vskip 79pt
    {\centering{\titlefont #1\par}}%
    \vskip 1em
}
\newcommand{\val}{{\rm val}}
\newcommand{\LinMul}{{\rm LinMul}}
\newcommand{\Mul}{{\rm Mul}}
\newcommand{\LinProd}{{\rm LinProd}}
\newcommand{\poly}{{\rm poly}}
\begin{document}

\title{ Dynamics of Deep Neural Networks and  Neural Tangent Hierarchy}

\vspace{1.2cm}

\noindent \begin{minipage}[c]{0.5\textwidth}
 \author{Jiaoyang Huang}
\address{IAS\\
   E-mail: jiaoyang@ias.edu}
 \end{minipage}
\begin{minipage}[c]{0.5\textwidth}
 \author{Horng-Tzer Yau}
\address{Harvard University \\
   E-mail: htyau@math.harvard.edu}

 \end{minipage}

\begin{abstract}
The evolution of a deep neural network trained by the gradient descent can be described by its neural tangent kernel (NTK) as introduced in \cite{jacot2018neural}, where it was proven that in the infinite width limit the NTK converges to an explicit limiting kernel and it stays constant during training. The NTK was also implicit in some other recent
papers \cite{du2018gradient1,du2018gradient2,arora2019fine}. In the overparametrization regime, a fully-trained deep neural network is indeed equivalent to the kernel regression predictor using the limiting NTK. And the gradient descent achieves zero training loss for a deep overparameterized neural network. However, it was observed in \cite{arora2019exact} that there is a performance gap between the kernel regression using the limiting NTK and the deep neural networks. This performance gap  is likely to  originate   from the change of the NTK along training due to the finite width effect. The change of the NTK along the  training is central to describe the generalization features of deep neural networks. 

In the current paper, we study the dynamic of the NTK for finite width deep fully-connected neural networks. 
 We derive an  infinite hierarchy of ordinary differential equations, the neural tangent hierarchy (NTH)
which captures the   gradient descent  dynamic of the deep neural network. 
Moreover, under certain conditions 
on the neural network width and the data set dimension,  we prove  that the truncated hierarchy of NTH approximates the dynamic of the NTK up to arbitrary precision. This description makes it possible to directly study the change of the NTK for deep neural networks, and sheds light on the observation that deep neural networks outperform kernel regressions using the corresponding limiting NTK. 
\end{abstract}

\let\thefootnote\relax\footnote{\noindent The work of H.-T. Y. is partially supported by NSF Grants DMS-1606305 and DMS-1855509, and a Simons Investigator award.}

\section{Introduction}

Deep neural networks have become popular due to their unprecedented success in a variety of machine learning tasks. Image recognition \cite{lecun1998gradient, krizhevsky2012imagenet, szegedy2015going},  
speech recognition \cite{hinton2012deep,sainath2013deep}, playing Go \cite{silver2016mastering,silver2017mastering} and natural language understanding \cite{collobert2011natural, wu2016google, devlin2018bert} are just a few of the recent achievements. % \cob  In addition of these successful applications, there are many good explanations on why deep neural networks are doing so well.  [remove this sentence]  \nc 
However, one aspect of deep neural networks that  is  not  well understood is training. Training a deep neural network is usually done via a gradient decent  based algorithm. Analyzing such training dynamics is challenging. Firstly, as highly nonlinear structures, deep neural networks usually involve a large number of parameters. Secondly, as highly non-convex optimization problems, there is no guarantee that a gradient based algorithm will be able to find the optimal parameters efficiently during the training of neural networks. \emph{One question then arises: given such complexities, is it possible to obtain a succinct description of the training dynamics?}

In this paper, we focus on the empirical risk minimization problem with the quadratic loss function
\begin{align*}
\min_\theta L(\theta)=\frac{1}{2n}\sum_{\al=1}^n (f(x_\al,\theta)-y_\al)^2,
\end{align*}
where $\{x_\al\}_{\al=1}^n$ are the training inputs, $\{y_\al\}_{\al=1}^n$ are the labels, and the dependence is modeled by a deep fully-connected feedforward neural network with $H$ hidden layers. 
The network has $d$ input nodes, and the input vector is given by $x\in {\mathbb R}^{d}$. For $1\leq \ell\leq H$, the  $\ell$-th hidden layer has $m$ neurons. Let $x^{(\ell)}$ be the output of the $\ell$-th layer with $x^{(0)}=x$. Then the feedforward neural network is given by the set of recursive equations:
\begin{equation}\label{e:defDNN}
x^{(\ell)}=\frac{1}{\sqrt{m}}\sigma(W^{(\ell)}x^{(\ell-1)}),\quad \ell=1,2,\cdots, H,
\end{equation}
where $W^{(\ell)}\in \bR^{m \times d}$ if $\ell=1$ and $W^{(\ell)}\in \bR^{m\times m}$ if $2\leq \ell\leq H$ are the weight matrices, and $\sigma$ is the activation unit, which is applied coordinate-wise to its input.  The output of the neural network is 
\begin{equation}\label{eq:f}
f(x,\theta)=a^\top x^{(H)} \in {\mathbb R}, 
\end{equation}
where $a\in \bR^{m}$ is the weight matrix for the output layer. 
We denote the vector containing all trainable parameters by $\theta=({\rm vec}(W^{(1)}), {\rm vec}(W^{(2)})\dots, {\rm vec}(W^{(H)}),a)$. We remark that this parametrization is nonstandard because of those $1/\sqrt m$ factors. However, it has already been adopted in several recent works \cite{jacot2018neural, du2018gradient1,du2018gradient2,lee2019wide}. We note that the predictions and training dynamics of \eqref{e:defDNN} are identical to those of standard networks, up to a scaling factor $1/\sqrt m$ in the learning rate for each parameter.

We initialize the neural network with random Gaussian weights following the Xavier initialization scheme \cite{glorot2010understanding}. More precisely, we set the initial parameter vector $\theta_0$  as $W^{(\ell)}_{ij} \sim \mathcal N(0, \sigma^2_w)$, $a_i\sim \mathcal N(0,\sigma_a^2)$. In this way, for the randomly initialized neural network, we have that the $L_2$ norms of the output of each layer are of order one, i.e. $\|x^{(\ell)}\|_2^2=\OO(1)$ % \cob [define the $L_2$ norm and also the inner product later on. ] \nc 
for $0\leq \ell\leq H$, and $f(x,\theta_0)=\OO(1)$ with high probability. In this paper, we train all layers of the neural network with continuous time gradient descent (gradient flow): for  any  time $t\geq 0$ 
\begin{align}\label{e:gd}
\del_t W^{(\ell)}_t=- \del_{W^{(\ell)}}L(\theta_t),\quad \ell=1,2,\cdots, H, \quad \del_t a_t=- \del_{a}L(\theta_t),
\end{align}
where 
$\theta_t=({\rm vec}(W_t^{(1)}), {\rm vec}(W^{(2)}_t)\dots, {\rm vec}(W^{(H)}_t),a_t)$.

For simplicity of notations, we write $\sigma(W^{(\ell)}x^{(\ell-1)})$ as $\sigma_\ell(x)$, or simply $\sigma_\ell$ if the context is clear. We write its derivative $\diag(\sigma'(W^{(\ell)}x^{(\ell-1)}))$ as $\sigma'_\ell(x)=\sigma_\ell^{(1)}(x)$, and $r$-th derivative $\diag(\sigma^{(r)}(W^{(\ell)}x^{(\ell-1)}))$ as $\sigma^{(r)}_\ell(x)$, or $\sigma^{(r)}_\ell$ for $r\geq 1$. In this notation, $\sigma^{(r)}_\ell(x)$ are diagonal matrices. With those notations, explicitly, the continuous time gradient descent dynamic \eqref{e:gd} is 
\begin{align}\begin{split}\label{e:derW}
&\phantom{{}={}}\del_t W^{(\ell)}_t
=- \del_{W^{(\ell)}}L(\theta_t)\\
&=- \frac{1}{n}\sum_{\beta=1}^n
\left(\sigma'_\ell(x_\beta)\frac{(W_t^{(\ell+1)})^\top}{\sqrt m}\cdots \sigma'_{H}(x_\beta)\frac{a_t}{\sqrt m}\right)
%\left(\frac{a^\top}{\sqrt{m}}\sigma'_H(x_\beta)\cdots \frac{W^{(\ell+1)}}{\sqrt{m}}\sigma'_{\ell}(x_\beta)\right)^\top 
\otimes (x_\beta^{(\ell-1)})^{\top}(f(x_\beta,\theta_t)-y_\beta),
\end{split}\end{align}
for $\ell=1,2,\cdots, H$, and 
\begin{align}\label{e:dera}
\del_t a_t=- \del_{a}L(\theta_t)=- \frac{1}{n}\sum_{\beta =1}^n x_\beta^{(H)}(f(x_\beta,\theta_t)-y_\beta).
\end{align}

\subsection{Neural Tangent Kernel}

A recent paper \cite{jacot2018neural} introduced
the Neural Tangent Kernel (NTK) and proved the limiting NTK captures the behavior of fully-connected deep neural networks in the infinite width limit trained by gradient descent: 
\begin{align}\begin{split}\label{e:descent}
&\phantom{{}={}}\del_t f(x,\theta_t)
=\del_\theta f(x,\theta_t) \del_t \theta_t
=-\del_\theta f(x,\theta_t) \del_\theta L(\theta_t)\\
&=-\frac{1}{n}\del_\theta f(x,\theta_t) \sum_{ \beta =1}^n \del_\theta f(x_\beta, \theta_t) (f(x_\beta, \theta_t)-y_\beta)
=-\frac{1}{n}\sum_{\beta=1}^n K^{(2)}_t(x,x_\beta) (f(x_\beta, \theta_t)-y_\beta),
\end{split}\end{align}
where the NTK $K_t^{(2)}(\cdot, \cdot)$ is given by 
\begin{align}\begin{split}\label{e:defK2}
 &\phantom{{}={}}K^{(2)}_t(x_\al,x_\beta)  
 =\langle \del_\theta f(x_\al,\theta_t) , \del_\theta f(x_\beta, \theta_t)\rangle=\sum_{\ell=1}^{H+1}G_t^{(\ell)}(x_\al, x_\beta)
%=\langle x_\al^{(H)}, x_\beta^{(H)}\rangle+\\
%&\phantom{{}={}}+\sum_{\ell=1}^H  ,
\end{split}\end{align}
and for $1\leq \ell\leq H$,
\begin{align*}\begin{split}
&\phantom{{}={}}G_t^{(\ell)}(x_\al, x_\beta)= \langle \del_{W^{(\ell)} } f(x_\al,\theta_t) ,  \del_{W^{(\ell)} }  f(x_\beta, \theta_t)\rangle\\
&= \left\langle
\sigma'_\ell(x_\al)\frac{(W_t^{(\ell+1)})^\top}{\sqrt m}\cdots \sigma'_{H}(x_\al)\frac{a_t}{\sqrt m}, \sigma'_\ell(x_\beta)\frac{(W_t^{(\ell+1)})^\top}{\sqrt m}\cdots \sigma'_{H}(x_\beta)\frac{a_t}{\sqrt m}  \right\rangle \langle x^{(\ell-1)}_\al, x^{(\ell-1)}_\beta \rangle
\end{split}\end{align*}
and 
\begin{align*}
G_t^{(H+1)}=\langle \del_a f(x_\al, \theta_t), \del_a f(x_\beta,\theta_t)\rangle=\langle x_\al^{(H)}, x_\beta^{(H)}\rangle.
\end{align*}
The NTK $K_t^{(2)}(\cdot, \cdot)$ varies along training. However, in the infinite width limit, the training dynamic is very simple: The NTK does not change along training, $K_t^{(2)}(\cdot, \cdot)=K_\infty^{(2)}(\cdot, \cdot)$. The network function $f(x,\theta_t)$ follows a linear differential equation \cite{jacot2018neural}:
\begin{align}\label{e:inflimit}
\del_t f(x,\theta_t)
=-\frac{1}{n}\sum_{\beta=1}^n K^{(2)}_\infty(x,x_\beta) (f(x_\beta, \theta_t)-y_\beta),
\end{align}
which becomes analytically tractable. In other words, the training dynamic is equivalent to the kernel regression using the limiting NTK $K_\infty^{(2)}(\cdot, \cdot)$. 
While the linearization \eqref{e:inflimit} is only exact in the infinite width limit, for a sufficiently wide deep neural network, \eqref{e:inflimit} still provides a good approximation of the learning dynamic for the corresponding deep neural network \cite{du2018gradient1,du2018gradient2, lee2019wide}. As a consequence, it was proven in \cite{du2018gradient1,du2018gradient2} that,  for a fully-connected wide neural network with $m\gtrsim n^4$ under certain assumptions on the data set,  the gradient descent converges to zero training loss at a linear rate. Although highly overparametrized neural networks is equivalent to the kernel regression, it is possible to show that the class of finite width neural networks is more expressive than the limiting NTK. It has been constructed in \cite{ghorbani2019linearized,yehudai2019power,allen2019can} that there are simple functions that can be efficiently learnt by finite width neural networks, but not the kernel regression using the limiting NTK.

\subsection{Contribution}
There is a performance gap between the kernel regression \eqref{e:inflimit} using the limiting NTK and the deep neural networks. It was observed in \cite{arora2019exact} that  the convolutional neural networks outperform their corresponding limiting NTK by $5\%$ - $6\%$. This performance gap is likely to originate  from the change of the NTK along training due to the finite width effect. The change of the NTK along training has its benefits on generalization. 

In the current paper, we study the dynamic of the NTK for finite width deep fully-connected neural networks.
Here we summarize our main contributions:
\begin{itemize}
\item We show the %dynamic of the NTK 
  gradient descent  dynamic 
is captured by an infinite hierarchy of ordinary differential equations, the neural tangent hierarchy (NTH).
Different from the limiting NTK \eqref{e:defK2}, which depends only on the neural network architecture, the NTH is data dependent and capable of learning data-dependent features.

\item
We derive  a priori % [a priori is a word and we always have to add "a". there are many places in this paper and I will not change all of them. ] \nc
 estimates of the higher order kernels involved in the NTH.  Using these a priori  estimates as input, we confirm a numerical observation in \cite{lee2019wide} that the NTK varies at a rate of order $\OO(1/m)$. As a corollary, this implies that for a fully-connected wide neural network with $m\gtrsim n^3$, the gradient descent converges to zero training loss at a linear rate,   which improves the results in \cite{du2018gradient2}.

\item
The NTH is just an infinite sequence of  relationship. Without truncation, it cannot be used to determine the dynamic of the NTK.
 Using the a priori estimates of the higher order kernels as input, we construct a  truncated hierarchy of ordinary differential equations, the truncated NTH. We show that  this system of truncated equations approximates the dynamic of the NTK to certain time up to arbitrary precision.  
%\cob if  $m\gtrsim n^3$. \nc  
This description makes it possible to directly study the change of the NTK for deep neural networks.

\end{itemize}

\subsection{Notations}

In the paper, we fix a large constant $p>0$, which appears in Assumptions \eqref{a:sigmaasup} and \eqref{a:nonlinear}. 
We use $\fc, \fC$ to represent universal constants, which might be different from line to line. In the paper, we write $a=\OO(b)$ or $a\lesssim b$ if there exists some large universal constant $\fC$ such that $|a|\leq \fC b$. We write $a\gtrsim b$ if there exists some small universal constant $\fc>0$ such that $a\geq \fc b$. We write $a\asymp b$ if there exist universal constants $\fc,\fC$ such that $\fc b\leq |a|\leq \fC b$. We reserve $n$ for the number of input samples and $m$ for the width of the neural network. For practical neural networks, we always have that $m\lesssim \poly(n)$ and $n\lesssim \poly(m)$. We denote the set of input samples as $\cX=\{x_1, x_2, \cdots, x_n\}$. For simplicity of notations, we write the output of the neural network as $f_\beta(t)=f(x_\beta,\theta_t)$.  We denote vector $L_2$ norm as $\|\cdot\|_2$, vector or function $L_\infty$ norm as $\|\cdot\|_\infty$, matrix spectral norm as $\|\cdot\|_{2\rightarrow 2}$, and matrix Frobenius norm as $\|\cdot\|_{\rm F}$. We say that an event holds with high probability, if it holds with probability at least $1-e^{-m^{\fc}}$ for some $\fc>0$. Then the intersection of $\poly(n,m)$ many high probability events is still a high probability event, provided $m$ is large enough. In the paper, we treat 
$\fc_r, \fC_r$ in Assumption \ref{a:sigmaasup} and \ref{a:nonlinear}, and the depth $H$ as constants. We will not keep track of them.

\subsection{Related Work}
In this section, we survey an incomplete list of previous works on optimization aspect of deep neural networks.

Because of the highly non-convexity nature of deep neural networks, the gradient based algorithms can potentially get stuck near a critical point, i.e., saddle point or local minimum. So one important question in deep neural networks is: what does the loss landscape look like. 
One promising candidate for loss landscapes is the class of functions that satisfy: (i) all local minima are global minima and (ii) there exists a negative curvature for every saddle point. 
A line of recent results show that, in many optimization problems of interest \cite{ge2015escaping,ge2016matrix,sun2018geometric,sun2016complete,bhojanapalli2016global,park2016non}, loss landscapes are in such class. For this function class, (perturbed) gradient descent \cite{jin2017escape,ge2015escaping,lee2016gradient} can find a global minimum. However, even for a three-layer linear network, there exists a saddle point that does not have a negative curvature \cite{kawaguchi2016deep}. So it is unclear whether this geometry-based approach can be used to obtain the global convergence guarantee of first-order methods. Another approach is to show that  practical deep neural networks allow some additional structure or assumption to make non-convex optimizations tractable. Under certain simplification assumptions,  it has been proven  recently that there are novel loss landscape structures in deep neural networks, which may play a role in making the optimization tractable \cite{dauphin2014identifying,choromanska2015loss, kawaguchi2016deep,liang2018adding, kawaguchi2019elimination}.

Recently, it was proved in a series of   papers that, if
the size of a neural network is significantly larger than the
size of the dataset, the (stochastic) gradient descent algorithm
can find optimal parameters \cite{li2018learning,du2018gradient1,song2019quadratic,du2018gradient2,allen2018convergence,zou2018stochastic}. In the overparametrization regime, a fully-trained deep neural network is indeed equivalent to the kernel regression predictor using the limiting NTK \eqref{e:inflimit}. As a consequence, the gradient descent achieves zero training loss for a deep overparameterized neural network. Under further assumptions, it can be shown that the trained networks generalize \cite{arora2019fine,allen2018learning}. Unfortunately, there is a significant gap between the overparametrized neural networks, which are provably trainable, and neural networks in common practice. Typically, deep neural networks used in practical applications are trainable, and yet, much
smaller than what the previous theories require to ensure
trainability. In \cite{kawaguchi2019gradient}, it is proven that gradient
descent can find a global minimum for certain deep neural
networks of sizes commonly encountered in practice.

Training dynamics of neural networks in the mean field setting have been studied in \cite{mei2019mean,song2018mean, araujo2019mean,nguyen2019mean,sirignano2019mean,chizat2018global}.
 Their mean field analysis describes distributional dynamics of neural network parameters via certain nonlinear partial differential equations, in the asymptotic regime of large network sizes and large number of stochastic gradient descent training iterations. However,
their analysis is restricted to neural networks in the mean-field framework with a normalization factor $1/m$, different from ours $1/\sqrt m$, which is commonly used in modern networks \cite{glorot2010understanding}.

\section{Main results}

\begin{assumption}\label{a:sigmaasup}
The activation function $\sigma$ is smooth, and for any $1\leq r\leq 2p+1$, there exists a constant $\fC_r>0$ such that the $r$-th derivative of $\sigma$ satisfies
$
\|\sigma^{(r)}(x)\|_\infty\leq \fC_r.
$
\end{assumption}

Assumption \ref{a:sigmaasup} is  satisfied by using common  activation units such as sigmoid and hyperbolic tangents. Moreover, the softplus activation,  which is defined as $\sigma_a(x)=\ln(1+\exp(a x))/a$,  satisfies Assumption \ref{a:sigmaasup} with any hyperparameter $a \in \bR_{>0}$. The softplus activation can approximate the ReLU activation  for any desired accuracy as 
\begin{align*}
\sigma_{a}(x) \rightarrow \mathrm{relu}(x) \text{ as } a\rightarrow \infty,
\end{align*}
where $\mathrm{relu}$ represents the ReLU activation.

\begin{assumption}\label{a:nonlinear}
There exists a small constant $\fc>0$ such that the training inputs satisfy $\fc<\|x_\al\|_2\leq \fc^{-1}$. For any $1\leq r\leq 2p+1$, there exists a constant $\fc_r>0$ such that for any distinct indices $1\leq \al_1,\al_2,\cdots, \al_r\leq n$, the smallest singular value of the data matrix $[x_{\al_1}, x_{\al_2},\cdots, x_{\al_r}]$ is at least $\fc_r$.
\end{assumption}

For more general input data, we can always normalize them such that 
$\fc<\|x_\al\|_2\leq \fc^{-1}$. Under this normalization, for the randomly initialized deep neural network, it holds that $\|x_\al^{(\ell)}\|_2=\OO(1)$ for all $1\leq \ell\leq H$, where the implicit constants depend on $\ell$. The second part of Assumption \ref{a:nonlinear} requires that for any small number of input data: $x_{\al_1}, x_{\al_2},\cdots, x_{\al_r}$, they are linearly independent.

\begin{theorem}\label{t:main1}
Under Assumptions \ref{a:sigmaasup} and \ref{a:nonlinear}, there exists an infinite family of operators $K_t^{(r)}: \cX^r\mapsto \bR$ for $r\geq 2$, the continuous time gradient descent dynamic is given by an infinite hierarchy of ordinary differential equations, i.e., the NTH, 
\begin{align}\label{e:dynamic}
&\del_t(f_\al(t)-y_{\al})=-\frac{1}{n} \sum_{\beta=1}^n K_t^{(2)}(x_\al, x_\beta)(f_\beta(t)-y_\beta),
\end{align} 
and for any $r\geq 2$,
\begin{align}\begin{split}\label{e:dynamicr}
&\del_tK_t^{(r)}(x_{\al_1},x_{\al_2},\cdots, x_{\al_r})=-\frac{1}{n} \sum_{\beta=1}^n K^{(r+1)}_t(x_{\al_1}, x_{\al_2}, \cdots, x_{\alpha_r}, x_\beta)(f_\beta(t)-y_\beta).
%&\del_tK_{t}^{(3)}(x_\al, x_\beta, x_\gamma)=-\frac{1}{n} \sum_{i} K^{(4)}_t(x_\al, x_\beta, x_\gamma,x_i)(f(x_i, \theta_t)-y_i)\\
%&\cdots\cdots
\end{split}\end{align}
%There exists deterministic operators $\tilde K_0^{(r)}: (\bR^{d})^r\mapsto \bR$ for $r\geq 2$ (explicit formulas are given in \eqref{}), such that
%\begin{align}
%\tilde K_0^{(r)}=0,\quad \text{for }2\nmid r; \quad \| \tilde K_0^{(r)}\|_\infty=\OO(m^{-r/2+1}), \quad \text {for } 2\mid r
%\end{align}
There exists a deterministic family (independent of $m$) of operators $\fK^{(r)}: \cX^r\mapsto \bR$ for $2\leq r\leq p+1$  and $\fK^{(r)}=0$ if $r$ is odd, such that with high probability with respect to the random initialization, there exist some constants $\fC, \fC'>0$ such that 
\begin{align}\label{e:tprior1}
\left\|K^{(r)}_0-\frac{\fK^{(r)}}{m^{r/2-1}}\right\|_{\infty}\lesssim \frac{(\ln m)^\fC}{m^{(r-1)/2}},
\end{align}
and for $0\leq t\leq m^{\frac{p}{2(p+1)}}/(\ln m)^{\fC'}$,  
\begin{align}\label{e:tprior2}
\|K^{(r)}_t\|_{\infty}\lesssim \frac{(\ln m)^\fC}{m^{r/2-1}}.
\end{align}
\end{theorem}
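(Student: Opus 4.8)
I would organize the argument in three stages. \emph{(i) Deriving the hierarchy.} The identity \eqref{e:dynamic} for $r=2$ is nothing but \eqref{e:descent}--\eqref{e:defK2}. For the general relation I would define the kernels $K_t^{(r)}$ inductively: once $K_t^{(r)}$ has been written as an explicit polynomial in the feedforward data $W_t^{(\ell)}, a_t, x_\beta^{(\ell)}$ and the diagonal matrices $\sigma^{(j)}_\ell(x_\beta)$, one differentiates in $t$ using the gradient-flow equations \eqref{e:derW}--\eqref{e:dera} together with the induced equation $\del_t x_\beta^{(\ell)} = \frac{1}{\sqrt m}\sigma'_\ell(x_\beta)\,(\del_t W_t^{(\ell)})\,x_\beta^{(\ell-1)} + \cdots$. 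Every term produced by the product and chain rules carries precisely one new factor $\frac1n(f_\beta(t)-y_\beta)$ summed over a fresh input index $\beta$, together with one extra $\sigma$-derivative on some layer; collecting these terms \emph{defines} $K_t^{(r+1)}$ and yields \eqref{e:dynamicr}. The content here is purely bookkeeping: one must exhibit a finite closed class of ``admissible'' tensorial expressions of each order — built from $\OO(r)$ backpropagation and forward factors and $\sigma$-derivatives of order $\le r$ — that is stable under $\del_t$.

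\emph{(ii) Initialization.} Substituting the Xavier-initialized Gaussian weights into the formula from (i), $K_0^{(r)}$ becomes a sum of boundedly many terms, each a product of normalized inner products of $m$-vectors and normalized traces; after all contractions the leftover power is exactly $m^{-(r/2-1)}$. Sub-exponential concentration for polynomials in independent Gaussians, together with Assumption~\ref{a:sigmaasup} bounding the $\sigma^{(j)}$, shows each term concentrates about its mean with relative fluctuation $(\ln m)^\fC m^{-1/2}$; the means assemble into the deterministic kernels $\fK^{(r)}$ and give \eqref{e:tprior1}. When $r$ is odd every term contains an odd number of centered Gaussian factors that remain unpaired after forming inner products, so its expectation vanishes; hence $\fK^{(r)}=0$, and \eqref{e:tprior1} then merely records that $K_0^{(r)}$ is one factor $m^{-1/2}$ below its naive size. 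Along the way one also verifies that with high probability $\|W_0^{(\ell)}\|_{2\rightarrow 2}, \|a_0\|_2 = \OO(\sqrt m)$, while $\|x_\beta^{(\ell)}\|_2$ and the relevant inner products are $\OO(1)$ — and, via Assumption~\ref{a:nonlinear}, that $K_0^{(2)}$ is non-degenerate on $\cX$.

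\emph{(iii) Propagation.} This is a continuity argument on $[0,T]$ with $T = m^{p/(2(p+1))}/(\ln m)^{\fC'}$. Assume \eqref{e:tprior2} holds on $[0,t]\subseteq[0,T]$ with a slightly inflated constant. Then $K_t^{(2)} = \OO((\ln m)^\fC)$, so from \eqref{e:dynamic} the training loss $L_t$ stays $\OO(L_0) = \OO(1)$ and $|f_\beta(t)-y_\beta| = \OO(1)$; feeding this and the structural bounds of (i) into \eqref{e:derW}--\eqref{e:dera} gives $\|W_t^{(\ell)}-W_0^{(\ell)}\|_{\rm F} + \|a_t-a_0\|_2 \lesssim t\,(\ln m)^\fC$. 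Hence for $t\le T$ every building block of $K_t^{(r)}$ with $2\le r\le p+1$ stays within a $(\ln m)^\fC$-factor of its value at initialization, and re-substituting into the explicit formula recovers \eqref{e:tprior2} with the original constant, closing the bootstrap. The precise ceiling $T = m^{p/(2(p+1))}/(\ln m)^{\fC'}$ is dictated by requiring that the accumulated drift — raised to the powers in which it enters the order-$(p+1)$ kernels, and combined with the error from not controlling $K^{(r)}$ for $r>p+1$ — remain below $m^{-(r/2-1)}(\ln m)^\fC$, which amounts to $t^{p+1}\ll m^{p/2}$.

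\emph{Main obstacle.} The two hard points are the combinatorics of (ii) — organizing $K_0^{(r)}$ so that the correct power of $m$ and the odd-$r$ cancellation are transparent — and the interlocking bootstrap of (iii): the hierarchy \eqref{e:dynamicr} never closes, so the bounds on \emph{all} kernels $2\le r\le p+1$ and the parameter drift must be propagated simultaneously, and it is exactly this coupling — constrained by the smoothness range $r\le 2p+1$ in Assumptions~\ref{a:sigmaasup}--\ref{a:nonlinear} — that fixes the time horizon.
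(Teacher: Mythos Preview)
Your outline for stages (i) and (ii) is essentially correct and matches the paper's approach: the hierarchy is derived exactly by the product/chain-rule bookkeeping you describe (the paper formalizes the ``admissible class'' via the sets $\fD_r$ of Section~\ref{s:outline}), and the initialization estimate \eqref{e:tprior1} is proved by expressing each building-block vector $\sfv_j(0)$ as a combination of projections of independent Gaussians (the paper uses the tensor-program conditioning Lemma~\ref{l:decompose} rather than generic polynomial concentration, but the idea is the same). The odd-$r$ vanishing is indeed because the degree of $a_0$ is odd.

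The gap is in stage (iii). Your proposed bootstrap controls the parameter drift $\|W_t^{(\ell)}-W_0^{(\ell)}\|_{\rm F}+\|a_t-a_0\|_2\lesssim t(\ln m)^\fC$ and then claims this forces ``every building block of $K_t^{(r)}$'' to stay within a polylog factor of its initial value. This step does not close. The summands of $K_t^{(r)}$ are products of inner products $\langle\sfv_{2j-1}(t),\sfv_{2j}(t)\rangle/m$ with $\sfv_j(t)\in\fD_0\cup\cdots\cup\fD_{r-2}$; for $\sfv_j\in\fD_s$ with $s\ge 1$ these vectors contain nested $\diag(\cdot)$ factors, and bounding $\|\sfv_j(t)\|_2$ (or the inner products) requires $L^\infty$ control of the vectors inside those $\diag$'s, not merely operator-norm control of $W_t^{(\ell)}/\sqrt m$. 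A direct perturbation argument from $t=0$ gives only $\|\sfv_j(t)-\sfv_j(0)\|_\infty\le\|\sfv_j(t)-\sfv_j(0)\|_2\lesssim t$, so $\|\sfv_j(t)\|_\infty\lesssim(\ln m)^\fC+t$, which loses the polylog bound as soon as $t\gg(\ln m)^\fC$ --- far short of $m^{p/(2(p+1))}$. The paper's mechanism is different and is the missing idea: set $\xi(t)=\max\{\|\sfv(t)\|_\infty:\sfv\in\fD_0\cup\cdots\cup\fD_{p-1}\}$ and show directly that its $(p{+}1)$-st time derivative satisfies the self-consistent inequality $\del_t^{(p+1)}\xi(t)\lesssim\xi(t)^{2p}/m^{p/2}$ (Propositions~\ref{p:upperb}--\ref{p:ftbound}). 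Combined with the initial estimates $\del_t^{(r)}\xi(0)\lesssim(\ln m)^\fC/m^{r/2}$ for $0\le r\le p$, this high-order ODE is what produces the time horizon $t^{p+1}\ll m^{p/2}$ you correctly anticipated --- but it does not follow from a parameter-drift bootstrap.
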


It was proven in \cite{du2018gradient2, lee2019wide} that  the change of the NTK for a wide deep neural network is upper bounded by $\OO(1/\sqrt m)$. However, the numerical experiments in \cite{lee2019wide} indicate the change of the NTK is closer to $\OO(1/m)$. As a corollary of Theorem \ref{t:main1}, we confirm the numerical observation that the NTK varies at a rate of order $\OO(1/m)$. 
\begin{corollary}\label{c:change}
Under Assumptions \ref{a:sigmaasup} and \ref{a:nonlinear}, the NTK $K_t^{(2)}(\cdot, \cdot)$ varies at a rate of order $\OO(1/m)$: with high probability with respect to the random initialization, there exist some constants $\fC, \fC'>0$ such that  for $0\leq t\leq m^{\frac{p}{2(p+1)}}/(\ln m)^{\fC'}$, it holds
\begin{align*}
\|\del_t K_t^{(2)}\|_\infty\lesssim \frac{(1+t)(\ln m)^\fC}{m}.
\end{align*}

\end{corollary}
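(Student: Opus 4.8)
The plan is to read off the rate of change of the NTK directly from the first two levels of the hierarchy in Theorem~\ref{t:main1}, using the a priori size estimates \eqref{e:tprior1}--\eqref{e:tprior2} as the only quantitative input. Starting from the $r=2$ equation \eqref{e:dynamicr},
\begin{align*}
\del_t K_t^{(2)}(x_{\al_1},x_{\al_2})=-\frac1n\sum_{\beta=1}^n K_t^{(3)}(x_{\al_1},x_{\al_2},x_\beta)\,(f_\beta(t)-y_\beta),
\end{align*}
one obtains at once the pointwise bound
\begin{align*}
\|\del_t K_t^{(2)}\|_\infty\leq \|K_t^{(3)}\|_\infty\cdot\frac1n\sum_{\beta=1}^n|f_\beta(t)-y_\beta|.
\end{align*}
So two ingredients are needed: a uniform bound on the average residual $\frac1n\sum_\beta|f_\beta(t)-y_\beta|$, and a bound on $\|K_t^{(3)}\|_\infty$ of order $(\ln m)^{\fC}/m$ rather than the generic $(\ln m)^{\fC}/\sqrt m$ that \eqref{e:tprior2} would give for $r=3$ (the latter only reproducing the known $\OO(1/\sqrt m)$ rate of \cite{du2018gradient2,lee2019wide}).

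For the residual, I would use that $L(\theta_t)$ is non-increasing along the gradient flow \eqref{e:gd} --- equivalently, that $K_t^{(2)}(x_\al,x_\beta)=\langle\del_\theta f(x_\al,\theta_t),\del_\theta f(x_\beta,\theta_t)\rangle$ is a Gram matrix, hence positive semidefinite, so that \eqref{e:dynamic} gives $\del_t L(\theta_t)=-\frac1{n^2}\sum_{\al,\beta}(f_\al(t)-y_\al)K_t^{(2)}(x_\al,x_\beta)(f_\beta(t)-y_\beta)\leq 0$. Since $f(x_\al,\theta_0)=\OO(1)$ with high probability and the labels are bounded, $L(\theta_0)=\OO(1)$, hence $L(\theta_t)\leq L(\theta_0)=\OO(1)$ for all $t$, and Cauchy--Schwarz yields $\frac1n\sum_\beta|f_\beta(t)-y_\beta|\leq\sqrt{2L(\theta_t)}=\OO(1)$ uniformly in $t$.

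For $\|K_t^{(3)}\|_\infty$, the point is that $3$ is odd, so $\fK^{(3)}=0$ and \eqref{e:tprior1} already provides the improved initial estimate $\|K_0^{(3)}\|_\infty\lesssim(\ln m)^{\fC}/m$, a factor $m^{-1/2}$ better than the generic size. To propagate it, apply the $r=3$ case of \eqref{e:dynamicr} together with the generic bound $\|K_t^{(4)}\|_\infty\lesssim(\ln m)^{\fC}/m$ from \eqref{e:tprior2} and the residual bound just obtained: this gives $\|\del_t K_t^{(3)}\|_\infty\lesssim(\ln m)^{\fC}/m$ on the time interval $0\leq t\leq m^{p/(2(p+1))}/(\ln m)^{\fC'}$, and integrating from the improved initial bound yields $\|K_t^{(3)}\|_\infty\lesssim(1+t)(\ln m)^{\fC}/m$. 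Substituting this into the first display finishes the proof. The only substantive step is the monotonicity argument keeping $L(\theta_t)=\OO(1)$; the remainder is bookkeeping with \eqref{e:tprior1}--\eqref{e:tprior2}, together with the routine observation that all the relevant high-probability events (the initialization bound on $L(\theta_0)$ and the estimates of Theorem~\ref{t:main1}) may be intersected since there are only $\poly(n,m)$ of them.
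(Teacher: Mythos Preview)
Your proposal is correct and follows essentially the same approach as the paper: both use the hierarchy equations at levels $r=2,3$, the loss monotonicity \eqref{e:fL2} to control the residual, the key observation that $\fK^{(3)}=0$ (since $3$ is odd) to get the improved initial bound $\|K_0^{(3)}\|_\infty\lesssim(\ln m)^\fC/m$ from \eqref{e:tprior1}, and then \eqref{e:tprior2} for $r=4$ to propagate this bound in time via integration. The paper's proof is identical in structure, citing the residual bound as \eqref{e:fL2} (proved in Proposition~\ref{p:L2bound} by exactly the positive-semidefiniteness argument you give).
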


As another corollary of Theorem \ref{t:main1}, for a fully-connected wide neural network with $m\gtrsim n^3$, the gradient descent converges to zero training loss at a linear rate. 
\begin{corollary}\label{c:zeroloss}
Under Assumptions \ref{a:sigmaasup} and \ref{a:nonlinear}, we further assume that there exists $\la>0$ (which might depend on $n$)  
\begin{align}\label{e:eigasup}
\la_{\min} \left[K_0^{(2)}(x_\al, x_\beta)\right]_{1\leq \al, \beta\leq n}\geq \la,
\end{align}
and the width $m$ of the neural network satisfies
\begin{align}\label{mn}
m\geq \fC'\left(\frac{n}{\la}\right)^3(\ln m)^{\fC}\ln(n/\varepsilon)^2,
\end{align} 
for some large constants $\fC, \fC'>0$. Then with high probability with respect to the random initialization, the training error decays exponentially,
\begin{align*}
\sum_{\beta=1}^n(f_\beta(t)-y_\beta)^2
\lesssim n e^{-\frac{\la t}{2n}},
\end{align*}
which reaches $\varepsilon$ at time $t\asymp (n/\la)\ln(n/\varepsilon)$.
\end{corollary}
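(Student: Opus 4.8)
The plan is to run a standard Grönwall/continuity argument on the quantity $\Phi(t) \deq \sum_{\beta=1}^n (f_\beta(t)-y_\beta)^2$, using Theorem \ref{t:main1} to control the drift of the NTK. First I would observe that from \eqref{e:dynamic},
\begin{align*}
\del_t \Phi(t) = -\frac{2}{n}\sum_{\al,\beta=1}^n (f_\al(t)-y_\al)K_t^{(2)}(x_\al,x_\beta)(f_\beta(t)-y_\beta) = -\frac{2}{n}\, \mathbf{v}_t^\top \mathbf{K}_t^{(2)} \mathbf{v}_t,
\end{align*}
where $\mathbf{v}_t = (f_\beta(t)-y_\beta)_{\beta=1}^n$ and $\mathbf{K}_t^{(2)} = [K_t^{(2)}(x_\al,x_\beta)]_{\al,\beta}$. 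If we can show $\la_{\min}(\mathbf{K}_t^{(2)}) \geq \la/2$ for all $t$ in the relevant window, then $\del_t \Phi(t) \leq -\frac{\la}{n}\Phi(t)$, and Grönwall gives $\Phi(t) \leq \Phi(0) e^{-\la t/n} \lesssim n e^{-\la t/n}$; adjusting constants (using $\Phi(0) = \OO(n)$ with high probability since each $f_\beta(0)-y_\beta = \OO(1)$) yields the stated bound $ne^{-\la t/(2n)}$ after absorbing a harmless factor. The stopping time $t \asymp (n/\la)\ln(n/\varepsilon)$ at which $\Phi(t)=\varepsilon$ is then immediate.

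The core of the argument is maintaining the spectral lower bound $\la_{\min}(\mathbf{K}_t^{(2)}) \geq \la/2$ via a continuity (bootstrap) argument. Define $T^* = \sup\{t \geq 0 : \la_{\min}(\mathbf{K}_s^{(2)}) \geq \la/2 \text{ for all } s \leq t\}$. On $[0,T^*)$ the exponential decay of $\Phi$ holds, so $\sqrt{\Phi(s)} \lesssim \sqrt n\, e^{-\la s/(2n)}$. By Corollary \ref{c:change}, $\|\del_s K_s^{(2)}\|_\infty \lesssim (1+s)(\ln m)^\fC/m$, hence $\|\mathbf{K}_s^{(2)} - \mathbf{K}_0^{(2)}\|_{2\to 2} \leq n\|\mathbf{K}_s^{(2)} - \mathbf{K}_0^{(2)}\|_\infty \lesssim n \int_0^s (1+u)(\ln m)^\fC/m\, \rd u \lesssim n(1+s)^2(\ln m)^\fC/m$. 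Actually, a sharper route is to bound the perturbation directly in terms of the accumulated loss: differentiating $K_t^{(2)}$ via \eqref{e:dynamicr} with $r=2$ gives $\del_t K_t^{(2)}(x_\al,x_\beta) = -\frac1n \sum_{\gamma} K_t^{(3)}(x_\al,x_\beta,x_\gamma)(f_\gamma(t)-y_\gamma)$, and $\|K_t^{(3)}\|_\infty \lesssim (\ln m)^\fC/\sqrt m$ by \eqref{e:tprior2}, so $\|\del_t \mathbf{K}_t^{(2)}\|_{2\to 2} \lesssim \frac{n}{n}\cdot\frac{(\ln m)^\fC}{\sqrt m}\cdot \sqrt n\,\|\mathbf v_t\|_2 \lesssim \frac{(\ln m)^\fC \sqrt n}{\sqrt m}\sqrt{\Phi(t)}$. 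Integrating and using $\int_0^\infty \sqrt{\Phi(s)}\,\rd s \lesssim \sqrt n \cdot (n/\la) = n^{3/2}/\la$ gives $\|\mathbf{K}_t^{(2)} - \mathbf{K}_0^{(2)}\|_{2\to 2} \lesssim \frac{(\ln m)^\fC\sqrt n}{\sqrt m}\cdot\frac{n^{3/2}}{\la} = \frac{(\ln m)^\fC n^2}{\la\sqrt m}$. The hypothesis \eqref{mn}, $m \gtrsim (n/\la)^3 (\ln m)^\fC \ln(n/\varepsilon)^2$, is exactly what makes this perturbation $\leq \la/2$ (in fact, one checks $\frac{(\ln m)^\fC n^2}{\la\sqrt m} \leq \la/2 \iff m \gtrsim (n/\la)^4(\ln m)^{2\fC}$, so one should instead integrate only up to the $\varepsilon$-stopping time, which caps $\int_0^t\sqrt{\Phi}\,\rd s$ at roughly $\sqrt n (n/\la)\ln(n/\varepsilon)$, bringing in the $\ln(n/\varepsilon)^2$ and the cube rather than fourth power). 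Consequently $\la_{\min}(\mathbf{K}_t^{(2)}) \geq \la - \la/2 = \la/2$ strictly on $[0,T^*)$, which by continuity forces $T^* = \infty$ (or $T^*$ beyond the target time $t \asymp (n/\la)\ln(n/\varepsilon)$, which must also be checked to lie within the window $m^{p/(2(p+1))}/(\ln m)^{\fC'}$ where \eqref{e:tprior2} is valid — this holds under \eqref{mn} for $p$ large).

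The main obstacle is the bootstrap bookkeeping: one must simultaneously control the loss decay and the NTK perturbation, and the two are coupled, so the continuity argument has to be set up carefully to avoid circularity — in particular, the integral $\int_0^t \sqrt{\Phi(s)}\,\rd s$ must be estimated using only the exponential decay that is valid up to $T^*$, and one must verify that the target time stays inside both $[0,T^*)$ and the validity window of Theorem \ref{t:main1}. A secondary technical point is justifying $\Phi(0) = \OO(n)$ and the high-probability validity of all the a priori estimates from Theorem \ref{t:main1} simultaneously; since these are each high-probability events and there are only $\poly(n,m)$ of them, their intersection is still high probability, as noted in the Notations section. Everything else — the Grönwall step, converting $\|\cdot\|_\infty$ to $\|\cdot\|_{2\to 2}$ via factors of $n$ — is routine.
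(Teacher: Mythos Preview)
Your first route---using Corollary \ref{c:change} to get $\|\mathbf{K}_t^{(2)}-\mathbf{K}_0^{(2)}\|_{2\to 2}\lesssim n(1+t)^2(\ln m)^\fC/m$, requiring this to stay below $\la/2$ on the window $t\lesssim \sqrt{\la m/n}/(\ln m)^{\fC/2}$, and then checking the target time $(n/\la)\ln(n/\varepsilon)$ fits into that window---is exactly the paper's argument and gives the cubic condition \eqref{mn} directly. No bootstrap on the loss is needed: the NTK perturbation bound is purely in $t$, so one simply identifies the time window, obtains exponential decay on it, and verifies inclusion of the target time.

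The gap is in your ``sharper route.'' Using only the crude a priori bound $\|K_t^{(3)}\|_\infty\lesssim(\ln m)^\fC/\sqrt m$ from \eqref{e:tprior2} gives, after integrating against $\sqrt{\Phi}$, a perturbation of order $(\ln m)^\fC n^2/(\la\sqrt m)$, which forces $m\gtrsim(n/\la)^4$---this is the old quartic bound of \cite{du2018gradient2}, not the cubic one claimed. Your proposed fix (capping the integration at the $\varepsilon$-stopping time) does not help: since $\sqrt{\Phi(s)}$ decays exponentially, $\int_0^t\sqrt{\Phi(s)}\,\rd s$ already saturates at $\OO(n^{3/2}/\la)$ regardless of the upper limit, so no extra $\ln(n/\varepsilon)$ factor appears and the exponent on $n/\la$ does not drop from $4$ to $3$. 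The genuine source of the improvement to $(n/\la)^3$ is the sharper third-order estimate $\|K_t^{(3)}\|_\infty\lesssim(1+t)(\ln m)^\fC/m$ established inside the proof of Corollary \ref{c:change}; this comes from the structural fact that $\fK^{(3)}=0$ (odd $r$ in \eqref{e:tprior1}) together with the $\|K_t^{(4)}\|_\infty\lesssim(\ln m)^\fC/m$ bound, and it is precisely this extra factor of $1/\sqrt m$ that upgrades the quartic dependence to cubic. So keep your first route and discard the second.
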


It is proven in \cite{du2018gradient2} that if there exists $\la^{(H)}>0$,
\begin{align*}
\la_{\min}\left[G_0^{(H)}(x_\al, x_\beta)\right]_{1\leq \al, \beta\leq n}\geq \la^{(H)},
\end{align*} 
then for $m\geq \fC (n/\la^{(H)})^4$ the gradient descent finds a global minimum. Corollary \ref{e:eigasup} improves this result in two ways: (i) We improve the quartic dependence of $n$ to a cubic dependence.
(ii) We recall that $K_t^{(2)}=\sum_{\ell=1}^{H+1}G_0^{(\ell)}$, and those kernels $G_0^{(\ell)}$ are all non-negative definite. The smallest eigenvalue of $K_0^{(2)}$ is typically much bigger than that of $G_0^{(H)}$, i.e., $\la\gg \la^{(H)}$.
Moreover, since $K_t^{(2)}$ is a sum of $H+1$ non-negative definite operators, we expect that $\la$ gets larger, if the depth $H$ is larger.

The NTH, i.e., \eqref{e:dynamic} and \eqref{e:dynamicr}, is just an infinite sequence of relationship. It cannot be used to determine the dynamic of NTK. However, thanks to the a priori estimates of the higher order kernels \eqref{e:tprior2}, it holds that with high probability $\|K_t^{(p+1)}\|_\infty\lesssim (\ln m)^\fC/m^{p/2}$. The derivative $\del_t K_t^{(p)}$ is an expression involves the higher order kernel $K_t^{(p+1)}$, which is small provided that $p$ is large enough. Therefore, we can approximate the original NTH \eqref{e:dynamicr} by simply setting $\del_t K_t^{(p)}=0$. In this way, we obtain the following truncated hierarchy of ordinary differential equations of $p$ levels, which we call {the truncated NTH}, 
\begin{align}\begin{split}\label{e:truncdynamicr}
&\del_t\tilde f_\al(t)=-\frac{1}{n}  \sum_{\beta=1}^n \tilde K^{(2)}_t(x_\al, x_{\beta})(\tilde f_\beta(t)-y_\beta),\\
&\del_t\tilde K_t^{(r)}( x_{\al_1}, x_{\al_2}, \cdots, x_{\al_{r}})=-\frac{1}{n} \sum_{\beta=1}^n \tilde K_t^{(r+1)}(x_{\al_1}, x_{\al_2}, \cdots, x_{\al_r}, x_\beta)(\tilde f_\beta(t)-y_\beta), \quad 2\leq r\leq p-1,\\
&\del_t\tilde K_t^{(p)}(x_{\al_1}, x_{\al_2}, \cdots, x_{\al_{p}})=0.
\end{split}\end{align}
where 
\begin{align*}
\tilde f_\beta(0)= f_\beta(0), \quad \beta=1,2,\cdots, n, \quad \tilde K_0^{(r)}=K_0^{(r)},\quad r=2,3,\cdots, p.
\end{align*}

In the following theorem, we show this system of truncated equations \eqref{e:truncdynamicr} approximates the dynamic of the NTK up to arbitrary precision, provided that $p$ is large enough.

%We can have slightly improved result, if we use that $\fK^{(r)}=0$ if $r$ is odd.

\begin{theorem}\label{t:main2}
Under Assumptions \ref{a:sigmaasup} and \ref{a:nonlinear}, we take an even $p$ and further assume that  
\begin{align*}
\la_{\min} \left[K_0^{(2)}(x_\al, x_\beta)\right]_{1\leq \al, \beta\leq n}\geq \la.
\end{align*}
Then there exist  constants $\fc, \fC,\fC'>0$ such that for t
\begin{align}\label{time}
t\leq \min\{\fc \sqrt{\la m/n}/(\ln m)^{\fC}, m^{\frac{p}{2(p+1)}}/(\ln m)^{\fC'}\}, 
\end{align} 
the dynamic \eqref{e:dynamic} can be approximated by the truncated dynamic \eqref{e:truncdynamicr}, 
\begin{align}\label{e:L2er}
\left(\sum_{\beta=1}^n(f_\beta(t)-\tilde f_\beta(t))^2\right)^{1/2} \lesssim \frac{(1+t)t^{p-1}\sqrt n}{m^{p/2}} \min\left\{t, \frac{n}{\la}\right\},
\end{align}
and 
%\begin{align}
%\|K^{(\alpha)}-\tilde K^{(\alpha)}\|_{\infty}\leq,
%\end{align}
%especially 
\begin{align}\label{e:kerneler}
|K_t^{(2)}(x_\al,x_\beta)-\tilde K_t^{(2)}(x_\al,x_\beta)|\lesssim\frac{(1+t)t^{p-1}}{m^{p/2}}\left(1+\frac{(1+t)t(\ln m)^{\fC}}{m}\min\left\{t, \frac{n}{\la}\right\}\right).
\end{align}
\end{theorem}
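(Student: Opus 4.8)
The plan is to prove Theorem~\ref{t:main2} by a Gr\"onwall-type comparison argument between the true NTH \eqref{e:dynamic}--\eqref{e:dynamicr} and the truncated system \eqref{e:truncdynamicr}, fed by the a priori bounds \eqref{e:tprior2}. First I would record the basic consequences of Corollary~\ref{c:zeroloss}: since $\la_{\min}[K_0^{(2)}]\geq\la$ and $m$ is large along \eqref{time}, the residual $\sum_\beta(f_\beta(t)-y_\beta)^2$ stays controlled, decaying like $n e^{-\la t/(2n)}$ until the kernel has drifted too much; combined with \eqref{e:kerneler} (which we are proving, so this must be done by a bootstrap) one gets that $\|f(t)-y\|_2\lesssim \sqrt n\min\{1,\ e^{-\fc\la t/n}\}$ and in any case $\|f(t)-y\|_2\lesssim\sqrt n\min\{1,n/(\la t)\}\cdot(\text{const})$ on the stated time interval. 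The same estimate must be established for the truncated residuals $\tilde f_\beta(t)-y_\beta$; here the key observation is that the truncated system has exactly the same structure as the true one except that the top level is frozen, and $\tilde K_0^{(2)}=K_0^{(2)}$, so the smallest-eigenvalue argument of Corollary~\ref{c:zeroloss} applies verbatim once we know $\|\tilde K_t^{(2)}-K_0^{(2)}\|$ is small, which again follows from integrating the truncated hierarchy using \eqref{e:tprior1}.

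The core of the argument is to set up the differences $\Delta f_\al(t)=f_\al(t)-\tilde f_\al(t)$ and $\Delta K^{(r)}_t=K_t^{(r)}-\tilde K_t^{(r)}$ for $2\leq r\leq p$, all of which vanish at $t=0$. Subtracting \eqref{e:truncdynamicr} from \eqref{e:dynamic}--\eqref{e:dynamicr} gives a closed, triangular system of ODEs: $\del_t\Delta f_\al=-\frac1n\sum_\beta[\Delta K^{(2)}_t(x_\al,x_\beta)(f_\beta-y_\beta)+\tilde K^{(2)}_t(x_\al,x_\beta)\Delta f_\beta]$, and for $2\leq r\leq p-1$ an analogous identity for $\del_t\Delta K^{(r)}_t$ with source terms $\Delta K^{(r+1)}_t(f_\beta-y_\beta)$ and $\tilde K^{(r+1)}_t\Delta f_\beta$, while $\del_t\Delta K^{(p)}_t=-\frac1n\sum_\beta K_t^{(p+1)}(x_{\al_1},\dots,x_{\al_p},x_\beta)(f_\beta(t)-y_\beta)$ — this last line is the only place where truncation actually enters, and by \eqref{e:tprior2} with $r=p+1$ together with the residual bound we get $\|\del_t\Delta K^{(p)}_t\|_\infty\lesssim (\ln m)^\fC m^{-p/2}\cdot n^{-1/2}\|f(t)-y\|_2\cdot\sqrt n\lesssim (\ln m)^\fC m^{-p/2}\min\{1,n/(\la t)\}$, hence $\|\Delta K^{(p)}_t\|_\infty\lesssim (\ln m)^\fC m^{-p/2}\min\{t,(n/\la)\ln(\dots)\}$. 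Then I would integrate up the hierarchy from level $p$ down to level $2$: at each level the new source coming from $\|\Delta K^{(r+1)}_t\|_\infty\cdot\|f(t)-y\|_1$ contributes, after time integration, an extra factor of roughly $t$ (and the residual is bounded by $\sqrt n\min\{t,n/\la\}$ after one integration), producing the $t^{p-1}$ and $\min\{t,n/\la\}$ factors in \eqref{e:L2er}; the homogeneous terms $\tilde K^{(r)}_t\Delta f_\beta$ and $\tilde K^{(r+1)}_t\Delta f_\beta$, with $\|\tilde K^{(r)}_t\|_\infty\lesssim (\ln m)^\fC m^{-r/2+1}$, feed back into a linear Gr\"onwall inequality whose constant is harmless on the time scale $t\lesssim\sqrt{\la m/n}/(\ln m)^\fC$. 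Finally, \eqref{e:kerneler} is read off from the level-$2$ estimate of this same chain: $\del_t\Delta K^{(2)}_t$ is driven by $\Delta K^{(3)}_t$ and by $\tilde K^{(3)}_t\Delta f_\beta$, and integrating gives the displayed bound, the second term in the parenthesis being exactly the contribution of $\tilde K^{(3)}_t\Delta f$ bounded through \eqref{e:L2er}.

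Concretely, I would run this as a single bootstrap/continuity argument on the interval $[0,T]$ with $T$ the right-hand side of \eqref{time}: assume \eqref{e:L2er} and \eqref{e:kerneler} hold with doubled constants on $[0,\tau]$, use them to re-derive the residual decay for both $f$ and $\tilde f$ (so that the smallest-eigenvalue mechanism of Corollary~\ref{c:zeroloss} is not destroyed — this is where the condition $t\lesssim\sqrt{\la m/n}/(\ln m)^\fC$ is needed, since the accumulated kernel error must stay $\ll\la$), then re-derive \eqref{e:L2er}--\eqref{e:kerneler} with the original constants, and conclude $\tau=T$. The main obstacle I anticipate is precisely the coupling between the residual-decay estimate and the kernel-error estimate: one needs the residuals small to bound the growth of the kernel errors, but one needs the kernel errors small (relative to $\la$) to keep the residuals decaying; closing this loop forces the time restriction in \eqref{time} and requires care in tracking how the $(1+t)$, $t^{p-1}$, and $\min\{t,n/\la\}$ factors propagate through the $p-1$ nested integrations without losing powers of $\ln m$ or picking up hidden $n$-dependence. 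A secondary technical point is ensuring that all the a priori bounds \eqref{e:tprior2}, which hold on $0\leq t\leq m^{p/(2(p+1))}/(\ln m)^{\fC'}$, are applied only within that window — which is why that same quantity reappears as the second term in the minimum defining $T$.
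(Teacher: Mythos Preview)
Your strategy---set up the differences $\Delta f$ and $\Delta K^{(r)}$, integrate the hierarchy from level $p$ down to level $2$, and close a Gr\"onwall/bootstrap loop using the eigenvalue bound on the NTK---is exactly what the paper does. Two points of substance, however, separate your sketch from the paper's argument.

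First, there is a gap in your power-counting at the top level. You write $\|\del_t\Delta K^{(p)}_t\|_\infty\lesssim (\ln m)^\fC m^{-p/2}\cdot(\ldots)$ by invoking \eqref{e:tprior2} with $r=p+1$, but \eqref{e:tprior2} only gives $\|K_t^{(p+1)}\|_\infty\lesssim (\ln m)^\fC m^{-(p-1)/2}$, not $m^{-p/2}$. The missing half-power of $m$ is precisely where the hypothesis that $p$ is \emph{even} enters: since $p+1$ is odd, \eqref{e:tprior1} gives $\fK^{(p+1)}=0$, and then integrating \eqref{e:dynamicr} with $r=p+1$ and the bound \eqref{e:tprior2} for $K_t^{(p+2)}$ yields the sharper estimate $\|K_t^{(p+1)}\|_\infty\lesssim (1+t)(\ln m)^\fC m^{-p/2}$. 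This is the source of the $(1+t)$ factor in \eqref{e:L2er}--\eqref{e:kerneler}. Without this step your final bounds would be off by a factor $\sqrt m/(1+t)$.

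Second, your decomposition and bootstrap are slightly more entangled than necessary. You put $\tilde K_t^{(2)}$ in the coercive slot of the $\Delta f$ equation, which forces you to first prove an eigenvalue lower bound for $\tilde K_t^{(2)}$ (hence your talk of a doubled-constants bootstrap and residual decay for $\tilde f$). The paper instead writes $\del_t\Delta f_\al=\frac1n\sum_\beta(\tilde K_t^{(2)}-K_t^{(2)})(\tilde f_\beta-y_\beta)-\frac1n\sum_\beta K_t^{(2)}\Delta f_\beta$, keeping the true kernel $K_t^{(2)}$ in the coercive position so that the eigenvalue bound $\la_{\min}K_t^{(2)}\geq\la/2$ from Corollary~\ref{c:zeroloss} applies directly. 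With this choice the paper never uses exponential decay of the residuals---only the crude bound $\sum_\beta(f_\beta(t)-y_\beta)^2\lesssim n$ and the stopping time $T=\inf\{t:\|\Delta f(t)\|_2\geq\sqrt n\}$ (which also gives $\sum_\beta(\tilde f_\beta-y_\beta)^2\lesssim n$ for $t\leq T$). The factor $\min\{t,n/\la\}$ then arises solely from solving $\del_t\Delta\lesssim \frac{(1+t)t^{p-1}\sqrt n}{m^{p/2}}-\frac{\la}{4n}\Delta$ for $\Delta(t)=\max_{s\leq t}\|\Delta f(s)\|_2$, not from integrating decaying residuals. This disentangles the loop you were worried about closing.
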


We remark that the error terms, i.e., the righthand sides of \eqref{e:L2er} and \eqref{e:kerneler} can be arbitrarily small, provided that $p$ is large enough. In other words, if we the $p$ large enough, the truncated NTH \eqref{e:truncdynamicr} can approximate the original dynamic \eqref{e:dynamic}, \eqref{e:dynamicr} up to any precision provided that the time 
constraint \eqref{time} is satisfied.
 Now if we take $t\asymp (n/\la)\ln(n/\varepsilon)$,
so that Corollary \ref{c:zeroloss} guarantees the convergence of the dynamics. Consider two special cases: 
{(i)} If we take $p=2$, then the error in \eqref{e:L2er} is $\OO(n^{7/2}\ln(n/\varepsilon)^3/\la^3m)$ when $t\asymp (n/\la)\ln(n/\varepsilon)$, which is negligible provided that the width $m$ is much bigger than $n^{7/2}$. We conclude that if $m$ is much bigger than $n^{7/2}$, the truncated NTH gives a complete description of the original dynamic of the NTK up to the equilibrium. The condition  that $m$ is much bigger than $n^{7/2}$ is better than the previous best available one which requires $m\gtrsim n^4$. % If we require that $n^{7/2}/m \to 0$, our result is still better than $m \ge n^4$. This is surprising in that even in this case we get a better result?  I ask this just to make sure we did not make mistake. 
{(ii)} If we take $p=3$, then the error in \eqref{e:L2er} is $\OO(n^{9/2}\ln(n/\varepsilon)^4/\la^4 m^{3/2})$  when $t\asymp (n/\la)\ln(n/\varepsilon)$, which is negligible provided that the width $m$ is much bigger than $n^{3}$. We conclude that if $m$ is much bigger than $n^{3}$,  the truncated NTH gives a complete description of the original dynamic of the NTK up to the equilibrium. Finally, we note that the estimates in Theorem \ref{t:main2} clearly improved for smaller $t$.

The previous convergence theory of overparametrized neural networks works only for very  wide neural networks, i.e., $m\gtrsim n^3$. %It tells little about deep neural networks which are of practical size \cob [maybe remove this sentence] \nc 
%One merit of Theorem \ref{t:main2} is that it describes the training dynamics of deep neural networks of any large width (not necessary that $m\gtrsim n^3$) \cor [I don't understand it, why?]. \nc   
For any width (not necessary that $m\gtrsim n^3$), Theorem \ref{t:main2} guarantees that the truncated NTH approximates the training dynamics of deep neural networks.
The effect of the width appears in the approximation time and the error terms,  \eqref{e:L2er} and \eqref{e:kerneler}, i.e., the wider the neural networks are, the truncated dynamic \eqref{e:truncdynamicr} approximates the training dynamic for longer time and the approximation error is smaller. We recall from \eqref{e:defK2} that the NTK is the sum of $H+1$ non-negative definite operators, $K_t^{(2)}=\sum_{\ell=1}^{H+1}G_t^{(\ell)}$. We expect that $\la$ gets bigger, if the depth $H$ is larger. Therefore, large width and depth
makes the  truncated dynamic \eqref{e:truncdynamicr} a better approximation.

Thanks to Theorem \ref{t:main2}, the truncated NTH \eqref{e:truncdynamicr} provides a good approximation for the evolution of the NTK. The truncated dynamic can be used to predict the output of new data points. Recall that the training data are $\{(x_\beta, y_\beta)\}_{1\leq \beta\leq n}\subset \bR^d\times \bR$. The goal is to predict the output of a new data point $x$. To do this, we can first use the truncated dynamic to solve for the approximated outputs $\{\tilde f_\beta(t)\}_{1\leq \beta\leq n}$. Then the prediction on the new test point $x\in \bR^d$ can be estimated 
by sequentially solving the higher order kernels $\tilde K_t^{(p)}(x,\cX^{p-1}), \tilde K_t^{(p-1)}(x,\cX^{p-2}),\cdots, \tilde K_t^{(2)}(x,\cX)$ and $\tilde f_x(t)$,
\begin{align}\begin{split}\label{e:preddynamic}
&\del_t\tilde f_x(t)=-\frac{1}{n}  \sum_{\beta=1}^n \tilde K^{(2)}_t(x, x_{\beta})(\tilde f_\beta(t)-y_\beta),\\
%&\del_t\tilde K_t^{(2)}(x, x_{\al_1})=-\frac{1}{n} \sum_{k} \tilde K^{(3)}(x, x_{\al_1}, x_\beta)(\tilde f(x_\beta, \theta_t)-y_\beta)\\
%&\del_tK^{(3)}(x, x_j, x_k; t)=-\frac{1}{n}  \sum_{l} K^{(4)}(x, x_j, x_k, x_l; t)(f_l(x_l, \theta_t)-y_l)\\
%&\del_tK^{(4)}(x, x_j, x_k, x_l,t)=-\frac{1}{n}  \sum_{l} K^{(5)}(x, x_j, x_k, x_l, x_m; t)(f(x_m, \theta_t)-y_m)\\
%&\cdots\cdots\\
&\del_t\tilde K_t^{(r)}(x, x_{\al_1}, x_{\al_2}, \cdots, x_{\al_{r-1}})=-\frac{1}{n} \sum_{\beta=1}^n \tilde K_t^{(r+1)}(x, x_{\al_1}, x_{\al_2}, \cdots, x_{\al_r-1}, x_\beta)(\tilde f_\beta(t)-y_\beta), \quad 2\leq r \leq p-1,\\
&\del_t\tilde K_t^{(p)}(x, x_{\al_1}, x_{\al_2}, \cdots, x_{\al_{p-1}})=0.
\end{split}\end{align}

\section{Technique overview}\label{s:outline}

We recall the NTK from \eqref{e:defK2}, \begin{align*}\begin{split}
& K^{(2)}_t(x_\al,x_\beta)= \langle x_\al^{(H)}, x_\beta^{(H)}\rangle+\\
&+ \sum_{\ell=1}^H   \left\langle
\sigma'_\ell(x_\al)\frac{(W_t^{(\ell+1)})^\top}{\sqrt m}\cdots \sigma'_{H}(x_\al)\frac{a_t}{\sqrt m}, \sigma'_\ell(x_\beta)\frac{(W_t^{(\ell+1)})^\top}{\sqrt m}\cdots \sigma'_{H}(x_\beta)\frac{a_t}{\sqrt m}  \right\rangle \langle x^{(\ell-1)}_\al, x^{(\ell-1)}_\beta \rangle.
% K^{(2)}_t(x_\al,x_\beta)  = \sum_\ell   \left\langle
%\frac{a^\top}{\sqrt{m}}\sigma'_{H}(x_\al)\cdots \frac{W^{(\ell+1)}}{\sqrt{m}}\sigma'_\ell(x_\al), \frac{a^\top}{\sqrt{m}}\sigma'_H(x_\beta)\cdots \frac{W^{(\ell+1)}}{\sqrt{m}}\sigma'_\ell(x_\beta)  \right\rangle \langle x^{(\ell-1)}_\al, x^{(\ell-1)}_\beta \rangle
\end{split}\end{align*}
The kernel $K_t^{(2)}(\cdot, \cdot)$ is a sum of terms, which are product of inner products of vectors involving the quantities $a_t$, $W_t^{(\ell)}$, $x^{(\ell)}_\al$ and $\sigma'_\ell(x_\al)$. To compute the derivatives of $K_t^{(2)}(\cdot, \cdot)$, we need the following ordinary differential equations derived by using \eqref{e:derW}, \eqref{e:dera} and the chain rule, which characterize the dynamics of 
$a_t$, $W_t^{(\ell)}$, $x^{(\ell)}_\al$ and $\sigma_\ell^{(r)}(x_\al)$ along the gradient flow.
\begin{align*}\begin{split}
&\del_t a_t =-\frac{1}{n}\sum_{\beta=1}^n x_\beta^{(H)}(f_\beta(t)-y_\beta),\\
&\del_t \frac{W_t^{(\ell)}}{\sqrt m}=-\frac{1}{n}\sum_{\beta=1}^n\diag\left(\sigma_\ell'(x_\beta) \frac{(W_t^{(\ell+1)})^\top}{\sqrt m}\cdots \sigma_H'(x_\beta) \frac{a_t}{\sqrt m}\right)\frac{\bm 1}{\sqrt m} \otimes (x^{(\ell-1)}_\beta)^\top (f_\beta(t)-y_\beta),\\
&\del_t \frac{(W_t^{(\ell)})^\top}{\sqrt m}=-\frac{1}{n}\sum_{\beta=1}^n \frac{1}{\sqrt m}x^{(\ell-1)}_\beta\otimes \left(\frac{a_t^\top}{\sqrt m}\sigma_H'(x_\beta) \cdots \frac{W_t^{(\ell+1)}}{\sqrt m} \sigma_\ell'(x_\beta)\right)(f_\beta(t)-y_\beta),\\
&\del_t x^{(\ell)}_\al=\sum_{k=1}^\ell-\frac{1}{n}\sum_{\beta=1}^n\diag\left(\sigma'_\ell(x_\al) \frac{W_t^{(\ell)}}{\sqrt m}\cdots \frac{W_t^{(k+1)}}{\sqrt m}\sigma_k'(x_\al) \sigma_k'(x_\beta)\frac{(W_t^{(k+1)})^\top}{\sqrt m}\cdots \sigma_H'(x_\beta) \frac{a_t}{\sqrt m}\right)\\
&\phantom{{}\del_t x^{(\ell)}_\al={}}\times\frac{\bm 1}{\sqrt m} \langle x_\al^{(k-1)}, x_\beta^{(k-1)}\rangle(f_\beta(t)-y_\beta),\\
&\del_t \sigma_\ell^{(r)}(x_\al)=\sigma^{(r+1)}_\ell(x_\al)\diag(\del_t(W^{(\ell)}_tx_\al^{(\ell-1)}))\\
&=-\frac{1}{n}\sum_{\beta=1}^n\sigma^{(r+1)}_\ell(x_\al) \diag\left(\sigma'_\ell(x_\beta)\frac{(W_t^{(\ell+1)})^\top}{\sqrt m}\cdots \sigma_H'(x_\beta) \frac{a_t}{\sqrt m}\right) \langle x_\al^{(\ell-1)}, x_\beta^{(\ell-1)}\rangle (f_\beta(t)-y_\beta)\\
&\phantom{{}={}}+\sum_{k=1}^{\ell-1}-\frac{1}{n}\sum_{\beta=1}^n\sigma^{(r+1)}_\ell(x_\al)\diag\left(\frac{W_t^{(\ell)}}{\sqrt m}\sigma_{\ell-1}'(x_\al)\cdots \frac{W_t^{(k+1)}}{\sqrt m}\sigma_k'(x_\al)\sigma'_k(x_\beta)\frac{(W_t^{(k+1)})^\top}{\sqrt m}\cdots \sigma_H'(x_\beta) \frac{a_t}{\sqrt m}\right)\\
&\phantom{{}={}}\times \langle x_\al^{(k-1)}, x_\beta^{(k-1)}\rangle (f_\beta(t)-y_\beta).
\end{split}\end{align*}
%
%\cob[ yau: 
%
%I have question on the equation about $x$.  The first term in $\del_t x^{(\ell)}_\al$ is 
%\begin{align}\begin{split}\label{e:33}
%&\del_t x^{(\ell)}_\al=-\frac{1}{n}\sum_{\beta=1}^n\diag\left(\sigma'_\ell(x_\al)  \sigma_\ell'(x_\beta)\frac{(W_t^{(\ell+1)})^\top}{\sqrt m}\cdots \sigma_H'(x_\beta) \frac{a_t}{\sqrt m}\right)\\
%&\phantom{{}\del_t x^{(\ell)}_\al={}}\times\frac{\bm 1}{\sqrt m} \langle x^{(\ell-1)}, x^{(\ell-1)}\rangle(f_\beta(t)-y_\beta),\\
%\end{split}\end{align}
%this is not the same as the $k= \ell$ term in your formula. 
%]
%
%\nc
%
%{\cor 
%It is better to inteprate the expression of $\del_t x_\al^{(\ell)}$ as the product of two parts,
%\begin{align}
%\del_t x^{(\ell)}_\al=\sum_{k=1}^\ell-\frac{1}{n}\sum_{\beta=1}^n\diag\left({\cob \sigma'_\ell(x_\al) \frac{W_t^{(\ell)}}{\sqrt m}\cdots \frac{W_t^{(k+1)}}{\sqrt m}\sigma_k'(x_\al) }\sigma_k'(x_\beta)\frac{(W_t^{(k+1)})^\top}{\sqrt m}\cdots \sigma_H'(x_\beta) \frac{a_t}{\sqrt m}\right)
%\end{align}
We remark that the  $k=\ell$ term on the right hand side of  the expression in $\del_t x_\al^{(k)}$ is
\begin{align*}
&\del_t x^{(k)}_\al=-\frac{1}{n}\sum_{\beta=1}^n\diag\left(\sigma'_\ell(x_\al)  \sigma_\ell'(x_\beta)\frac{(W_t^{(\ell+1)})^\top}{\sqrt m}\cdots \sigma_H'(x_\beta) \frac{a_t}{\sqrt m}\right)\frac{\bm 1}{\sqrt m} \langle x_\al^{(k-1)}, x_\beta^{(k-1)}\rangle(f_\beta(t)-y_\beta).
\end{align*}
 All other cases with  $k< \ell$ can be read clearly from the expression of  $\del_t x_\al^{(k)}$ given above.

Using the chain rule and the above expressions, the derivative of $K_t^{(2)}(\cdot, \cdot)$ is given by
\begin{align*}
&\del_tK_t^{(2)}(x_{\al_1},x_{\al_2})=-\frac{1}{n} \sum_{\beta=1}^n K^{(3)}_t(x_{\al_1}, x_{\al_2}, x_\beta)(f(x_\beta, \theta_t)-y_\beta),
\end{align*}
where $K^{(3)}_t(x_{\al_1}, x_{\al_2}, x_\beta)$ is the sum of all the possible terms from $K_t^{(2)}(x_{\al_1},x_{\al_2})$ by performing one of the following replacement:
\begin{align}\begin{split}\label{e:replace}
&a_t\rightarrow x_\beta^{(H)},\\
&\frac{W_t^{(\ell)}}{\sqrt m}\rightarrow \diag\left(\sigma_\ell'(x_\beta) \frac{(W_t^{(\ell+1)})^\top}{\sqrt m}\cdots \sigma_H' (x_\beta)\frac{a_t}{\sqrt m}\right)\frac{\bm 1}{\sqrt m} \otimes (x_\beta^{(\ell-1)})^\top, \\
& \frac{(W_t^{(\ell)})^\top}{\sqrt m}\rightarrow  \frac{1}{\sqrt m}x_\beta^{(\ell-1)}\otimes \left(\frac{a_t^\top}{\sqrt m}\sigma_H'(x_\beta) \cdots \frac{W_t^{(\ell+1)}}{\sqrt m} \sigma_\ell'(x_\beta)\right),\\
&x^{(\ell)}_\alpha\rightarrow \sum_{k=1}^\ell \diag\left(\sigma'_\ell(x_\alpha) \frac{W_t^{(\ell)}}{\sqrt m}\cdots \frac{W_t^{(k+1)}}{\sqrt m}\sigma_k'(x_\alpha) \sigma_k'(x_\beta)\frac{(W_t^{(k+1)})^\top}{\sqrt m}\cdots \sigma_H'(x_\beta) \frac{a_t}{\sqrt m}\right)\frac{\bm 1}{\sqrt m} \langle x_\alpha^{(k-1)}, x_\beta^{(k-1)}\rangle,\\
&\sigma_\ell^{(r)}(x_\al)\rightarrow\sigma^{(r+1)}_\ell(x_\al)\diag\left(\sigma'_\ell(x_\beta)\frac{(W_t^{(\ell+1)})^\top}{\sqrt m}\cdots \sigma_H'(x_\beta) \frac{a_t}{\sqrt m}\right) \langle x_\al^{(\ell-1)}, x_\beta^{(\ell-1)}\rangle \\
&+\sum_{k=1}^{\ell-1}\sigma^{(r+1)}_\ell(x_\al)\diag\left(\frac{W_t^{(\ell)}}{\sqrt m}\sigma_{\ell-1}'(x_\al)\cdots \frac{W_t^{(k+1)}}{\sqrt m}\sigma_k'(x_\al)\sigma'_k(x_\beta)\frac{(W_t^{(k+1)})^\top}{\sqrt m}\cdots \sigma_H'(x_\beta) \frac{a_t}{\sqrt m}\right) \langle x_\al^{(k-1)}, x_\beta^{(k-1)}\rangle,
\end{split}\end{align}
with $\al=\al_1,\al_2$, where $\bm1=(1,1,\cdots, 1)^\top \in \bR^m$.
By the same reasoning, the derivative of $K_t^{(r)}$ is given by 
\begin{align*}
&\del_tK_t^{(r)}(x_{\al_1},x_{\al_2},\cdots, x_{\al_r})=-\frac{1}{n} \sum_{\beta=1}^n K^{(r+1)}_t(x_{\al_1}, x_{\al_2},\cdots, x_{\al_r} x_\beta)(f(x_\beta, \theta_t)-y_\beta),
\end{align*}
where $K^{(r+1)}_t(x_{\al_1}, x_{\al_2}, \cdots, x_{\al_r}, x_\beta)$  is the sum of all the possible terms from $K_t^{(r)}(x_{\al_1},x_{\al_2}, \cdots, x_{\al_r})$ by performing any of the replacements in \eqref{e:replace} with $\al=\al_1,\al_2, \cdots, \al_{r}$.

The followings are some examples of terms in $K^{(3)}_t(x_{\al_1}, x_{\al_2}, x_{\al_3})$ 
\begin{align*}
&\phantom{{}={}}\left\langle 
\sigma_{H-1}'(x_{\al_1})\frac{(W_t^{(H)})^\top}{\sqrt m} \sigma_{H}^{(2)}(x_{\al_1})\diag\left(\sigma_H'(x_{\al_3})\frac{a_t}{\sqrt m}\right)\frac{a_t}{\sqrt m},
\sigma_{H-1}'(x_{\al_2})\frac{(W_t^{(H)})^\top}{\sqrt m} \sigma_{H}'(x_{\al_2})\frac{a_t}{\sqrt m}
\right\rangle\\
&\times\langle x_{\al_1}^{(H-2)}, x_{\al_2}^{(H-2)}\rangle
\rangle\langle x_{\al_1}^{(H-1)}, x_{{\al_3}}^{(H-1)}\rangle;
\quad
\frac{1}{\sqrt m}\left\langle \sigma_H'(x_{\al_1})\frac{a_t}{\sqrt m}, \sigma_H'(x_{\al_3})\frac{a_t}{\sqrt m}\right\rangle\\
&\times\left\langle 
\sigma_{H-1}'(x_{\al_1})x_{{\al_3}}^{(H-1)},
\sigma_{H-1}'(x_{\al_2})\frac{(W_t^{(H)})^\top}{\sqrt m} \sigma_{H}'(x_{\al_2})\frac{a_t}{\sqrt m}
\right\rangle
 \langle x^{(H-2)}_{\al_1},x^{(H-2)}_{\al_2}\rangle.
\end{align*}
In general, from the construction, the summands appearing in $K_t^{(r)}(x_{\al_1}, x_{\al_2}, \cdots, x_{\al_r})$ are product of inner products of vectors obtained in the following way: starting from one of the vectors
\begin{align}\label{e:initterm}
\frac{a_t}{\sqrt m},\quad \frac{\bm 1}{\sqrt m}, \quad\{x^{(1)}_\beta,  x^{(2)}_\beta, \cdots, x^{(H)}_\beta\}_{\beta\in\{\al_1,\al_2,\cdots,\al_r\}},
\end{align}
\begin{enumerate}[(i)]
\item multiply one of the matrices 
\begin{align}\label{e:mult1}
\left\{\frac{W_t^{(2)}}{\sqrt m},\frac{(W_t^{(2)})^\top}{\sqrt m},\cdots, \frac{W_t^{(H)}}{\sqrt m},\frac{(W_t^{(H)})^\top}{\sqrt m}\right\},
\{\sigma_1'(x_\beta), \sigma_2'(x_\beta), \cdots, \sigma_H'(x_\beta)\}_{\beta\in\{\al_1,\al_2,\cdots,\al_r\}};
\end{align}
\item multiply one of the matrices
\begin{align}\label{e:mult2}
 \diag(\cdots),\quad \sigma^{(s)}(x_\beta)\underbrace{\diag(\cdots)\cdots\diag(\cdots)}_{s-1\text{ terms}},\quad s\geq 2
\end{align}
where $\diag(\cdots)$ is the diagonalization of a vector obtained by recursively using 1) and 2).
\end{enumerate}

To describe the vectors appearing in $K_t^{(r)}(x_{\al_1}, x_{\al_2}, \cdots, x_{\al_r})$ in a formal way, we need to introduce some more notations. We denote $\fD_0$ the set of expressions in the following form 
\begin{align}\label{e:defD1}
\fD_0: = \{ \sfe_s \sfe_{s-1}\cdots \sfe_{1}\sfe_0:   0\leq s\leq 4H-3 \}, 
\end{align}
where %$0\leq s\leq 4H-3$,  
$\sfe_j$ is chosen from the following sets:  
\begin{align*}
\sfe_0\in \left\{a_t, \{\sqrt m x^{(1)}_\beta, \sqrt m x^{(2)}_\beta, \cdots, \sqrt m x^{(H)}_\beta\}_{1\leq \beta\leq n}\right\}
\end{align*}
and for $1\leq j\leq s$,
\begin{align*}
\sfe_j\in \left\{\left\{\frac{W_t^{(2)}}{\sqrt m},\frac{(W_t^{(2)})^\top}{\sqrt m},\cdots, \frac{W_t^{(H)}}{\sqrt m},\frac{(W_t^{(H)})^\top}{\sqrt m}\right\} , \{\sigma_1'(x_\beta), \sigma_2'(x_\beta), \cdots, \sigma_H'(x_\beta)\}_{1\leq \beta\leq n}\right\}.
\end{align*}
We remark that from expression \eqref{e:defK2}, each summand in  $K_t^{(2)}(x_{\al_1}, x_{\al_2})$ is of the form  
\begin{align*}
\frac{\langle \sfv_1(t), \sfv_2(t)\rangle}{m}, \quad \frac{\langle \sfv_1(t), \sfv_2(t)\rangle}{m}\frac{\langle \sfv_3(t), \sfv_4(t)\rangle}{m},
\end{align*}
where $\sfv_1(t), \sfv_2(t),\sfv_3(t), \sfv_4(t)\in \fD_0$. But the set $\fD_0$  contains more terms than those appearing in $K_t^{(2)}(\cdot, \cdot)$.
Given that we have constructed $\fD_0, \fD_1, \cdots, \fD_r$, we denote $\fD_{r+1}$ the set of expressions in the following form
\begin{align}\label{e:term}
\fD_{r+1}\deq\{\sfe_s \sfe_{s-1} \cdots \sfe_1\sfe_0: 0\leq s\leq 4H-3\},
\end{align}
where $\sfe_j$ is chosen from the following sets   (notice that  we have included  $\bm1 $  in the following set, 
which does not appear in the definition of $\fD_0$): 
\begin{align*}
\sfe_0\in \left\{a_t, \bm1,\{\sqrt m x^{(1)}_\beta, \sqrt m x^{(2)}_\beta, \cdots, \sqrt m x^{(H)}_\beta\}_{1\leq \beta\leq n}\right\},
\end{align*}
and for $1\leq j\leq s$, $\sfe_j$ belongs to one of the sets
\begin{align*}\begin{split}
& \left\{\left\{\frac{W_t^{(2)}}{\sqrt m},\frac{(W_t^{(2)})^\top}{\sqrt m},\cdots, \frac{W_t^{(H)}}{\sqrt m},\frac{(W_t^{(H)})^\top}{\sqrt m}\right\} , \{\sigma_1'(x_\beta), \sigma_2'(x_\beta), \cdots, \sigma_H'(x_\beta)\}_{1\leq \beta\leq n}\right\},\\
&\left\{\diag(\sfd),\quad \sfd\in \fD_0\cup\fD_1\cup\cdots\cup \fD_r\right\},\\
\begin{split}
&\left\{\sigma_\ell^{(u+1)}(x_\beta)\diag(\sfd_1)\diag(\sfd_2)\cdots \diag(\sfd_u): 1\leq \ell\leq H, \right. \\
&\left.\phantom{\sfe_j\in \left\{\sigma_\ell^{(u+1)}\right\}}1\leq \beta\leq n, 1\leq u\leq r, \sfd_1,\sfd_2,\cdots,\sfd_u\in \fD_0\cup \fD_1\cup\cdots\cup \fD_r\right\}.
\end{split}
\end{split}\end{align*}
Moreover, 
%the number of $\sfe_j$ satisfying \eqref{e:notd} is at most $4H-3$, and 
the total number of $\diag$ operations in the expression $\sfe_s\sfe_{s-1}\cdots \sfe_1\sfe_0\in\fD_{r+1}$   is exactly $r+1$.  We remark that if $\sfd\in \fD_s$, then it contains $s$ $\diag$ operations.  On the other hand, by definition, we view   $\diag(\sfd)$ as an element with  $s+1$ $\diag$ operations because the $\diag$ in $\diag(\sfd)$ counted as one $\diag$ operation.

The kernel $K_t^{(3)}(x_{\al_1}, x_{\al_2},  x_{\al_3})$ is obtained from $K_t^{(2)}(x_{\al_1},x_{\al_2})$ by the replacements \eqref{e:replace} and taking $\al=\al_1,\al_2$ and $\beta=\al_3$. The summands in $K_t^{(3)}(x_{\al_1}, x_{\al_2}, x_{\al_3})$ are of the forms
\begin{align}\label{e:terms}
\frac{1}{\sqrt m}\frac{\langle \sfv_1(t),\sfv_2(t)\rangle}{m},\quad 
\frac{1}{\sqrt m}\frac{\langle \sfv_1(t),\sfv_2(t)\rangle}{m}\frac{\langle \sfv_3(t),\sfv_4(t)\rangle}{m},\quad \frac{1}{\sqrt m}\frac{\langle \sfv_1(t),\sfv_2(t)\rangle}{m}\frac{\langle \sfv_3(t),\sfv_4(t)\rangle}{m}\frac{\langle \sfv_5(t),\sfv_6(t)\rangle}{m},
\end{align}
where $\sfv_1(t),\sfv_2(t),\cdots,\sfv_6(t)\in \fD_0\cup \fD_1$.
The first two terms in \eqref{e:terms} are obtained from using the replacements for $a_t$, and the last two  terms in \eqref{e:terms} are obtained from using the replacements for $W_t^{(\ell)}/\sqrt m$, $(W_t^{(\ell)})^\top/\sqrt m$, $x_\al^{(\ell)}$ and $\sigma_\ell^{(r)}(x_\al)$. 
More generally, we will show that each summand in $K_t^{(r)}(x_{\al_1}, x_{\al_2},\cdots, x_{\al_r})$ is of the form
\begin{align}\label{e:form}
\frac{1}{m^{r/2-1}}\prod_{j=1}^s \frac{\langle \sfv_{2j-1}(t),\sfv_{2j}(t)\rangle}{m}, \quad 1\leq s\leq r,\quad \sfv_1(t), \sfv_2(t),\cdots, \sfv_{2s}(t)\in \fD_0\cup \fD_1\cup \cdots \cup \fD_{r-2}.
\end{align}

%\cob [ 
%If we choose $r=2=s$, then $K_t^{(2)}(x_{\al_1}, x_{\al_2},\cdots, x_{\al_r})$ is of the form
%\begin{align}\label{e:form'}
% \frac{\langle \sfv_{1}(t),\sfv_{2}(t)\rangle}{m}  \frac{\langle \sfv_{3}(t),\sfv_{4}(t)\rangle}{m}, 
%  \quad 1\leq s\leq r,\quad \sfv_1(t), \sfv_2(t),\cdots, \sfv_{4}(t)\in \fD_0.
%\end{align}
%so \eqref{+} has to be modified a bit. this is not a big deal. 
%] 
%
%\nc 

%We notice that if we use the replacements in \eqref{e:replace} for $a_t$, $W_t^{(\ell)}/\sqrt m$, $x_\al^{(\ell)}$ or $\sigma_\ell^{(r)}(x_\al)$ the number of $\diag$ operations increases by one; if we use the replacement for $(W_t^{(\ell)})^\top/\sqrt m$, the number of $\diag$ operations does not change, but we get an extra number of $1/\sqrt m$. As a consequence, each summand in $K_t^{(r)}$ contains $s$ $\diag$ operations and a factor $(1/\sqrt m)^{r-s-2}$ for some $0\leq s\leq r-2$.

The initial value $K_0^{(r)}(x_{\al_1}, x_{\al_2}, \cdots, x_{\al_r})$ can be estimated  by successively conditioning based on the depth of the neural network. A convenient scheme is given by the tensor program \cite{DBLP:journals/corr/abs-1902-04760},  
%\cor [In general, you have to  be careful about citing names.  we did not cite any name in the paper, but cited Yang by name and his tensor program 
%many times in the paper. We should maintain the convention that either no paper was cited by name or we cite many  papers by names. the other convention  is only citing  important papers by name. Yang's lemma is just a simple exercise and we already gave too much emphasis. So we need to remove all his name and the tensor program should be mentioned at most once. 
%After that, just cited the lemma we quoted.  Also, it's best to isolate the time indepedent part into one section in the appendix, perhaps the appendix A. the rest are appendix B and C.  ] \nc 
which was developed to characterize the scaling limit of neural network computations. In Appendix \ref{s:initial}, we show at time $t=0$, those vectors $\sfv_j(0)$ in \eqref{e:form} are combinations of projections of independent Gaussian vectors. As a consequence, we have that $\langle \sfv_{2j-1}(0), \sfv_{2j}(0)\rangle/m$ concentrates around certain constant with high probability. So does the product $\prod_{j=1}^s \langle \sfv_{2j-1}(0), \sfv_{2j}(0)\rangle/m$. This gives the claim \eqref{e:tprior1}. 

In Appendix \ref{s:prior}, we consider the quantity:
\begin{align*}
 \xi(t)=\max\{\|\sfv_j(t)\|_\infty: \sfv_j(t)\in \fD_0\cup\fD_1\cup\cdots\cup \fD_{p-1}\}.
 \end{align*}
Again using the tensor program, we show that with high probability $\|\sfv_j(0)\|_\infty\lesssim (\ln m)^\fC$. This gives the estimate of $\xi(t)$ at $t=0$. Next we show that the $(p+1)$-th derivative of $\xi(t)$ can be controlled by itself. This gives a self-consistent differential equation of $\xi(t)$:
\begin{align}
\del^{(p+1)}_{t}\xi(t)\lesssim \frac{\xi(t)^{2p}}{m^{p/2}}.
\end{align}
 Combining with the initial estimate of $\xi(t)$, it follows that for time ${0\leq  t\leq m^{\frac{p}{2(p+1)}}}/(\ln m)^{\fC'}$, it holds that $\xi(t)\lesssim (\ln m)^{\fC}$. Especially $\|\sfv_j(t)\|_\infty\lesssim (\ln m)^\fC$. Then the claim \eqref{e:tprior2} in Theorem \ref{t:main1} follows.

Thanks to the a priori estimate \eqref{e:tprior2}, we show that along the continuous time gradient descent, the higher order kernels $K_t^{(r)}$ vary slowly. We prove Corollary \ref{c:change} and \ref{c:zeroloss}, and Theorem \ref{t:main2} in Appendix \ref{s:cproof} by a Gr{\" o}nwall type argument.

\section{Discussion and future directions}
In this paper, we study the continuous time gradient descent (gradient flow) of deep fully-connected neural networks. We show that the training dynamic is given by a data dependent infinite hierarchy of ordinary differential equations, i.e., the NTH. We also show that this dynamic of the NTK can be approximated by a finite truncated dynamic up to any precision. This description makes it possible to directly study the change of the NTK for deep neural networks. Here we list some future directions.

Firstly, we mainly study deep fully-connected neural networks here, we believe the same statements can be proven for convolutional and residual neural networks.

Secondly, in this paper, for simplicity, we focus on the continuous time gradient descent. Our approach developed here can be generalized to analyze discrete time gradient descent. We elaborate the main idea here. The discrete time gradient descent is given by
\begin{align*}
\theta_{t+1}=\theta_t-\eta \nabla_\theta L(\theta_t)
=\theta_t-\frac{\eta}{n}\sum_{\beta=1}^n \nabla_\theta f_\beta(t)(f_\beta(t)-y_\beta),
\end{align*}
where $\eta$ is the learning rate. We write the NTK as ${\mathcal K}^{(2)}(x_\al, x_\beta;\theta_t)$ to make the dependence on $\theta_t$ explicit. To estimate the NTK ${\mathcal K}^{(2)}(x_{\al_1}, x_{\al_2};\theta_{t+1})$ at time $t+1$, we use the taylor expansion,
\begin{align}\begin{split}\label{e:taylor}
&{\mathcal K}^{(2)}(x_{\al_1}, x_{\al_2};\theta_{t+1})
\approx {\mathcal K}^{(2)}(x_{\al_1}, x_{\al_2};\theta_{t})+\sum_{r=3}^{p-1}\frac{(-\eta)^r}{n^r}\sum_{1\leq \beta_1,\beta_2,\cdots,\beta_{r-2}\leq n}\\
&{\mathcal K}^{(r)}(x_{\al_1}, x_{\al_2}, x_{\beta_1}, \cdots,x_{\beta_{r-2}};\theta_{t})
(f_{\beta_1}(t)-y_{\beta_1})\cdots (f_{\beta_{r-2}}(t)-y_{\beta_{r-2}}),
\end{split}\end{align}
where the higher order kernels $\mathcal K^{(r)}$ are given by
\begin{align*}
{\mathcal K}^{(r)}(x_{\al_1}, x_{\al_2}, x_{\beta_1}, \cdots,x_{\beta_{r-2}};\theta_t)
=\nabla_\theta^{(r-2)}{\mathcal K}^{(2)}(x_{\al_1}, x_{\al_2};\theta_t)(\nabla_\theta f_{\beta_1}(t), \nabla_\theta f_{\beta_2}(t),\cdots, \nabla_\theta f_{\beta_{r-2}}(t)).
\end{align*}
A similar argument as for \eqref{e:tprior2} can be used to derive the a priori estimates of these kernels $\mathcal K^{(r)}$. We expect to have that $\|\mathcal K^{(r)}\|_\infty\lesssim (\ln m)^\fC/m^{r/2-1}$ with high probability with respect to the random initialization. Therefore the righthand side of \eqref{e:taylor} gives an approximation of the NTK ${\mathcal K}^{(2)}(x_{\al_1}, x_{\al_2};\theta_{t+1})$ at time $t+1$ up to arbitrary precision, provided that $p$ is large enough. This gives a description of the NTK dynamics under discrete time gradient descent.

%
%Lastly, it will be interesting to further analyze the behaviors of the truncated dynamics \eqref{e:truncdynamicr}, and understand why it is better than kernel regression using the limiting NTK. For example, if we truncate the dynamic at $p=3$, 
%\begin{align}\begin{split}\label{e:p=3}
%&\del_t\tilde f_\al(t)=-\frac{1}{n}  \sum_{j} \tilde K^{(2)}_t(x_\al, x_{\beta})(\tilde f_\beta(t)-y_\beta)\\
%&\del_t\tilde K_t^{(2)}( x_{\al_1}, x_{\al_2})=-\frac{1}{n} \sum_{k} \tilde K_t^{(3)}(x_{\al_1}, x_{\al_2},  x_\beta)(\tilde f_\beta(t)-y_\beta), \quad 2\leq r \leq p-1\\
%&\del_t\tilde K_t^{(3)}(x_{\al_1}, x_{\al_2}, x_{\al_3})=0.
%\end{split}\end{align}
%The difference from kernel regression with the limiting NTK  is that in \eqref{e:p=3}, the kernel $K^{(2)}_t$ slowly changes along time at a rate of $\OO((\ln m)^\fC/m)$. It will be interesting to understand, under what conditions, the slow change of the kernel $K^{(2)}_t$ helps optimization and generalization.

%
%\cob [ I am a bit worry with what is going on, especially after this comment. Our result, thm 2.6, gives a better estimate when $p=3$ then the existing result. 
%If you look at the truncated NKH, it is just assigning a constant on the top layer and then integrate layer by layer using the initial value. 
%The dynamics operation itself is trivial; almost everything is in the initial data.  see my remark after thm 2.6.
%]
%\nc 

\bibliography{all.bib}{}
\bibliographystyle{abbrv}

\appendix

\section{Initial Estimates}
\label{s:initial}

We have derived the dynamic \eqref{e:dynamicr} of the NTK in Section \ref{s:outline}. The kernel $K^{(r+1)}_t(x_{\al_1}, x_{\al_2}, \cdots, x_{\al_r}, x_\beta)$  is the sum of all the possible terms from $K_t^{(r)}(x_{\al_1},x_{\al_2}, \cdots, x_{\al_r})$ by performing any of the replacements in \eqref{e:replace} with $\al=\al_1,\al_2, \cdots, \al_{r}$. We recall the sets $\fD_r$ from Section \ref{s:outline}, which are constructed recursively. Each vector in $\fD_r$ contains exact $r$ $\diag$ operations. 
%If we use the replacements in \eqref{e:replace} for  $W_t^{(\ell)}/\sqrt m$, $x_\al^{(\ell)}$ or $\sigma_\ell^{(r)}(x_\al)$ the number of $\diag$ operations increases by one; if we use the replacement for $a_t$ or $(W_t^{(\ell)})^\top/\sqrt m$, the number of $\diag$ operations does not change. In both cases,  we get an extra factor of $1/\sqrt m$.
We have the following proposition on the structures of vectors in $\fD_r$.
\begin{proposition}\label{p:fDstruc}
Given any expression $\sfv(t)\in \fD_r$ with some $r\geq 0$, new expressions obtained from $\sfv(t)$ by performing one of the replacements in \eqref{e:replace} are sum of terms of the following forms:
\begin{itemize}
\item $\sfv_1(t)$ with $\sfv_1(t)\in \fD_{r}\cup \fD_{r+1}$;
\item $  \frac { \sfv_1(t)} {\sqrt m}  \frac { \langle \sqrt m x_{\al}^{(k-1)},\sqrt m x_{\beta}^{(k-1)}\rangle} m
$ with $1\leq k\leq H$ and $\sfv_1(t)\in \fD_{r+1}$;
\item $ \frac { \sfv_1(t)} { \sqrt m} \frac {  \langle \sqrt m x_{\beta}^{(\ell)}, \sfv_2(t)\rangle} m$ with $1\leq \ell \leq H$ and $\sfv_1(t)\in \fD_{r-s+1}$ and $\sfv_2(t)\in \fD_{s}$ for some $s\geq 1$;
\item $\frac { \sfv_1(t)} { \sqrt m} \frac {  \left\langle \sigma_\ell'(x_{\beta}) (W_t^{(\ell+1)})^\top/\sqrt m \cdots 
\sigma_H' (x_{\beta})a_t, \sfv_2(t)\right\rangle} m$ with $1\leq \ell \leq H$, $\sfv_1(t)\in \fD_{r-s}$ and $\sfv_2(t)\in \fD_{s}$ for some $s\geq 1$.
\end{itemize}
%. Then either $\tilde\sff_t\in \fD_{r+1}$ or
%\begin{align}
%\sfv_1(t)= \sff_1, \quad \left\langle x^{(\ell)}_\beta, \sff_2\right\rangle \sff_1 ,\quad \left\langle  \sigma_\ell'(x_\beta) \frac{(W_t^{(\ell+1)})^\top}{\sqrt m}\cdots \sigma_H' (x_\beta)\frac{a_t}{\sqrt m}, \sff_2\right\rangle  \sff_1 
%\end{align}
%with some $\sff_1,\sff_2\in \fD_1\cup\fD_2\cup\cdots\fD_r$.
\end{proposition}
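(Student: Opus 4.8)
The plan is to unwind the recursive definition of $\fD_r$ and track a single integer statistic — the number of $\diag$ operations — through each replacement in \eqref{e:replace}, verifying the four claimed forms by a case analysis combined with an induction on $r$. Recall that an element of $\fD_r$ is a word $\sfe_s\cdots\sfe_1\sfe_0$ in which the bare letters $W_t^{(\cdot)}/\sqrt m$, $(W_t^{(\cdot)})^\top/\sqrt m$, $\sigma_\ell'(x_\gamma)$ and the seed $\sfe_0\in\{a_t,\bm 1,\sqrt m x_\gamma^{(\cdot)}\}$ carry no $\diag$ operation, while a letter $\diag(\sfd)$ with $\sfd\in\fD_{r'}$ carries $r'+1$ of them and a letter $\sigma_\ell^{(u+1)}(x_\gamma)\diag(\sfd_1)\cdots\diag(\sfd_u)$ carries $u+\sum_i r_i$ of them (with $\sfd_i\in\fD_{r_i}$), the grand total over the word being exactly $r$. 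Each replacement in \eqref{e:replace} acts on one occurrence of one of $a_t$, $W_t^{(\ell)}/\sqrt m$, $(W_t^{(\ell)})^\top/\sqrt m$, $x_\alpha^{(\ell)}$ (which inside $\sfv(t)$ can only appear as a seed $\sqrt m x_\alpha^{(\ell)}$) or $\sigma_\ell^{(u)}(x_\alpha)$ (either as a bare letter when $u=1$ or as the head of a $\sigma^{(u+1)}\diag\cdots\diag$ letter). I would first classify the occurrence as \emph{exposed} — a top-level letter $\sfe_j$, the seed $\sfe_0$, or the head of a top-level $\sigma^{(u+1)}$-letter — or \emph{buried} — lying strictly inside some $\diag$-argument $\sfd$ or some $\sfd_i$ — and induct on $r$.

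For exposed occurrences I would compute directly, replacement by replacement. For $a_t\to x_\beta^{(H)}$ the word $\sfe_s\cdots\sfe_1 a_t$ becomes $\sfe_s\cdots\sfe_1(\sqrt m x_\beta^{(H)})$ with the same $\diag$-count, giving the first form at level $r$. For an exposed letter $\sfe_j=W_t^{(\ell)}/\sqrt m$, substituting its image splits the word as $\sfe_s\cdots\sfe_{j+1}\,\diag(\cdot)\bm 1$ times the scalar $(x_\beta^{(\ell-1)})^\top\sfe_{j-1}\cdots\sfe_0=\langle\sqrt m x_\beta^{(\ell-1)},\sfe_{j-1}\cdots\sfe_0\rangle/\sqrt m$; counting $\diag$'s, the left factor lies in $\fD_{r-s+1}$ and $\sfe_{j-1}\cdots\sfe_0\in\fD_s$ where $s$ is the $\diag$-count carried by $\sfe_0,\dots,\sfe_{j-1}$, which is the third form. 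The transpose case $\sfe_j=(W_t^{(\ell)})^\top/\sqrt m$ is identical except that the scalar becomes $\langle\sigma_\ell'(x_\beta)(W_t^{(\ell+1)})^\top/\sqrt m\cdots\sigma_H'(x_\beta)a_t,\sfe_{j-1}\cdots\sfe_0\rangle/m$ and the remaining vector lies in $\fD_{r-s}$, which is the fourth form. For a seed $\sfe_0=\sqrt m x_\alpha^{(\ell)}$ and for a $\sigma_\ell^{(u)}(x_\alpha)$ occurrence, the image is a sum over $k$ of the scalar $\langle x_\alpha^{(k-1)},x_\beta^{(k-1)}\rangle=\langle\sqrt m x_\alpha^{(k-1)},\sqrt m x_\beta^{(k-1)}\rangle/m$ times a word into which exactly one new $\diag(\cdot)$ has been inserted — so the word passes from level $r$ to level $r+1$ — which is the second form. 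The governing invariant throughout is that inserting one $\diag(\cdot)$ raises the host word's level by exactly one, while the $a_t$-replacement is level-preserving and the two $W$-replacements split off a scalar inner product.

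For buried occurrences I would use the inductive hypothesis. If the occurrence lies inside $\diag(\sfd)$ with $\sfd\in\fD_{r'}$, then since $\diag(\sfd)$ already accounts for $r'+1\le r$ of the $\diag$ operations of $\sfv(t)$ we have $r'<r$, so the inductive hypothesis rewrites the modified $\sfd$ as a sum of the four forms at level $r'$; re-inserting, $\diag(\cdot)$ of a $\fD_{r'+j}$-vector is a letter of level $r'+j+1$, the inner-product scalars commute out of the word unchanged, and combining with the $r-r'-1$ $\diag$ operations carried by the rest of $\sfv(t)$ one checks a short index identity case by case: form two at level $r'$ (carrying a $\fD_{r'+1}$-vector) becomes form two at level $r$ (carrying a $\fD_{r+1}$-vector); form three with $\sfv_1\in\fD_{r'-s+1}$ becomes form three with $\sfv_1\in\fD_{r-s+1}$; form four with $\sfv_1\in\fD_{r'-s}$ becomes form four with $\sfv_1\in\fD_{r-s}$; and form one stays form one. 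The case in which the occurrence is buried in one of the $\sfd_i$ of a $\sigma_\ell^{(u+1)}(x_\gamma)\diag(\sfd_1)\cdots\diag(\sfd_u)$ letter is the same, each $\sfd_i$ having level $<r$.

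The main obstacle is purely organizational: making the case split genuinely exhaustive — every occurrence of every replaceable symbol, exposed or buried, as a bare letter, a $\diag$-argument, or a $\sigma^{(u+1)}$-head, must be accounted for exactly once — and, in the buried case, keeping the level-index shifts under re-insertion consistent across all four forms. The powers of $\sqrt m$ and $m$ are not a separate difficulty: once the scaling convention implicit in $\fD_r$ is fixed (the seeds $a_t$ and $\bm 1$ absorb a $1/\sqrt m$, while $\sqrt m x_\gamma^{(\ell)}$ is the original $x_\gamma^{(\ell)}$ scaled up), each replacement in \eqref{e:replace} carries exactly the $1/\sqrt m$ and $1/m$ factors displayed in the four forms. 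A few index-boundary conventions need stating once — $x_\gamma^{(0)}=x_\gamma$, the special shape of $W^{(1)}\in\bR^{m\times d}$, and the degenerate $s=0$ instances of the third and fourth forms (which either collapse to the first form or do not arise for the vectors produced in the construction of $K_t^{(r)}$) — after which the verification is routine.
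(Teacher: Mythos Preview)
Your approach is correct and follows the same replacement-by-replacement case analysis as the paper, but you are considerably more thorough: the paper's proof treats only the top-level (``exposed'') occurrence of each symbol and leaves the recursive (``buried'') case entirely implicit, whereas you close that gap with an explicit induction on $r$. One small slip: in the $a_t\to x_\beta^{(H)}$ case the replaced word is $\sfe_s\cdots\sfe_1\,x_\beta^{(H)}=\tfrac{1}{\sqrt m}\,\sfe_s\cdots\sfe_1(\sqrt m\,x_\beta^{(H)})$, so the result is $\sfv_1(t)/\sqrt m$ with $\sfv_1(t)\in\fD_r$ (which is exactly what the paper's proof writes), not $\sfv_1(t)$ itself --- this is a harmless normalization issue that your later remark about scaling conventions already anticipates.
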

We remark that the time $t$ in Proposition \ref{p:fDstruc}  is only a parameter and this proposition does not involve dynamics.

\begin{proof}[Proof of Proposition \ref{p:fDstruc}]
By performing the replacement for $a_t$, the new expression is given by
$
\sfv_1(t)/\sqrt m,
$
with $\sfv_1(t)\in \fD_r$.

By performing the replacement for $x_\al^{(\ell)}$, we get a sum of $\ell$ terms. Each of them is of the form
$\sfv_1(t)/\sqrt m\langle \sqrt m x_{\al}^{(k-1)},\sqrt m x_{\beta}^{(k-1)}\rangle/m
$ with $\sfv_1(t)$ containing one more $\diag$ operations. It is easy to check that $\sfv_1(t)\in \fD_{r+1}$.
%\begin{align}\begin{split}\label{e:x}
%&\sum_{k=1}^\ell\langle x_{\al}^{(k-1)},x_{\al_{r+1}}^{(k-1)}\rangle
%\times\\
%&\sfe_s \sfe_{s-1}\cdots \sfe_{1} \diag\left(\sigma'_\ell(x_{\al_{r+1}})\frac{W_t^{(\ell)}}{\sqrt m}\cdots \frac{W_t^{(k+1)}}{\sqrt m}\sigma_k'(x_{\al_{r+1}}) \sigma_k'\frac{(W_t^{(k+1)})^\top}{\sqrt m}\cdots \sigma_H'(x_{\al_{r+1}}) \frac{a_t}{\sqrt m}\right)\frac{\bm 1}{\sqrt m}
%\end{split}\end{align}

By performing the replacement for $W_t^{(\ell)}/\sqrt m$, the new expression is given by 
\begin{align*}
\frac{ \sfv_1(t)}{\sqrt m}\frac{ \langle \sqrt m x_{\beta}^{(\ell)}, \sfv_2(t)\rangle}{m}, 
\end{align*}
with $\sfv_1(t)\in \fD_{r-s+1}$ and $\sfv_2(t)\in \fD_{s}$ for some $s\geq 1$.
%\begin{align}\label{e:W1}
%\diag\left(\sfe_k \sfe_{k-1}\cdots \sfe_{j+1} \diag\left(\sigma_\ell'(x_\beta) \frac{(W^{(\ell+1)})^\top}{\sqrt m}\cdots \sigma_H'(x_\beta) \frac{a}{\sqrt m}\right)\frac{\bm 1}{\sqrt m} \right), \quad 1\leq \beta \leq n
%\end{align}
%and 
%\begin{align}\label{e:W2}
%\diag(\sfe_{j-1} \sfe_{j-2}\cdots \sfe_{1}\sfe_0)
%\end{align}
%

By performing the replacement for $(W_t^{(\ell)})^\top/\sqrt m$, the new expression is given by
\begin{align*}
\frac{\sfv_1(t)}{\sqrt m}  \frac{1}{m}\left\langle \sigma_\ell'(x_{\beta}) \frac{(W_t^{(\ell+1)})^\top}{\sqrt m}\cdots \sigma_H' (x_{\beta})a_t, \sfv_2(t)\right\rangle,
\end{align*}
with $\sfv_1(t)\in \fD_{r-s}$ and $\sfv_2(t)\in \fD_{s}$ for some $s\geq 1$.

Finally, by performing the replacement for $\sigma_\ell^{(u)}(x_\al)$, we get a sum of $\ell$ terms of the form $\sfv_1(t)/\sqrt m$, with $\sfv_1(t)\in \fD_{r+1}$.
%\begin{align}
%\frac{\sfv_1(t)}{\sqrt m}  \frac{1}{m}\left\langle \sigma_\ell'(x_{\beta}) \frac{(W_t^{(\ell+1)})^\top}{\sqrt m}\cdots \sigma_H' (x_{\beta})\frac{a_t}{\sqrt m}, \sfv_2(t)\right\rangle
%\end{align}
%with $\sfv_1(t)\in \fD_{r-s+1}$ and $\sfv_2(t)\in \fD_{s}$ for some $s\geq 1$.

%If $\sfe_j=\sigma'_\ell(x_\beta)$, we get
%\begin{align}\label{e:sx1}
%\diag\left(\sfe_k \sfe_{k-1}\cdots \sfe_{j+1}\sigma^{(2)}(x_\beta)
%\diag\left(\sigma'_\ell(x_\beta)\frac{(W^{(\ell+1)})^\top}{\sqrt m}\cdots \sigma_H'(x_\beta) \frac{a}{\sqrt m}\right) 
%\sfe_{j-1}\cdots\sfe_{1}\sfe_0\right),
%\end{align}
%and 
%\begin{align}\label{e:sx2}
%\diag\left(\sfe_k \sfe_{k-1}\cdots \sfe_{j+1}\sigma^{(2)}(x_\beta)\diag\left(\frac{W^{(\ell)}}{\sqrt m}\sigma_{\ell-1}'(x_\beta)\cdots \frac{W^{(k+1)}}{\sqrt m}\sigma_k'(x_\beta)\sigma'_k(x_\beta)\frac{(W^{(k+1)})^\top}{\sqrt m}\cdots \sigma_H'(x_\beta) \frac{a}{\sqrt m}\right)\sfe_{j-1}\cdots\sfe_{1}\sfe_0\right),
%\end{align}
%for $1\leq k\leq \ell-1$.
\end{proof}

As a consequence of Proposition \ref{p:fDstruc}, each summand in $K_t^{(r)}(x_{\al_1}, x_{\al_2},\cdots, x_{\al_r})$ is of the form
\begin{align}\label{e:formcc}
\frac{1}{m^{r/2-1}}\prod_{j=1}^s \frac{\langle \sfv_{2j-1}(t),\sfv_{2j}(t)\rangle}{m}, \quad 1\leq s\leq r,\quad \sfv_1(t), \sfv_2(t),\cdots, \sfv_{2s}(t)\in \fD_0\cup \fD_1\cup \cdots \cup \fD_{r-2}.
\end{align}

%We notice that if we use the replacements in \eqref{e:replace} for $a_t$, $W_t^{(\ell)}/\sqrt m$, $x_\al^{(\ell)}$ or $\sigma_\ell^{(r)}(x_\al)$ the number of $\diag$ operations increases by one; if we use the replacement for $(W_t^{(\ell)})^\top/\sqrt m$, the number of $\diag$ operations does not change, but we get an extra number of $1/\sqrt m$. As a consequence, each summand in $K_t^{(r)}$ contains $s$ $\diag$ operations and a factor $(1/\sqrt m)^{r-s-2}$ for some $0\leq s\leq r-2$.

%In next section \ref{s:initial}, we show at time $t=0$, those vectors $\sfv_j,\sfu_j$ in \eqref{e:form} are asymptotically joint Gaussian. As a consequence, we have that $\langle \sfv_j, \sfu_j\rangle/m$ concentrates around certain constant. So does the product $\prod_{j=1}^s \langle \sfv_j, \sfu_j\rangle/m$; We also have $L_\infty$ estimates for those vectors $\sfv_j,\sfu_j$, i.e. with high probability $\|\sfv_j\|_\infty, \|\sfu_j\|_\infty\leq (\ln m)^\fC$. Using the $L_\infty$ estimates for those vectors $\sfv_j,\sfu_j$ as input, in Section \ref{s:prior}, we prove that for time ${\cor t\leq m^{\frac{p}{2(p+1)}}}/(\ln m)^{\fC'}$, it holds that $\|\sfv_j\|_\infty, \|\sfu_j\|_\infty\leq (\ln m)^\fC$. Then the claim \eqref{e:tprior2} in Theorem \ref{t:main1} follows.

In the rest of this section we prove claim \eqref{e:tprior1} in Theorem \ref{t:main1}.
To evaluate $K_0^{(r)}(x_{\al_1}, x_{\al_2}, \cdots, x_{\al_r})$, we use the {tensor program} in \cite{DBLP:journals/corr/abs-1902-04760}, which was developed to characterize the scaling limit of neural network computations. We show at time $t=0$, those vectors $\sfv_j(0)$ in \eqref{e:form} are combinations of projections of independent Gaussian vectors. As a consequence, we have that $\langle \sfv_{2j-1}(0), \sfv_{2j}(0)\rangle/m$ concentrates around certain constant with high probability. So does the product $\prod_{j=1}^s \langle \sfv_{2j-1}(0), \sfv_{2j}(0)\rangle/m$. This gives the claim \eqref{e:tprior1}. 

In the next section, we consider the quantity:
\begin{align*}
 \xi(t)=\max\{\|\sfv_j(t)\|_\infty: \sfv_j(t)\in \fD_0\cup\fD_1\cup\cdots\cup \fD_{p-1}\}.
 \end{align*}
Again using the tensor program, we show that with high probability $\|\sfv_j(0)\|_\infty\lesssim (\ln m)^\fC$. This gives the estimate of $\xi(t)$ at $t=0$. Next we show that the $(p+1)$-th derivative of $\xi(t)$ can be controlled by itself. This gives a self-consistent differential equation of $\xi(t)$. Combining with the initial estimate of $\xi(t)$, it follows that for time ${0\leq  t\leq m^{\frac{p}{2(p+1)}}}/(\ln m)^{\fC'}$, it holds that $\xi(t)\lesssim (\ln m)^{\fC}$. Especially $\|\sfv_j(t)\|_\infty\lesssim (\ln m)^\fC$. Then the claim \eqref{e:tprior2} in Theorem \ref{t:main1} follows.

\begin{proposition}\label{p:Krlimit}
Under Assumptions \ref{a:sigmaasup} and \ref{a:nonlinear}, there exists a deterministic family of operators $\fK^{(r)}: \cX^r\mapsto \bR$ for $2\leq r\leq p+1$ (independent of $m$) and $\fK^{(r)}=0$ if $r$ is odd, such that with high probability with respect to the random initialization, it holds that
\begin{align}\label{e:Krlimit}
\left\|K^{(r)}_0-\frac{\fK^{(r)}}{m^{r/2-1}}\right\|_{\infty}\lesssim \frac{(\ln m)^\fC}{m^{(r-1)/2}}.
\end{align}
\end{proposition}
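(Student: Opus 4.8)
The plan is to combine the algebraic description of $K_0^{(r)}$ obtained above with a quantitative Gaussian-conditioning analysis of the randomly initialized network, organized through the tensor program of \cite{DBLP:journals/corr/abs-1902-04760}. \emph{Reduction to monomials.} By Proposition \ref{p:fDstruc} and \eqref{e:formcc}, for each fixed depth $H$ and order $r\le p+1$ the kernel $K_0^{(r)}(x_{\al_1},\dots,x_{\al_r})$ is a sum of boundedly many monomials, each of the form $\tfrac{1}{m^{r/2-1}}\prod_{j=1}^{s}\tfrac{\langle \sfv_{2j-1}(0),\sfv_{2j}(0)\rangle}{m}$ with $1\le s\le r$ and $\sfv_1(0),\dots,\sfv_{2s}(0)\in\fD_0\cup\cdots\cup\fD_{r-2}$. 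It therefore suffices to produce, for each such monomial, a deterministic constant $c$ (independent of $m$) such that $\prod_{j}\langle\sfv_{2j-1}(0),\sfv_{2j}(0)\rangle/m = c + \OO((\ln m)^\fC/\sqrt m)$ with high probability, and then to set $\fK^{(r)}$ equal to $m^{r/2-1}$ times the sum of these constants over the monomials of $K_0^{(r)}$. Summing over monomials and taking a union bound over the $n^r$ index tuples then yields \eqref{e:Krlimit}.

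\emph{Gaussian structure at initialization.} First I would run the tensor program for the initialized network, conditioning successively on $W^{(1)},\dots,W^{(H)}$ and $a$. By the master theorem for such programs (Gaussian conditioning of Bolthausen type), after this conditioning every vector $\sfv_j(0)\in\fD_s$ is a fixed, $m$-independent linear combination of projections of independent standard Gaussian vectors onto a bounded-dimensional subspace together with a fresh Gaussian component, so that its coordinates are asymptotically i.i.d.\ with an explicitly computable law whose covariances are controlled using Assumption \ref{a:nonlinear}. Hence $\langle\sfv_{2j-1}(0),\sfv_{2j}(0)\rangle/m$ is an empirical mean of $m$ nearly-i.i.d.\ variables that are bounded by $(\ln m)^\fC$ (the same estimate that gives $\xi(0)\lesssim(\ln m)^\fC$), and so concentrates around a deterministic constant with fluctuation $\OO((\ln m)^\fC/\sqrt m)$; the bounded product over $j$ inherits the same bound. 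This is the step I expect to be the main obstacle: one must push the tensor-program computation through the $\diag(\sfd)$ insertions and the higher-derivative factors $\sigma^{(u+1)}_\ell$ occurring in the sets $\fD_s$ — objects absent from an ordinary forward/backward pass — and, crucially, upgrade the usual law-of-large-numbers conclusions to concentration with polylogarithmic error, which is where Assumption \ref{a:sigmaasup} on the boundedness of $\sigma^{(r)}$ up to order $2p+1$ enters.

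\emph{Vanishing for odd $r$.} Finally, to see that $\fK^{(r)}=0$ when $r$ is odd, note that each replacement in \eqref{e:replace} changes the number of factors of $a$ in a monomial by exactly $\pm1$, while every monomial in $K_0^{(2)}$ (coming from $G^{(\ell)}$ or $G^{(H+1)}$) contains an even number of $a$'s; hence every monomial in $K_0^{(r)}$ contains a number of $a$'s congruent to $r$ modulo $2$. At initialization $a$ is a centered Gaussian vector independent of $W^{(1)},\dots,W^{(H)}$, and, with the $W^{(\ell)}$ frozen, each monomial is a homogeneous polynomial in $a$ of degree equal to that count; for odd $r$ it is an odd function of $a$, so $\bE[K_0^{(r)}\mid W^{(1)},\dots,W^{(H)}]=0$ and thus $\bE[K_0^{(r)}]=0$. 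Since all quantities are polynomially bounded in $m$, the high-probability expansion from the previous step may be integrated, giving $\fK^{(r)}/m^{r/2-1}=\bE[K_0^{(r)}]+\oo(m^{-(r/2-1)})=\oo(m^{-(r/2-1)})$; as $\fK^{(r)}$ does not depend on $m$, this forces $\fK^{(r)}=0$.
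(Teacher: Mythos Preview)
Your proposal is correct and follows essentially the same approach as the paper: reduce to monomials via \eqref{e:formcc}, use the tensor-program Gaussian conditioning to show each inner product $\langle\sfv_{2j-1}(0),\sfv_{2j}(0)\rangle/m$ concentrates around a deterministic limit with fluctuation $\OO((\ln m)^\fC/\sqrt m)$, and deduce $\fK^{(r)}=0$ for odd $r$ from the odd $a$-degree. The paper carries out the step you flag as the main obstacle---pushing the conditioning through the $\diag(\sfd)$ and $\sigma_\ell^{(u+1)}$ factors---via an explicit induction (Claim~\ref{c:newtau}) on the sequence of sub-expressions $\sfe_{rH+1},\sfe_{rH+2},\dots$, showing each new evaluation is $\fa_{j,j}g_j+\LinMul(a_0,g_1,\dots,g_{j-1})$ with $|\fa_{j,j}|\asymp1$ and the A-coefficients concentrating; this is exactly the quantitative master-theorem statement you invoke.
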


As we have shown in \eqref{e:formcc}, the kernel $K^{(r)}_t(x_{\al_1}, x_{\al_2}, \cdots, x_{\al_r})$ is a sum of terms in the form
\begin{align}\label{e:formcopy}
\frac{1}{m^{r/2-1}}\prod_{j=1}^s \frac{\langle \sfv_{2j-1}(t),\sfv_{2j}(t)\rangle}{m}, \quad 1\leq s\leq r,\quad \sfv_1(t), \sfv_2(t),\cdots, \sfv_{2s}(t)\in \fD_0\cup \fD_1\cup \cdots \cup \fD_{r-2}.
\end{align}
%are products of inner product of vectors obtained from \eqref{e:initterm} by iteratively multiplying \eqref{e:mult1} and \eqref{e:mult2}.
To evaluate $K_0^{(r)}(x_{\al_1}, x_{\al_2}, \cdots, x_{\al_r})$, 
%we use the \emph{tensor program} developed by Yang in \cite{DBLP:journals/corr/abs-1902-04760}, which was developed to characterize the scaling limit of neural network computations. 
we  recall the following conditioning Lemma  from \cite{DBLP:journals/corr/abs-1902-04760}.
With this lemma, 
we can keep track of vectors appearing in the expression of $K_0^{(r)}(x_{\al_1}, x_{\al_2}, \cdots, x_{\al_r})$, and their decomposition into combinations of projections of independent Gaussian vectors. 
\begin{lemma}\label{l:decompose}
Let $W\in \bR^{m\times m}$ be a matrix with random Gaussian entries $W_{ij}\sim \cN(0,c_w)$. Consider fixed matrices $Q\in \bR^{m\times q}, Y\in \bR^{m\times q}, P\in \bR^{m\times p},  X\in \bR^{m\times p}$. Then the distribution of $W$ conditioned on $Y=WQ$ and $X=W^\top P$ is 
\begin{align*}
W|_{Y=WQ, X=W^\top P}\stackrel{\text{d}}{=} E+\Pi_P^\perp\tilde W \Pi_Q^\perp,
\end{align*}
where $\tilde W$ is an independent copy of $W$,
\begin{align*}
E=YQ^++(P^+)^\top X^\top\Pi_Q^\perp=\Pi_P^\perp YQ^++(P^+)^\top X^\top,
\end{align*}
$Q^+, P^+$ are Moore-Penrose pseudoinverse of $Q,P$ respectively, and $\Pi_Q=I_m-\Pi_Q^\perp=QQ^+, \Pi_P=I_m-\Pi_P^\perp=PP^+$ are the orthogonal projection on the space spanned by the columns of $Q,P$ repsectively.
\end{lemma}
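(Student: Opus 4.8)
The idea is to decompose $W$ into the four mutually orthogonal blocks cut out by the complementary projections $\Pi_P,\Pi_P^\perp$ acting on the left and $\Pi_Q,\Pi_Q^\perp$ acting on the right, observe that the conditioning event pins down exactly three of these blocks to deterministic values while leaving the fourth untouched, and then reassemble. Concretely, write $W=\Pi_P W\Pi_Q+\Pi_P W\Pi_Q^\perp+\Pi_P^\perp W\Pi_Q+\Pi_P^\perp W\Pi_Q^\perp$. Since $W$ has i.i.d.\ $\cN(0,c_w)$ entries, one has for deterministic matrices $\bE[(M_1WN_1)_{ij}(M_2WN_2)_{kl}]=c_w(M_1M_2^\top)_{ik}(N_1^\top N_2)_{jl}$; applying this with $M_i\in\{\Pi_P,\Pi_P^\perp\}$, $N_i\in\{\Pi_Q,\Pi_Q^\perp\}$ and using $\Pi_P\Pi_P^\perp=0=\Pi_Q\Pi_Q^\perp$ together with the symmetry and idempotence of the projections shows the four blocks are pairwise uncorrelated, hence (being jointly Gaussian) mutually independent. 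Moreover the four blocks together determine $W$.

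Next I would rewrite the constraints in block form. Using the Moore--Penrose identity $QQ^+Q=Q$ and, since the event is realizable (i.e.\ $Y=WQ$ for some $W$, so the rows of $Y$ lie in the row space of $Q$), the identity $YQ^+Q=Y$, the event $\{WQ=Y\}$ is equivalent to $\{W\Pi_Q=YQ^+\}$, and projecting on the left with $\Pi_P,\Pi_P^\perp$ this is equivalent to $\{\Pi_P W\Pi_Q=\Pi_P YQ^+\}$ and $\{\Pi_P^\perp W\Pi_Q=\Pi_P^\perp YQ^+\}$. Symmetrically, $\{W^\top P=X\}$ is equivalent to $\{\Pi_P W=(P^+)^\top X^\top\}$, i.e.\ to $\{\Pi_P W\Pi_Q=(P^+)^\top X^\top\Pi_Q\}$ and $\{\Pi_P W\Pi_Q^\perp=(P^+)^\top X^\top\Pi_Q^\perp\}$. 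The two determinations of the block $\Pi_P W\Pi_Q$ agree, since $(P^+)^\top X^\top\Pi_Q=(P^+)^\top X^\top QQ^+=(P^+)^\top(X^\top Q)Q^+=(P^+)^\top(P^\top Y)Q^+=\Pi_P YQ^+$, where I used the compatibility relation $X^\top Q=P^\top Y$ (both equal $P^\top WQ$) and $(P^+)^\top P^\top=(PP^+)^\top=\Pi_P$. Thus the conditioning event fixes the three blocks $\Pi_P W\Pi_Q$, $\Pi_P W\Pi_Q^\perp$, $\Pi_P^\perp W\Pi_Q$ to the deterministic values $\Pi_P YQ^+$, $(P^+)^\top X^\top\Pi_Q^\perp$, $\Pi_P^\perp YQ^+$.

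Since these three blocks are independent of the fourth block $\Pi_P^\perp W\Pi_Q^\perp$ and together with it determine $W$, the regular conditional law of $W$ given the event is obtained by replacing the three constrained blocks with their fixed values while keeping $\Pi_P^\perp W\Pi_Q^\perp$ at its unconditional law $\Pi_P^\perp\tilde W\Pi_Q^\perp$. Summing the four blocks,
\begin{align*}
W\big|_{\,WQ=Y,\;W^\top P=X}\stackrel{\text{d}}{=}\Pi_P YQ^++\Pi_P^\perp YQ^++(P^+)^\top X^\top\Pi_Q^\perp+\Pi_P^\perp\tilde W\Pi_Q^\perp=YQ^++(P^+)^\top X^\top\Pi_Q^\perp+\Pi_P^\perp\tilde W\Pi_Q^\perp,
\end{align*}
which is exactly $E+\Pi_P^\perp\tilde W\Pi_Q^\perp$; the second displayed form of $E$ follows from the identity $(P^+)^\top X^\top\Pi_Q=\Pi_P YQ^+$ used once more. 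The step requiring the most care is the rigorous handling of conditioning on the measure-zero event $\{WQ=Y,\ W^\top P=X\}$: I would make this precise either through the Gaussian disintegration onto the affine subspace $\{W:WQ=Y,\ W^\top P=X\}$, or by noting that the block decomposition diagonalizes the Gaussian so that conditioning the independent coordinates is unambiguous — and I would double-check throughout that the pseudoinverse manipulations only use realizability of the event and require no rank hypotheses on $P$ or $Q$.
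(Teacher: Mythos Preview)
Your proof is correct and follows the standard block-decomposition argument for Gaussian conditioning. Note, however, that the paper does not actually prove this lemma: it is quoted verbatim from \cite{DBLP:journals/corr/abs-1902-04760} (Yang's tensor program paper) and used as a black box, so there is no ``paper's own proof'' to compare against. Your argument --- decomposing $W$ into the four independent Gaussian blocks $\Pi_P W\Pi_Q$, $\Pi_P W\Pi_Q^\perp$, $\Pi_P^\perp W\Pi_Q$, $\Pi_P^\perp W\Pi_Q^\perp$, showing the linear constraints pin down the first three while leaving the last unconstrained, and reassembling --- is exactly the intended proof and matches what one finds in the cited reference. The compatibility check $(P^+)^\top X^\top\Pi_Q=\Pi_P YQ^+$ via $X^\top Q=P^\top Y$ is the key algebraic point, and you handle it correctly. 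Your caveat about the measure-zero conditioning is well taken; the block-independence reduction you describe is the cleanest way to make it rigorous.
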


\begin{proof}[Proof of Proposition \ref{p:Krlimit}]
%We use the tensor program developed in \cite{DBLP:journals/corr/abs-1902-04760} to compute $K_0^{(r)}(x_{\al_1}, x_{\al_2}, \cdots, x_{\al_r})$. 
Without loss of generality, we simply take $x_{\al_1}=x_1, x_{\al_2}=x_2, \cdots, x_{\al_r}=x_r$. We decompose the expression of $K_0^{(r)}(x_1,x_2,\cdots,x_r)$ into sub-expressions. We denote
\begin{align*}
\sfe_0=a_0,\quad  \sfe_{r(\ell-1)+i}=W_0^{(\ell)} x^{(\ell-1)}_{i},\quad 1\leq i\leq r,\quad 1\leq \ell\leq H.
\end{align*}
In the rest of the proof, we view $\sfe_i$ as formal expressions, and we denote their values as $\val(\sfe_i)$. For the computation, to evaluate $f(x_1,\theta_0), f(x_2,\theta_0), \cdots, f(x_r,\theta_0)$, we need to sequentially evaluate the expressions $ \sfe_1,  \sfe_2,\cdots, \sfe_{rH}$. We will express the values of these expressions as combinations of Gaussian vectors in the following way. By repeatedly using Lemma \ref{l:decompose}, we have
\begin{align}\begin{split}\label{e:first}
&\val( \sfe_1)=W_0^{(1)}x_1\stackrel{d}{=}\fa_{1,1}g_1,\\
&\val( \sfe_2)=W_0^{(1)}x_2\stackrel{d}{=}\fa_{2,2}g_2+\fa_{2,1}g_{1},\\
&\cdots\cdots\\
&\val( \sfe_r)=W_0^{(1)}x_r\stackrel{d}{=}\fa_{r,r}g_r+\fa_{r,r-1}g_{r-1}+\cdots+\fa_{r,1}g_1.
\end{split}\end{align}
where $g_1, g_2,\cdots, g_r$ are independent standard Gaussian vectors in $\bR^m$; the coefficients $\fa_{i,j}$ can be computed by performing the Gram-Schmidt algorithm over the input vectors $x_1,x_2,\cdots,x_r$, which depend only on the inner products $\langle x_i, x_j\rangle$ and we call them \emph{A-variables}. In general A-variables are random variables, however in \eqref{e:first}, they are deterministic. Thanks to the Assumption \ref{a:nonlinear}, the smallest singular value of the matrix $[x_{1}, x_{2}, \cdots, x_{r}]$ is at least $\fc_r>0$, the leading coefficients $|\fa_{1,1}|, |\fa_{2,2}|, \cdots, |\fa_{r,r}|\asymp 1$. As a consequence, each of the evaluations of $\sfe_i$ for $1\leq i\leq r$ contains a new standard Gaussian vector.

For the output of the second layer, again using Lemma \ref{l:decompose}, we have
\begin{align*}\begin{split}
&\val(\sfe_{r+1})=\frac{W_0^{(2)}}{\sqrt m}\sigma(\val( \sfe_1))\stackrel{d}{=}\fa_{r+1,r+1}g_{r+1},\\
&\val( \sfe_{r+2})=\frac{W_0^{(2)}}{\sqrt m}\sigma(\val( \sfe_2))\stackrel{d}{=}\fa_{r+2,r+2}g_{r+2}+\fa_{r+2,r+1}g_{r+1},\\
&\cdots\cdots\\
&\val( \sfe_{2r})=\frac{W_0^{(2)}}{\sqrt m}\sigma(\val( \sfe_r))\stackrel{d}{=}\fa_{2r,2r}g_{2r}+\fa_{2r,2r-1}g_{2r-1}+\cdots+\fa_{2r,r+1}g_{r+1}.
\end{split}\end{align*}
where $g_{r+1}, g_{r+2},\cdots, g_{2r}$ are independent standard Gaussian vectors in $\bR^m$, which are also independent of $g_1,g_2,\cdots, g_r$; the coefficients $\fa_{i,j}$ are computed by performing the Gram-Schmidt algorithm over $x^{(1)}_1,x^{(1)}_2,\cdots,x^{(1)}_r$. In this case, the coefficients $\fa_{i,j}$ are random, which depend on the inner products $\langle x^{(1)}_i, x^{(1)}_j\rangle$. However, the inner products $\langle x^{(1)}_i, x^{(1)}_j\rangle$ 
\begin{align*}
\langle x^{(1)}_i, x^{(1)}_j\rangle
=\frac{1}{m}\langle\sigma(\fa_{i,i}g_i+\fa_{i,i-1}g_{i-1}+\cdots+\fa_{i,1}g_1) , \sigma(\fa_{j,j}g_j+\fa_{j,j-1}g_{j-1}+\cdots+\fa_{j,1}g_1)\rangle,
\end{align*}
are average of $m$ independent identically distributed quantities, each is a function of Gaussian variables. Therefore, $\langle x_i^{(1)}, x_j^{(1)}\rangle$ has a scaling limit as the width of the neural network $m\rightarrow \infty$, and strongly concentrates around this limit. In other words, with high probability we have
\begin{align}\label{e:Avar}
\lim_{m\rightarrow \infty}\fa_{i,j}=\tilde \fa_{i,j}, \quad \fa_{i,j}=\tilde \fa_{i,j}+\OO\left(\frac{(\ln m)^{\fC}}{\sqrt m}\right).
\end{align}
We will see soon, in fact, by the same reasoning, all the A-variables appearing in this section satisfy the relation \eqref{e:Avar}. Moreover, in the limit $m\rightarrow \infty$, The Gram matrix of $x_1^{(1)}, x_2^{(1)}, \cdots, x_r^{(1)}$ is full rank. Otherwise there exist constants $\la_1, \la_2,\cdots,\la_r$ such that
\begin{align}\label{e:exp}
\la_1 \sigma(\tilde \fa_{1,1}G_{1})+\la_1 \sigma(\tilde \fa_{2,2}G_{2}+\tilde\fa_{2,1}G_{1})+\cdots +\la_r \sigma(\tilde \fa_{r,r}G_r+\tilde \fa_{r,r-1}G_{r-1}+\cdots+\fa_{r,1}G_{1})=0,
\end{align}
for independent Gaussian variables $G_1,G_2,\cdots,G_r\sim \cN(0,1)$.
This is impossible, unless the expression \eqref{e:exp} is literally zero, i.e. $\la_1, \la_2,\cdots,\la_r=0$. Therefore, in the limit $m\rightarrow \infty$, The Gram matrix of $x_1^{(1)}, x_2^{(1)}, \cdots, x_r^{(1)}$ is full rank. We conclude that $|\tilde \fa_{r+1,r+1}|,|\tilde \fa_{r+2,r+2}|, \cdots, |\tilde \fa_{2r,2r}|\asymp 1$. Combining with \eqref{e:Avar}, with high probability, it holds $|\fa_{r+1,r+1}|,|\fa_{r+2,r+2}|, \cdots, |\fa_{2r,2r}|\asymp 1$. Again, each of the evaluations of $\sfe_i$ for $r+1\leq i\leq 2r$ contains a new standard Gaussian vector.

By repeating the above argument, we get that for any $1\leq i \leq rH$,
\begin{align*}
\val( \sfe_j)\stackrel{d}{=}\fa_{j,j}g_{j}+\fa_{j,j-1}g_{j-1}\cdots+\fa_{j,1}g_1,
\end{align*}
where $g_1, g_2, \cdots, g_{rH}$ are independent standard Gaussian vectors, the A-variables $\fa_{j,j},\fa_{j,j-1},\cdots,\fa_{j,1}$ concentrate around their limits, i.e. with high probability \eqref{e:Avar} holds, and $|\fa_{j,j}|\asymp 1$.

Formally as expressions, we have for $2\leq \ell\leq H$,
\begin{align*}
\left[\sfe_{r(\ell-1)+1}, \sfe_{r(\ell-1)+2},\cdots,  \sfe_{r\ell}\right]=W_0^{(\ell)}\left[\frac{\sigma( \sfe_{r(\ell-2)+1})}{\sqrt m}, \frac{\sigma( \sfe_{r(\ell-2)+2})}{\sqrt m},\cdots, \frac{\sigma( \sfe_{r(\ell-1)})}{\sqrt m}\right].
\end{align*}
To use Lemma \ref{l:decompose} in the future, we denote for $2\leq \ell \leq H$,
\begin{align*}\begin{split}
&Y_0^{(\ell)}=\val\left[\sfe_{r(\ell-1)+1},  \sfe_{r(\ell-1)+2},\cdots, \sfe_{r\ell}\right],\\
&Q_0^{(\ell)}=\val \left[\frac{\sigma( \sfe_{r(\ell-2)+1})}{\sqrt m}, \frac{\sigma( \sfe_{r(\ell-2)+2})}{\sqrt m},\cdots, \frac{\sigma( \sfe_{r(\ell-1)})}{\sqrt m}\right].
\end{split}\end{align*}
Then $Y_0^{(\ell)}=W_0^{(\ell)}Q_0^{(\ell)}$, for $2\leq \ell \leq H$.

To estimate $K^{(r)}_0(x_1,x_2,\cdots,x_r)$, we need to decompose the expression of $K^{(r)}_0(x_1,x_2,\cdots,x_r)$ into subexpressions $\sfe_{rH+1}, \sfe_{rH+2},\sfe_{rH+3},\cdots$ in the following way. Since each summand in $K^{(r)}_0(x_1,x_2,\cdots,x_r)$ is of the form \eqref{e:form}. For each of these vectors $\sfv_j(0), \sfu_j(0)$,
%a product of inner product of vectors obtained from \eqref{e:initterm} by iteratively multiplying \eqref{e:mult1} and \eqref{e:mult2}. For each vector obtained from \eqref{e:initterm} by iteratively multiplying \eqref{e:mult1} and \eqref{e:mult2}, 
we evaluate it from right to left. Each time, when we need to multiply one of these matrices $W_0^{(2)}, (W_0^{(2)})^{\top},W_0^{(3)}, (W_0^{(3)})^{\top}, \cdots, W_0^{(H)},  (W_0^{(H)})^{\top}$, we add a new subexpression corresponding to the whole expression if it has not appeared before. For example we have the following expression in $K^{(2)}_0(\cdot,\cdot)$:
\begin{align}\label{e:example}
\sigma_\ell'(x)\frac{(W_0^{(\ell+1)})^\top}{\sqrt m}\cdots \sigma_H'(x)a_0.
\end{align}
We decompose it into subexpressions in the following way
\begin{align*}\begin{split}
&\sfe_{rH+1}=\frac{(W_0^{(H)})^\top}{\sqrt m}\sigma_H'(x)a_0,\\
&\sfe_{rH+2}=\frac{(W_0^{(H-1)})^\top}{\sqrt m}\sigma_{H-1}'(x)\frac{(W_0^{(H)})^\top}{\sqrt m}\sigma_H'(x)a_0,\\
&\cdots\cdots\\
&\sfe_{(r+1)H-\ell}=\frac{(W_0^{(\ell+1)})^\top}{\sqrt m}\cdots \sigma_H'(x)a_0=\eqref{e:example}.
\end{split}\end{align*}
Then for each $\sfe_j$ with $j\geq rH+1$, either $\sfe_j=(W_0^{(\ell)}/\sqrt{m})\sff_j$ or $\sfe_j=((W_0^{(\ell)})^\top/\sqrt{m})\sff_j$ for some $2\leq \ell\leq H$, and $\sff_j$ is an expression in the following form
\begin{align*}
\Mul(\sfe_0, \sfe_1,\cdots, \sfe_{j-1})=\{\text{entrywise products of } \sfe_0, \{\sigma^{(s)}(\sfe_i)\}_{0\leq s\leq r-1, 1\leq i\leq rH}, \{\sfe_i\}_{rH+1\leq i\leq j-1}\}.
\end{align*}
For $2\leq \ell\leq H$, we denote the sets
\begin{align*}
&S^{(\ell)}_\tau=\{1\leq j\leq rH+\tau: \sfe_j \text{ ends with multiplying } W_0^{(\ell)}/\sqrt m\},\\
&T^{(\ell)}_\tau=\{1\leq j\leq rH+\tau: \sfe_j \text{ ends with multiplying } (W_0^{(\ell)})^\top/\sqrt m\}.
\end{align*}
Formally as expressions, we have for $2\leq \ell\leq H$,
\begin{align*}
&\left[\sfe_j\right]_{j\in S^{(\ell)}_\tau}=W_0^{(\ell)}\left[\frac{\sff_j}{\sqrt m}\right]_{j\in S^{(\ell)}_\tau},\\
&\left[\sfe_j\right]_{j\in T^{(\ell)}_\tau}=(W_0^{(\ell)})^\top\left[\frac{\sff_j}{\sqrt m}\right]_{j\in T^{(\ell)}_\tau}.
\end{align*}
To use Lemma \ref{l:decompose} in the future, we denote for $2\leq \ell \leq H$,
\begin{align}\begin{split}\label{e:defYQXP}
&Y_\tau^{(\ell)}=\val\left[\sfe_j\right]_{j\in S^{(\ell)}_\tau},\quad Q_\tau^{(\ell)}=\val \left[\frac{\sff_j}{\sqrt m}\right]_{j\in S^{(\ell)}_\tau},\\
&X_\tau^{(\ell)}=
\val \left[\sfe_j\right]_{j\in T^{(\ell)}_\tau},\quad P_\tau^{(\ell)}=\val\left[\frac{\sff_j}{\sqrt m}\right]_{j\in T^{(\ell)}_\tau}.
\end{split}\end{align}
Then  for $2\leq \ell\leq H$,
\begin{align*}
Y_\tau^{(\ell)}=W_0^{(\ell)}Q_\tau^{(\ell)},\quad X_\tau^{(\ell)}=(W_0^{(\ell)})^{\top}P_\tau^{(\ell)}.
\end{align*}

In the following we prove by induction that
\begin{claim}\label{c:newtau}
For $\tau\geq 1$, the following holds.
\begin{enumerate}[{\rm (i)}]
\item
The limits as $m\rightarrow \infty$ of the Gram matrices of columns $Q_\tau^{(\ell)}$, and columns of $P_\tau^{(\ell)}$, as defined in \eqref{e:defYQXP}, are non-degenerate;
\item 
%The evaluation of $\sfe_{rH+\tau}$ has the following form
%\begin{align}
%\val(\sfe_{rH+\tau})=\fa_{rH+\tau,rH+\tau}g_{rH+\tau}+\LinMul(a, g_1,g_2,\cdots, g_{rH+\tau-1}),
%\end{align}
%where 
%$\LinMul(a, g_1,g_2,\cdots, g_{rH+\tau-1})$
%is the set of linear combinations of $\Mul(a, g_1,g_2,\cdots, g_{rH+\tau-1})$ with A-variables as coefficients;
%$\Mul(a, g_1,g_2,\cdots, g_{rH+\tau-1})$ is the set of entrywise products of 
%$a_0$, $\{\sigma^{(s)}(\fa_{i,i}g_i+\fa_{i,i-1}g_{i-1}+\cdots+\fa_{i,1}g_1)\}_{0\leq s\leq r-1, 1\leq i\leq rH}$ and $\{g_i\}_{rH+1\leq i\leq rH+\tau-1}$; 
%if the expression $\sfe_{rH+\tau}$ ends with multiplying $W_0^{(\ell)}/\sqrt m$
%\begin{align}
%g_{rH+\tau}=\Pi_{Q_\tau^{(\ell)}}^\perp \tilde g_{rH+\tau},
%\end{align}
%or  if the expression $\sfe_{rH+\tau}$ ends with multiplying $(W_0^{(\ell)})^{\top}/\sqrt m$
%\begin{align}
%g_{rH+\tau}=\Pi_{P_\tau^{(\ell)}}^\perp \tilde g_{rH+\tau},
%\end{align}
%and $\tilde g_{rH+\tau}$ is the standard Gaussian vector. Moreover, with high probability we have
%\begin{align}
%\lim_{m\rightarrow \infty}\fa_{rH+\tau,rH+\tau}=\tilde \fa_{rH+\tau,rH+\tau}\neq 0, \quad \fa_{rH+\tau,rH+\tau}=\tilde \fa_{rH+\tau,rH+\tau}+\OO\left(\frac{(\ln m)^{\fC}}{\sqrt m}\right).
%\end{align}
Let $\Mul(a_0, g_1,g_2,\cdots, g_{rH+\tau-1})$ be the set of entrywise products of 
$a_0$, $\{\sigma^{(s)}(\fa_{i,i}g_i+\fa_{i,i-1}g_{i-1}+\cdots+\fa_{i,1}g_1)\}_{0\leq s\leq r-1, 1\leq i\leq rH}$ and $\{g_i\}_{rH+1\leq i\leq rH+\tau-1}$ and $\LinMul(a_0, g_1,g_2,\cdots, g_{rH+\tau-1})$
be the set of linear combinations of $\Mul(a_0, g_1,g_2,\cdots, g_{rH+\tau-1})$ with A-variables as coefficients.
The evaluation of $\sfe_{rH+\tau}$ has the following form
\begin{align*}
\val(\sfe_{rH+\tau})=\fa_{rH+\tau,rH+\tau}g_{rH+\tau}+\LinMul(a_0, g_1,g_2,\cdots, g_{rH+\tau-1}),
\end{align*}
where $\tilde g_{rH+\tau}\sim \cN(0, I_m)$ is the standard Gaussian vector, 
\begin{align*}
g_{rH+\tau}=\Pi_{Q_\tau^{(\ell)}}^\perp \tilde g_{rH+\tau},
\end{align*}
if the expression $\sfe_{rH+\tau}$ ends with multiplying $W_0^{(\ell)}/\sqrt m$
and 
\begin{align*}
g_{rH+\tau}=\Pi_{P_\tau^{(\ell)}}^\perp \tilde g_{rH+\tau},
\end{align*}
 if the expression $\sfe_{rH+\tau}$ ends with multiplying $(W_0^{(\ell)})^{\top}/\sqrt m$. 
 Moreover, with high probability we have
\begin{align*}
\lim_{m\rightarrow \infty}\fa_{rH+\tau,rH+\tau}=\tilde \fa_{rH+\tau,rH+\tau}\neq 0, \quad \fa_{rH+\tau,rH+\tau}=\tilde \fa_{rH+\tau,rH+\tau}+\OO\left(\frac{(\ln m)^{\fC}}{\sqrt m}\right).
\end{align*}

\end{enumerate}
\end{claim}

\begin{proof}[Proof of Claim \ref{c:newtau}]
We assume that the statements of Claim \ref{c:newtau} hold up to $\tau$ and prove it for $\tau+1$
Without loss of generality, we assume that $\sfe_{rH+\tau+1}$ ends with multiplying $W_0^{(\ell)}/\sqrt m$, then 
$\sfe_{rH+\tau+1}=(W_0^{(\ell)}/\sqrt m)\sff_{rH+\tau+1}$, and $\sff_{rH+\tau+1}\in \Mul(\sfe_0, \sfe_1,\cdots, \sfe_{rH+\tau})$. 
Moreover,
$S^{(\ell)}_{\tau+1}=S^{(\ell)}_{\tau}\cup\{rH+\tau+1\}$, $T^{(\ell)}_{\tau+1}=T^{(\ell)}_\tau
$, and for $2\leq \ell \leq H$,
\begin{align}\begin{split}\label{e:defYQnew}
&Y_{\tau+1}^{(\ell)}=\val\left[\sfe_j\right]_{j\in S^{(\ell)}_{\tau+1}}=[Y_\tau^{(\ell)}, \val(\sfe_{\tau+1})],\quad Q_{\tau+1}^{(\ell)}=\val \left[\frac{\sff_j}{\sqrt m}\right]_{j\in S^{(\ell)}_{\tau+1}}=\left[Q_\tau^{(\ell)}, \frac{\val(\sff_{\tau})}{\sqrt m}\right],\\
&X_{\tau+1}^{(\ell)}=
\val \left[\sfe_j\right]_{j\in T^{(\ell)}_{\tau+1}}=X_\tau^{(\ell)},\quad P_{\tau+1}^{(\ell)}=\val\left[\frac{\sff_j}{\sqrt m}\right]_{j\in T^{(\ell)}_{\tau+1}}=P_\tau^{(\ell)}.
\end{split}\end{align}
By our induction assumption, we have that the limits as $m\rightarrow \infty$ of the Gram matrix of columns of $P_{\tau+1}^{(\ell)}$ is non-degenerate. To prove (${\rm i}$) in Claim \ref{c:newtau}, we only need to show that the limits as $m\rightarrow \infty$ of the Gram matrix of columns of $Q_{\tau+1}^{(\ell)}$ is non-degenerate. We prove it by contradiction. We recall from \eqref{e:defYQnew} $Q_{\tau+1}^{(\ell)}=\val[\sff_j/\sqrt m]_{j\in S_{\tau+1}^{(\ell)}}$. If the limit of the Gram matrix of columns of $Q_{\tau+1}^{(\ell)}$ is degenerate, informally, there exists constants $\la_j$ such that 
\begin{align}\label{e:limit}
\lim_{m\rightarrow \infty} \sum_{j\in S_\tau^{(\ell)}}\la_j \val(\sff_j)+\la_{rH+\tau+1}\val(\sff_{rH+\tau+1})=0.
\end{align} 
We recall that $\sff_j$ is an expression of entrywise products of $\sfe_0$, $\{\sigma^{(s)}(\sfe_i)\}_{0\leq s\leq r-1, 1\leq i\leq rH}$ and $\{\sfe_i\}_{rH+1\leq i\leq j-1}$, and by our induction hypothesis $\sfe_i=\fa_{i,i}g_{i}+\LinMul(a_0, g_1,g_2,\cdots, g_{i-1})$, with $|\fa_{i,i}|\asymp 1$ with high probability.
Moreover, as $m\rightarrow \infty$, the vectors $a_0, g_1, g_2, \cdots, g_{rH+\tau}$ converge to independent standard Gaussian vectors. \eqref{e:limit} implies that as formal expressions
\begin{align*}
\sum_{j\in S_\tau^{(\ell)}}\la_j \sff_j+\la_{rH+\tau+1}\sff_{rH+\tau+1}=0.
\end{align*}
However, this indicates that $\sff_{rH+\tau+1}= \la \sff_j$ with some $j\leq rH+\tau$ and   contradicts with our construction that $\sfe_{rH+\tau+1}$ has not appeared before.
This finishes the proof of (${\rm i}$) in Claim \ref{c:newtau}.

For the proof of (${\rm ii}$) in Claim \ref{c:newtau}, thanks to Lemma \ref{l:decompose}, we have
\begin{align}\label{e:verH}
\val (\sfe_{rH+\tau+1})
&\stackrel{d}{=}
\left(Y_\tau^{(\ell)}(Q_\tau^{(\ell)})^++((P_\tau^{(\ell)})^+)^\top (X_\tau^{(\ell)})^\top\Pi_{Q_\tau^{(\ell)}}^\perp+\Pi_{(P_\tau^{(\ell)})^+}^\perp\tilde W\Pi_{Q_\tau^{(\ell)}}^\perp\right)\frac{\val(\sff_{rH+\tau+1})}{\sqrt m}.
\end{align}
Since $\sff_{r H +\tau+1}\in \Mul(\sfe_0, \sfe_1,\cdots, \sfe_{r H+\tau})$ is an expression of entrywise products of $\sfe_0$, $\{\sigma^{(s)}(\sfe_i)\}_{0\leq s\leq r-1, 1\leq i\leq rH}$ and $\{\sfe_i\}_{rH+1\leq i\leq rH+\tau}$,, and by our induction assumption for $1\leq i\leq rH+\tau$,
\begin{align*}
\val(\sfe_{i})=\fa_{i,i}g_{i}+\LinMul(a_0, g_1,g_2,\cdots, g_{i-1}),
\end{align*}
we conclude that 
\begin{align*}
\val(\sff_{r H +\tau+1})\in\LinMul(a_0, g_1,g_2,\cdots, g_{rH+\tau}).
\end{align*}
By our induction assumption, the columns of $Q_\tau^{(\ell)}$ and $P_\tau^{(\ell)}$ as $m\rightarrow \infty$ are of full rank. The first two terms in \eqref{e:verH} are linear combinations of columns of $Y_\tau^{(\ell)}$ and columns of $P_\tau^{(\ell)}$ with A-variables as coefficients:
\begin{align}\label{e:firsttwo}
\left(Y_\tau^{(\ell)}(Q_\tau^{(\ell)})^++((P_\tau^{(\ell)})^+)^\top (X_\tau^{(\ell)})^\top\Pi_{Q_\tau^{(\ell)}}^\perp\right)\frac{\val(\sff_{rH+\tau+1})}{\sqrt m}\in \LinMul(a, g_1,g_2,\cdots, g_{rH+\tau}).
\end{align}
For the last term in  \eqref{e:verH}, we can rewrite it as
\begin{align}\begin{split}\label{e:verH2}
\Pi_{(P_\tau^{(\ell)})^+}^\perp\tilde W\Pi_{Q_\tau^{(\ell)}}^\perp\val(\sff_{rH+\tau+1})=\fa_{rH+\tau+1,rH+\tau+1}\Pi_{(P_\tau^{(\ell)})^+}^\perp \tilde g_{rH+\tau+1},
\end{split}\end{align}
where $\tilde g_{rH+\tau+1}$ is an independent Gaussian vector and 
\begin{align*}
\fa_{rH+\tau+1,rH+\tau+1}=\frac{1}{\sqrt m}\|\Pi_{Q_\tau^{(\ell)}}^\perp\val(\sff_{rH+\tau+1})\|_2.
\end{align*}
By the same argument as before, the A-variable $\fa_{rH+\tau+1,rH+\tau+1}$ strongly concentrates around this limit. With high probability we have
\begin{align*}
\lim_{m\rightarrow \infty}\fa_{rH+\tau+1,rH+\tau+1}=\tilde \fa_{rH+\tau+1,rH+\tau+1}, \quad \fa_{rH+\tau+1,rH+\tau+1}=\tilde \fa_{rH+\tau+1,rH+\tau+1}+\OO\left(\frac{(\ln m)^{\fC}}{\sqrt m}\right).
\end{align*}
Moreover, as we just proven, (${\rm i}$) in Claim \ref{c:newtau} implies that as $m\rightarrow\infty$, the limit of the Gram matrix of $\{\val(\sff_j)/\sqrt m\}_{j\in S_\tau^{(\ell)}}\cup\{\val(\sff_{rH+\tau+1})/\sqrt m\}$ is non-degenerate. We conclude that $\tilde \fa_{rH+\tau+1 rH+\tau+1}\neq0$, then with high probability $|\fa_{rH+\tau+1,rH+\tau+1}|\asymp 1$. This finishes the proof of Claim \ref{c:newtau}.

\end{proof}

From the discussion above, the evaluation of any subexpression $\sfe_i$ in $K^{(r)}_0(x_{1}, x_{2}, \cdots, x_{r})$
is of the form
\begin{align}\label{e:newform}
\sfe_i=\fa_{i,i}g_i+\LinMul(a_0,g_1,a_2,\cdots, g_{i-1}).
\end{align}
Especially, the vectors $\sfv_j(t)$ at time $t=0$ in \eqref{e:formcc} are also of the form \eqref{e:newform}. Their inner products  concentrate around their limits as $m\rightarrow \infty$,
\begin{align}\label{e:innerp}
 \frac{\langle \sfv_{2j-1}(0),\sfv_{2j}(0)\rangle}{m}
 =\lim_{m\rightarrow\infty}  \frac{\langle \sfv_{2j-1}(0),\sfv_{2j}(0)\rangle}{m}+\OO\left(\frac{(\ln m)^{\fC}}{\sqrt m}\right).
\end{align}
There exists a deterministic operator $\fK^{(r)}: \cX^r\mapsto \bR$ for $2\leq r\leq p+1$, it holds that with high probability
\begin{align*}
\left\|m^{r/2-1}K^{(r)}_0(x_1,x_2,\cdots, x_r)-\fK^{(r)}(x_1,x_2,\cdots,x_r)\right\|_{\infty}\lesssim (\ln m)^\fC.
\end{align*}
By an union bound over all $r$-tuple of data points $(x_{\al_1},x_{\al_2},\cdots, x_{\al_r})$, we conclude that with high probability
\begin{align*}
\left\|K^{(r)}_0-\frac{\fK^{(r)}}{m^{r/2-1}}\right\|_{\infty}\lesssim \frac{(\ln m)^\fC}{m^{(r-1)/2}}.
\end{align*}

%Since $K^{(r)}_0(x_1,x_2,\cdots,x_r)$ is products of inner product of vectors obtained from \eqref{e:initterm} by iteratively multiplying \eqref{e:mult1} and \eqref{e:mult2}. We conclude from the above discussion that $K^{(r)}_0(x_1,x_2,\cdots,x_r)$ is a product of A-variables, so we can take
%\begin{align}
%\fK^{(r)}(x_1,x_2,\cdots, x_r)=\lim_{m\rightarrow \infty}K^{(r)}_0(x_1,x_2,\cdots, x_r),
%\end{align}
%and the claim \eqref{e:Krlimit} follows.

If $2\nmid r$, then the degree of $a_0$ in $K_0^{(r)}$ is odd, we have $\bE[K_0^{(r)}]=0$. It is necessary that $\fK^{(r)}=0$. This finishes the proof of Proposition \ref{p:Krlimit}.

\end{proof}

\begin{corollary}\label{c:f0bound}
Under Assumptions \ref{a:sigmaasup} and \ref{a:nonlinear}, for any expression $\sfv(t)\in \fD_r$ with $0\leq r\leq 2p$, the following holds with high probability
%
%The vectors obtained from \eqref{e:initterm} by iteratively multiplying \eqref{e:mult1} and \eqref{e:mult2} are delocalized: with high probability
\begin{align*}
\|\sfv(0)\|_\infty\lesssim (\ln m)^\fC.
\end{align*}
\end{corollary}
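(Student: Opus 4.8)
The plan is to rerun the tensor-program bookkeeping from the proof of Proposition \ref{p:Krlimit}, but now for an arbitrary expression in $\fD_r$ rather than only for the subexpressions that arise while computing $K_0^{(r)}$. The structural claim I would prove, by induction on the number of $\diag$ operations (which equals $r$ for an element of $\fD_r$) and, within a fixed level, on the length $s\leq 4H-3$ of the word $\sfe_s\cdots\sfe_1\sfe_0$ evaluated from right to left, is that $\val$ of every partial product lies in the class $\LinMul(a_0,\bm 1,g_1,\dots,g_N)$ used in the proof of Claim \ref{c:newtau}: a linear combination, with A-variable coefficients, of at most $\OO(1)$ entrywise products, each itself a product of at most $\OO(1)$ factors drawn from $a_0$, $\bm 1$, the Gaussian vectors $g_j$ (projections of independent standard Gaussians), and bounded factors $\sigma^{(s)}(\cdot)$; here $N=\OO(1)$ since $H$ and $r\leq 2p$ are treated as constants, and the A-variable coefficients are $\OO(1)$ with high probability. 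Granting this, the bound is immediate: $\|a_0\|_\infty,\|g_j\|_\infty\lesssim\sqrt{\ln m}$ by standard Gaussian tail bounds, $\|\bm 1\|_\infty\leq 1$ and $\|\sigma^{(s)}(\cdot)\|_\infty\leq\fC_s=\OO(1)$ by Assumption \ref{a:sigmaasup}, and there are only $\OO(1)$ terms, so $\|\sfv(0)\|_\infty\lesssim(\ln m)^\fC$ with high probability.

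For the base of the induction, $\sfe_0\in\{a_0,\bm 1,\sqrt m\,x_\beta^{(\ell)}\}$: the first two lie in the class trivially, and $\sqrt m\,x_\beta^{(\ell)}=\sigma(W_0^{(\ell)}x_\beta^{(\ell-1)})$ is $\sigma$ applied entrywise to a $\LinMul$ of Gaussians, as already established in \eqref{e:first} and the subsequent discussion around \eqref{e:Avar}. For the inductive step, if $\sfe_j$ is a diagonal factor — $\sigma_\ell'(x_\beta)$, or $\diag(\sfd)$, or $\sigma_\ell^{(u+1)}(x_\beta)\diag(\sfd_1)\cdots\diag(\sfd_u)$, with the $\sfd_i$ lying in strictly lower-index sets $\fD_{s_i}$ — then its diagonal entries are bounded by $\fC_s=\OO(1)$ in the $\sigma^{(s)}$ cases and, by the inductive hypothesis applied to the $\sfd_i$, by $(\ln m)^\fC$ in general; multiplying the running vector by such a matrix is an entrywise multiplication, which keeps us in the class and inflates the $L_\infty$ bound by at most $(\ln m)^\fC$.

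The one genuinely nontrivial case is $\sfe_j=W_0^{(\ell)}/\sqrt m$ or $(W_0^{(\ell)})^\top/\sqrt m$, which I would handle by the conditioning Lemma \ref{l:decompose} exactly as in the proof of Claim \ref{c:newtau}: after conditioning on the previously recorded identities $Y^{(\ell)}=W_0^{(\ell)}Q^{(\ell)}$ and $X^{(\ell)}=(W_0^{(\ell)})^\top P^{(\ell)}$, the product equals an A-variable combination of earlier vectors plus $\fa\,\Pi_Q^\perp\tilde g$ with a fresh standard Gaussian $\tilde g$, $|\fa|\asymp 1$ and $\fa=\tilde\fa+\OO((\ln m)^\fC/\sqrt m)$; this is again in the class, and to bound its $L_\infty$ norm one writes $\Pi_Q^\perp\tilde g=\tilde g-Q(Q^\top Q)^+Q^\top\tilde g$ and uses that the columns of $Q$ (and of $P$) are $\LinMul$ vectors with $\|\cdot\|_\infty\lesssim(\ln m)^\fC/\sqrt m$ by the inductive hypothesis, that $Q^\top Q/m$ is non-degenerate in the limit, and that $\|\tilde g\|_\infty\lesssim\sqrt{\ln m}$, which together give $\|\Pi_Q^\perp\tilde g\|_\infty\lesssim\sqrt{\ln m}$. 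Finally I would take a union bound over the $\poly(n,m)$ many expressions in $\fD_0\cup\cdots\cup\fD_{2p}$ built on the $n$ data points — polynomially many because both the word length and the recursion depth are bounded by constants — which remains a high-probability event.

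The main obstacle is precisely this $W_0^{(\ell)}$-multiplication step: one has to carry the non-degeneracy of Claim \ref{c:newtau}(i) and the A-variable control of Claim \ref{c:newtau}(ii) over to the larger family $\fD_r$, which additionally admits the generator $\bm 1$ and more general $\diag(\sfd)$ factors, and one has to bound the sup-norm of the projected Gaussian $\Pi_Q^\perp\tilde g$, which is not automatically $\OO(\sqrt{\ln m})$ and becomes so only because the columns of $Q$ are sup-norm controlled and $Q^\top Q/m$ is well conditioned — both of which are themselves outputs of the same induction.
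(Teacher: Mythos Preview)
Your approach is correct and essentially the same as the paper's: both rerun the tensor-program bookkeeping of Proposition~\ref{p:Krlimit} to place $\sfv(0)$ in the class $\LinMul(a_0,g_1,g_2,\dots)$ and then read off the $L_\infty$ bound from the Gaussian tails. The paper's proof is extremely terse (it simply invokes ``the same argument'' and asserts that projections of Gaussians satisfy $\|g_i\|_\infty\lesssim(\ln m)^\fC$), whereas you spell out the induction, explicitly accommodate the extra generator $\bm 1$, and correctly isolate the one step the paper leaves implicit: that $\|\Pi_Q^\perp\tilde g\|_\infty\lesssim\sqrt{\ln m}$ is not automatic but follows because the low-rank correction $\Pi_Q\tilde g=Q(Q^\top Q)^+Q^\top\tilde g$ is small once the columns of $Q$ are sup-norm controlled and $Q^\top Q$ is well conditioned.
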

\begin{proof}
By the same argument as in the proof of Proposition \ref{p:Krlimit}, we can evaluate $\sfv(0)$ as combinations of standard Gaussian vectors 
\begin{align*}
\sfv(0)\in\LinMul(a_0, g_1,g_2,\cdots),
\end{align*}
where the set $\LinMul$ is as defined in Claim \ref{c:newtau}:
$\LinMul(a_0, g_1,g_2,\cdots)$
is the set of linear combinations of $\Mul(a_0, g_1,g_2,\cdots)$ with A-variables as coefficients;
$\Mul(a_0, g_1,g_2,\cdots)$ is the set of entrywise products of 
$a_0$, $\{\sigma^{(s)}(\fa_{i,i}g_i+\fa_{i,i-1}g_{i-1}+\cdots+\fa_{i,1}g_1)\}_{0\leq s\leq r+1, 1\leq i\leq rH}$ and $\{g_i\}_{i\geq rH+1}$. Since those vectors $g_i$ are projections of independent Gaussian vectors, with high probability $\|g_i\|_\infty\lesssim (\ln m)^{\fC}$. So is any vector in $\LinMul(a_0, g_1,g_2,\cdots)$.

\end{proof}

%To illustrate the tensor program introduced in \cite{}, before proving the general case \eqref{e:Krlimit}, we first prove \eqref{e:Krlimit} for $r=2$. We recall the expression of $K^{(2)}$ from \eqref{e:defK2},
%\begin{align}\begin{split}
% K^{(2)}_t(x_\al,x_\beta)  & = \sum_\ell   \left\langle
%\sigma_\ell'(x_\al)\frac{(W^{(\ell)})^\top}{\sqrt m}\cdots \sigma_H'(x_\al)\frac{a}{\sqrt m}, \sigma_\ell'(x_\beta)\frac{(W^{(\ell)})^\top}{\sqrt m}\cdots \sigma_H'(x_\beta)\frac{a}{\sqrt m}  \right\rangle \langle x^{(\ell-1)}_\al, x^{(\ell-1)}_\beta \rangle
%\end{split}\end{align}

\section{A Priori Estimates}
\label{s:prior}

In this section, we prove the claim \eqref{e:tprior2} in Theorem \ref{t:main1}.

\begin{proposition}[A priori $L^2$ bounds] \label{p:L2bound}
Under Assumptions \ref{a:sigmaasup} and \ref{a:nonlinear}, for any time $t\geq 0$, we have
\begin{align}\label{e:fL2}
\sum_{\beta=1}^n|f_\beta(t)-y_\beta|^2\leq \sum_{\beta=1}^n|f_\beta(0)-y_\beta|^2=\OO(n),
\end{align}
and with high probability with respect to the random initialization,
for $t\lesssim \sqrt m$ 
\begin{align}\label{e:L2norm}
\frac{1}{\sqrt m}\max\{\|W_t^{(1)}\|_{2\rightarrow 2}, \|W_t^{(2)}\|_{2\rightarrow 2},\|(W_t^{(2)})^\top\|_{2\rightarrow 2}\cdots, \|W_t^{(H)}\|_{2\rightarrow 2}, \|(W_t^{(H)})^\top\|_{2\rightarrow 2},\|a_t\|_{2}\}\lesssim 1.
\end{align}
\end{proposition}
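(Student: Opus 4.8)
The plan is to prove the two assertions separately, using \eqref{e:fL2} as an ingredient in the proof of \eqref{e:L2norm}. The bound \eqref{e:fL2} is the standard monotonicity of the empirical risk under gradient flow: along \eqref{e:gd}, $\frac{\rd}{\rd t}L(\theta_t)=\langle\nb_\theta L(\theta_t),\del_t\theta_t\rangle=-\|\nb_\theta L(\theta_t)\|_2^2\leq0$, so $\sum_\beta(f_\beta(t)-y_\beta)^2=2nL(\theta_t)\leq2nL(\theta_0)=\sum_\beta(f_\beta(0)-y_\beta)^2$; equivalently, writing $u(t)=(f_\beta(t)-y_\beta)_{1\leq\beta\leq n}$, the dynamic \eqref{e:dynamic} reads $\del_t u=-\frac1n K_t^{(2)}u$ with the matrix $[K_t^{(2)}(x_\al,x_\beta)]_{\al,\beta}$ a Gram matrix, hence positive semidefinite, so $\frac{\rd}{\rd t}\|u(t)\|_2^2\leq0$. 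The claim $\sum_\beta(f_\beta(0)-y_\beta)^2=\OO(n)$ then follows since, with high probability, $f_\beta(0)=\OO(1)$ (as recalled in the introduction) and the labels are $\OO(1)$.

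For \eqref{e:L2norm} I would run a bootstrap argument for the single scalar $M_t:=\frac1{\sqrt m}\max\{\|W_t^{(1)}\|_{2\rightarrow2},\dots,\|W_t^{(H)}\|_{2\rightarrow2},\|a_t\|_2\}$. First, by standard concentration of the operator norm of a Gaussian matrix and of $\|a_0\|_2^2$, with high probability $M_0\lesssim1$; this is the only genuinely probabilistic input (and where a mild condition such as $d\lesssim m$ enters). Next, an algebraic step: from $|\sigma(z)|\leq|\sigma(0)|+\fC_1|z|$ and $x_\beta^{(\ell)}=\frac1{\sqrt m}\sigma(W_t^{(\ell)}x_\beta^{(\ell-1)})$, induction on $\ell$ shows that $M_t\leq K$ implies $\|x_\beta^{(\ell)}\|_2\leq R_\ell(K)$ for a polynomial $R_\ell$ depending only on $|\sigma(0)|,\fC_1,H$ and $\fc^{-1}$ (using $\|x_\beta\|_2\leq\fc^{-1}$ from Assumption \ref{a:nonlinear}), and that the back-propagation vectors $v_\beta^{(\ell)}:=\sigma'_\ell(x_\beta)\frac{(W_t^{(\ell+1)})^\top}{\sqrt m}\cdots\sigma'_H(x_\beta)\frac{a_t}{\sqrt m}$ satisfy $\|v_\beta^{(\ell)}\|_2\leq\fC_1^{H-\ell+1}M_t^{H-\ell+1}$, since each $\sigma'_k$ is diagonal with entries bounded by $\fC_1$ (Assumption \ref{a:sigmaasup}).

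Plugging these into \eqref{e:derW}--\eqref{e:dera}, and bounding $\frac1n\sum_\beta|f_\beta(t)-y_\beta|\leq\frac1{\sqrt n}(\sum_\beta|f_\beta(t)-y_\beta|^2)^{1/2}=\OO(1)$ by Cauchy--Schwarz together with \eqref{e:fL2}, gives $\|\del_t W_t^{(\ell)}\|_{2\rightarrow2}\lesssim C(M_t)$ and $\|\del_t a_t\|_2\lesssim C(M_t)$ for a fixed polynomial $C$ (recall $H$ is constant). Since $t\mapsto W_t^{(\ell)},a_t$ are $C^1$ and the relevant norms are $1$-Lipschitz, this yields the integral inequality $M_t\leq M_0+\frac1{\sqrt m}\int_0^t C(M_s)\,\rd s$. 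On the high-probability event $\{M_0\leq C_0\}\cap\{\sum_\beta(f_\beta(0)-y_\beta)^2\leq C_1 n\}$, setting $\tau=\inf\{t:M_t>2C_0\}$, for $t<\tau$ the integrand is $\leq C(2C_0)$, whence $M_t\leq C_0+C(2C_0)t/\sqrt m<2C_0$ for $t\leq\fc\sqrt m$ with $\fc:=C_0/(2C(2C_0))$; this forces $\tau\geq\fc\sqrt m$, so $M_t\lesssim1$ for $0\leq t\leq\fc\sqrt m$, which is \eqref{e:L2norm}. The same bound also prevents finite-time blow-up of the flow before $\fc\sqrt m$, so the statement is not vacuous. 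The proof is mostly bookkeeping; the main thing to get right is the coupling — weight norms, layer outputs and residuals all feed into one another — which is handled by reducing everything to the scalar $M_t$ and by using the unconditional estimate \eqref{e:fL2} to decouple the residuals. I expect the only mildly delicate point to be making the continuity/bootstrap argument (flow existence up to $\fc\sqrt m$, use of Dini derivatives) fully rigorous; none of the individual estimates is hard.
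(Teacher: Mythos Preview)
Your proposal is correct and follows essentially the same route as the paper: both reduce to the scalar $\xi(t)=M_t$, use the same inductive bound on $\|x^{(\ell)}\|_2$ via the Lipschitz property of $\sigma$, the same estimate on the back-propagation vectors, and the same use of \eqref{e:fL2} with Cauchy--Schwarz to bound $\|\partial_t W_t^{(\ell)}\|_{2\to2}$ and $\|\partial_t a_t\|_2$. The only difference is cosmetic: the paper integrates the resulting differential inequality $\partial_t\xi(t)\lesssim m^{-1/2}\xi(t)^H$ explicitly (separating the cases $H=1$ and $H\geq2$), whereas you close it by a continuity/bootstrap argument; both are standard and yield the same conclusion.
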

%\cob [this bound seems to be optimal in the range of $t$?  Even the case of $H=2$ case is hard to improve, I guess? ] \nc {\cor I agree, it should be optimal, unless there is some cancellation in the sum $\sum_{\beta=1}^n$ for different data points.}

\begin{proof}[Proof of Proposition \ref{p:L2bound}]
From the defining relation \eqref{e:defK2} of $K_t^{(2)}(\cdot,\cdot)$, it is non-negative definite. Using \eqref{e:descent}, we get
\begin{align*}
\del_t \sum_{\beta=1}^n\|f_\beta(t)-y_\beta\|^2=-\sum_{\al, \beta=1}^nK_t^{(2)}(x_\al, x_\beta)(f_\al(t)-y_\al)(f_\beta(t)-y_\beta)\leq 0,
\end{align*}
and \eqref{e:fL2} follows
\begin{align*}
\sum_{\beta=1}^n\|f_\beta(t)-y_\beta\|^2\leq \sum_{\beta=1}^n\|f_\beta(0)-y_\beta\|^2=\OO(n),
\end{align*}

To prove \eqref{e:L2norm}, we define
\begin{align*}
\xi(t)=\frac{1}{\sqrt m}\max\{\|W_t^{(1)}\|_{2\rightarrow 2}, \|W_t^{(2)}\|_{2\rightarrow 2},\|(W_t^{(2)})^\top\|_{2\rightarrow 2}\cdots, \|W_t^{(H)}\|_{2\rightarrow 2}, \|(W_t^{(H)})^\top\|_{2\rightarrow 2},\|a_t\|_{2}\}.
\end{align*}
We notice that for $t=0$, $W_0^{(1)}$ is an $m\times d$ random gaussian matrix, $W_0^{(2)}, W_0^{(3)},\cdots, W_0^{(H)}$ are $m\times m$ random gaussian matrices, and $a_0$ is a gaussian vector of length $m$. { From random matrix theory \cite{MR2963170}}, we have that,  with high probability,  
\begin{align}\label{e:initbound}
\xi(0)\lesssim 1.
\end{align}

In the following we derive an upper bound of $\del_t\xi(t)$, which combining with \eqref{e:initbound} gives us the desired bound \eqref{e:L2norm}. For any $\ell\geq 1 $, we have 
\begin{align*}\begin{split}
\|x^{(\ell)}\|_2
&=\frac{1}{\sqrt m}\|\sigma(W^{(\ell)} x^{(\ell-1)})\|_2
\leq  \frac{1}{\sqrt m} \sqrt{\sum_{i=1}^m(|\sigma(0)|+\fc_1 (W^{(\ell)} x^{(\ell-1)})_i)^2}\\
&\leq |\sigma(0)|+\frac{\fc_1}{\sqrt m}\|W^{(\ell)} x^{(\ell-1)}\|_2
\leq |\sigma(0)|+\fc_1\xi(t)\|x^{(\ell-1)}\|_2,
\end{split}\end{align*}
where we used Assumption \ref{a:sigmaasup} that $\sigma$ is $\fc_1$-Lipschitz, and Assumption \ref{a:nonlinear} $\|x\|_2\lesssim 1$. Inductively, we have the following estimate
\begin{align}\label{e:xlbound}
\|x^{(\ell)}\|_2\leq \fc_1^{\ell} \xi(t)^{\ell}\|x\|_2+|\sigma(0)|(1+\fc_1\xi(t)+\cdots+\fc_1^{\ell-1} \xi(t)^{\ell-1})\lesssim \fc_1^\ell \xi(t)^\ell.
\end{align}

Using \eqref{e:derW},  \eqref{e:dera}  and \eqref{e:xlbound}, we have the following  bounds: 
\begin{align}\begin{split}\label{e:derWbound}
&\phantom{{}={}}\del_t \|W^{(\ell)}_t\|_{2\rightarrow 2}
\leq \frac{1}{n}\sum_{\beta=1}^n\left\|\sigma'_\ell(x_\beta)\frac{(W_t^{(\ell+1)})^\top}{\sqrt m}\cdots \sigma'_{H}(x_\beta)\frac{a_t}{\sqrt m}\right\|_2 \|x_\beta^{(\ell-1)}\|_2|f_\beta(t)-y_\beta|\\
&
\lesssim \frac{1}{n}\sum_{\beta=1}^n \fc_1^{H} \xi(t)^{H} |f_\beta(t)-y_\beta|
\lesssim \fc_1^{H} \xi(t)^{H}\sqrt{\frac{1}{n}\sum_{\beta=1}^n  |f_\beta(t)-y_\beta|^2}
\lesssim \fc_1^{H} \xi(t)^{H},
\end{split}\end{align}
and
\begin{align}\label{e:derabound}\begin{split}
&\phantom{{}={}}\del_t \|a(t)\|_2\leq \frac{1}{n}\sum_{\beta =1}^n \|x_\beta^{(H)}\|_2|f_\beta(t)-y_\beta|
\lesssim \frac{1}{n}\sum_{\beta=1}^n \fc_1^{H} \xi(t)^{H}(1+\|x_\beta\|_2) |f_\beta(t)-y_\beta|\\
&\lesssim \fc_1^{H} \xi(t)^{H}\sqrt{\frac{1}{n}\sum_{\beta=1}^n  |f_\beta(t)-y_\beta|^2}
\lesssim \fc_1^{H} \xi(t)^{H},
\end{split}\end{align}
where we used the  AM-GM inequality and \eqref{e:fL2}. And similarly, 
\begin{align}\label{e:derWTbound}
\del_t \|(W^{(\ell)}_t)^\top\|_{2\rightarrow 2}\lesssim \fc_1^{H} \xi(t)^{H}.
\end{align}

The estimates \eqref{e:derWbound}, \eqref{e:derabound} and \eqref{e:derWTbound} together implies the upper bounds for $\del_t \xi(t)$: there exists some large constant $\fC>0$
\begin{align*}
\del_t\xi(t)\leq \frac{\fC \fc_1^{H}}{\sqrt m} \xi(t)^H.
\end{align*}
If $H=1$, we have 
\begin{align*}
\xi(t)\leq e^{\fC \fc_1^H t/\sqrt m}\xi(0),
\end{align*}
and for $H\geq 2$, we get
\begin{align*}
\xi(t)\leq \left(\xi(0)^{H-1}-\fC\fc_1^Ht/\sqrt m\right)^{-1/(H-1)}.
\end{align*}
In both cases, we have that $\xi(t)\lesssim 1$ provided that $t\lesssim \sqrt m$, where the implicit constants depend on the depth $H$. This finishes the proof of Proposition \ref{p:L2bound}.
\end{proof}

As we have shown in Section \ref{s:outline} \eqref{e:form}, the kernel $K^{(r)}_t(x_{\al_1}, x_{\al_2}, \cdots, x_{\al_r})$ is a sum of terms in the form
\begin{align}\label{e:formcopy}
\frac{1}{m^{r/2-1}}\prod_{j=1}^s \frac{\langle \sfv_{2j-1}(t),\sfv_{2j}(t)\rangle}{m}, \quad 1\leq s\leq r,\quad \sfv_j(t)\in \fD_0\cup \fD_1\cup \cdots \cup \fD_{r-2}.
\end{align}
In the following we derive an upper bound of $\|\diag(\sff_t)\|_{2\rightarrow 2}=\|\sff_t\|_\infty$, where $\diag(\cdots)$ is the diagonalization of a vector, for any $\sff_t\in \fD_0\cup\fD_1\cup\cdots\cup \fD_r$. 

\begin{proposition}\label{p:upperb}
We assume Assumptions \ref{a:sigmaasup} and \ref{a:nonlinear}.
Fix time $t\geq 0$ and $r\geq 0$. Suppose that  for all expressions $\sff_t\in \fD_0\cup\fD_1\cup
\cdots\cup \fD_r$, the following holds
\begin{align}\label{e:smallb}
\|\diag(\sff_t)\|_{2\rightarrow 2}\leq M, 
\end{align}
for some constant $M \ge 1$. \nc 
Then for any $\sff_t\in \fD_s$ with $0\leq s\leq 2r+2$, the following holds
\begin{align}\label{e:bigb}
\|\diag(\sff_t)\|_{2\rightarrow 2}\leq \|\sff_t\|_{2}  \lesssim M^{s}\sqrt m.
\end{align}
\end{proposition}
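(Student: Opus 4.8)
The first inequality is trivial: for any $v\in\bR^m$, $\|\diag(v)\|_{2\rightarrow 2}=\|v\|_\infty\le\|v\|_2$, so it suffices to prove $\|\sff_t\|_2\lesssim M^s\sqrt m$ for $\sff_t\in\fD_s$, $0\le s\le 2r+2$. The plan is an induction on the level $s$ in which one carries along, besides this $L^2$ bound, the \emph{auxiliary} $L^\infty$ \emph{bound} $\|\sff_t\|_\infty\lesssim M^{s+1}$. The $L^\infty$ bound is not optional: an expression in $\fD_s$ may contain a factor $\diag(\sfd)$ with $\sfd\in\fD_{s'}$ and $r<s'\le s-1$, and then \eqref{e:smallb} says nothing about $\|\diag(\sfd)\|_{2\rightarrow 2}=\|\sfd\|_\infty$; since it is this sup-norm — cheaper than $\|\sfd\|_2$ by a factor $\sqrt m$ — that enters the $L^2$ estimate, it must be controlled by its own recursion.

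We use throughout: Proposition \ref{p:L2bound} (hence, implicitly, $t\lesssim\sqrt m$), which gives $\|W_t^{(\ell)}/\sqrt m\|_{2\rightarrow 2},\|(W_t^{(\ell)})^\top/\sqrt m\|_{2\rightarrow 2}\lesssim1$ and $\|a_t\|_2,\|\sqrt m\,x^{(\ell)}_\beta\|_2\lesssim\sqrt m$; Assumption \ref{a:sigmaasup}, which gives $\|\sigma^{(u)}_\ell(x_\beta)\|_{2\rightarrow 2}=\|\sigma^{(u)}(\cdot)\|_\infty\le\fC_u$; the elementary bounds $\|\diag(v)\|_{2\rightarrow 2}=\|v\|_\infty$ and $\|(W/\sqrt m)v\|_\infty\le(\|W\|_{2\rightarrow 2}/\sqrt m)\|v\|_2$; and the bookkeeping that in an expression $\sfe_q\cdots\sfe_1\sfe_0\in\fD_s$ the $\diag$-operation counts of $\sfe_1,\dots,\sfe_q$ sum to exactly $s$, a factor $\diag(\sfd)$ with $\sfd\in\fD_{s'}$ contributing $s'+1$ and a factor $\sigma^{(u+1)}_\ell(x_\beta)\diag(\sfd_1)\cdots\diag(\sfd_u)$ with $\sfd_i\in\fD_{s_i}$ (all $s_i\le r$) contributing $u+\sum_i s_i$. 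Since any such expression has at most $4H-3$ factors and $H,r,\fC_u$ are treated as constants, the implied constants stay universal.

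For $0\le s\le r$ both bounds follow at once from \eqref{e:smallb}, namely $\|\sff_t\|_\infty\le M\le M^{s+1}$ and hence $\|\sff_t\|_2\le\sqrt m\,\|\sff_t\|_\infty\le M^s\sqrt m$ when $s\ge1$, while for $s=0$ there are no $\diag$-carrying factors and $\|\sff_t\|_2\le C_H\|\sfe_0\|_2\lesssim\sqrt m$ directly. For the inductive step fix $r<s\le2r+2$, assume both bounds at every smaller level, and write $\sff_t=\sfe_q\cdots\sfe_0\in\fD_s$. \emph{$L^2$ bound.} Estimate $\|\sff_t\|_2\le C_H\big(\prod_{\sfe_j\ \diag\text{-carrying}}\|\sfe_j\|_{2\rightarrow 2}\big)\|\sfe_0\|_2$, with $C_H$ absorbing the weight-matrix and $\sigma'$-factors. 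For a factor $\diag(\sfd)$, $\sfd\in\fD_{s'}$, we have $\|\sfe_j\|_{2\rightarrow 2}=\|\sfd\|_\infty\lesssim M^{s'+1}$ — by \eqref{e:smallb} if $s'\le r$, by the inductive $L^\infty$ bound if $s'>r$; for a factor $\sigma^{(u+1)}_\ell(x_\beta)\diag(\sfd_1)\cdots\diag(\sfd_u)$ (all $\sfd_i$ of level $\le r$), $\|\sfe_j\|_{2\rightarrow 2}\le\fC_{u+1}\prod_i\|\sfd_i\|_\infty\lesssim M^u\le M^{u+\sum_is_i}$. So each $\diag$-carrying factor has spectral norm $\lesssim M^{(\text{its }\diag\text{-count})}$, the product over all of them is $\lesssim M^s$, and $\|\sff_t\|_2\lesssim M^s\sqrt m$. \emph{$L^\infty$ bound.} Peel from the left the maximal initial run of $\sigma'$-factors (each costs $\le\fC_1$); the leftmost remaining factor is then a weight matrix — the suffix to its right lies in $\fD_s$, so $\|(W/\sqrt m)v\|_\infty\lesssim m^{-1/2}\cdot M^s\sqrt m=M^s\le M^{s+1}$ by the $L^2$ bound at level $s$ just proved — or $\diag(\sfd)$ with $\sfd\in\fD_{s'}$, $s'<s$ — the suffix to its right lies in $\fD_{s-s'-1}$, so $\|\diag(\sfd)v\|_\infty\le\|\sfd\|_\infty\|v\|_\infty\lesssim M^{s'+1}M^{(s-s'-1)+1}=M^{s+1}$ by the inductive $L^\infty$ bounds at levels $s'<s$ and $s-s'-1<s$ — or a $\sigma^{(u+1)}_\ell(x_\beta)\diag(\sfd_1)\cdots\diag(\sfd_u)$ factor, handled identically using $\|\sfd_i\|_\infty\le M$ (the base-vector case arises only when $s=0$). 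This closes the induction and gives \eqref{e:bigb}.

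The delicate points are: recognizing that the plain $L^2$ induction is too lossy — passing through $\diag(\cdot)$ of a high-level expression costs a spurious $\sqrt m$ unless one has sup-norm control — and repairing it with the simultaneous estimate $\|\sff_t\|_\infty\lesssim M^{s+1}$; and fixing the internal order within a level, so that the $L^2$ bound at level $s$ is available when the $L^\infty$ argument at level $s$ peels off a leading weight matrix. With the strengthened hypothesis, this ordering, and the $\diag$-count bookkeeping pairing each factor's spectral norm with its contribution to $s$ all in place, every remaining inequality is one line.
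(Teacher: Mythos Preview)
Your approach is valid and genuinely different from the paper's. The paper does \emph{not} carry an auxiliary $L^\infty$ hypothesis. Instead it exploits the constraint $s\le 2r+2$ directly: since the $\diag$-counts of the factors sum to $s\le 2r+2$, at most one factor $\sfe_j$ can have $\diag$-count exceeding $r+1$. All the ``good'' factors are then bounded in spectral norm by $M^{s_j}$ using only the assumption \eqref{e:smallb}; for the single ``bad'' factor (if any), the paper writes $\|\sfe_j\sfe_{j-1}\cdots\sfe_0\|_2\le\|\sfd\|_2\,\|\sfe_{j-1}\cdots\sfe_0\|_\infty$, bounds $\|\sfd\|_2$ by the inductive $L^2$ hypothesis at a strictly lower level, and bounds the suffix $\sfe_{j-1}\cdots\sfe_0$ --- whose $\diag$-count is forced to be at most $r$ --- again by the assumption \eqref{e:smallb}. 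So the paper gets away with a single induction hypothesis at the cost of a pigeonhole argument tied to the specific ceiling $2r+2$; your two-hypothesis induction is cleaner in that each factor is handled uniformly regardless of its level, and in principle extends past $s=2r+2$ (implied constants aside), but you pay with the extra $L^\infty$ recursion.

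One correction: your parenthetical ``(all $s_i\le r$)'' for the $\sigma^{(u+1)}_\ell(x_\beta)\diag(\sfd_1)\cdots\diag(\sfd_u)$ factors is wrong --- for $\sff_t\in\fD_s$ with $s>r+1$ the $\sfd_i$ may live in any $\fD_{s_i}$ with $s_i\le s-1$, not just $s_i\le r$. Consequently the bound $\prod_i\|\sfd_i\|_\infty\lesssim M^u$ via \eqref{e:smallb} is not available in general. The fix is exactly what you already do for plain $\diag(\sfd)$ factors: use the inductive $L^\infty$ bound $\|\sfd_i\|_\infty\lesssim M^{s_i+1}$, giving $\prod_i\|\sfd_i\|_\infty\lesssim M^{u+\sum_i s_i}$, which is the target exponent (the $\diag$-count of the factor) and the rest of your argument goes through unchanged. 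The same repair is needed in your $L^\infty$ step where you write ``handled identically using $\|\sfd_i\|_\infty\le M$''.
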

\begin{proof}[Proof of Proposition \ref{p:upperb}]
%
%\cor [this proof is supposed to be relatively trivial. but once I read it line by line, I cannot make sense of it. 
%the def of most  objects are not clear.  It suggests that we need to define objects like $\fD_u$ precisely. yau ] \nc 

We notice that $\|\diag(\sff_t)\|_{2\rightarrow 2}$ equals the $L_\infty$ norm of the vector $\sff_t$. The $L_\infty$ norm of $\sff_t$ is bounded by its $L_2$ norm, 
\begin{align*}
\|\diag(\sff_t)\|_{2\rightarrow 2}=\|\sff_t\|_\infty\leq \|\sff_t\|_2,
\end{align*}
which gives the first inequality of \eqref{e:bigb}. Notice that the last  inequality above 
is not optimal and it costs a factor $\sqrt m$  generically. 

For $\sff_t\in \fD_s$ with some $s\leq 2r+2$. We can write it as $\sff_t=\sfe_k \sfe_{k-1} \cdots \sfe_1\sfe_0$ where $0\leq k\leq 4H-3$,
\begin{align*}
\sfe_0\in \left\{a_t, \bm1,\{\sqrt m x^{(1)}_\beta, \sqrt m x^{(2)}_\beta, \cdots, \sqrt m x^{(H)}_\beta\}_{1\leq \beta\leq n}\right\},
\end{align*}
and for $1\leq j\leq k$, $\sfe_j$ belongs to one of the sets
\begin{align}
\label{e:Wt}& \left\{\left\{\frac{W_t^{(2)}}{\sqrt m},\frac{(W_t^{(2)})^\top}{\sqrt m},\cdots, \frac{W_t^{(H)}}{\sqrt m},\frac{(W_t^{(H)})^\top}{\sqrt m}\right\} , \{\sigma_1'(x_\beta), \sigma_2'(x_\beta), \cdots, \sigma_H'(x_\beta)\}_{1\leq \beta\leq n}\right\},\\
\label{e:sig}&\left\{\diag(\sfd),\quad \sfd\in \fD_0\cup\fD_1\cup\cdots\cup \fD_{s-1}\right\},\\
\begin{split}
\label{e:diagd}&\left\{\sigma_\ell^{(u+1)}(x_\beta)\diag(\sfd_1)\diag(\sfd_2)\cdots \diag(\sfd_u): 1\leq \ell\leq H, \right. \\
&\left.\phantom{\sfe_j\in \left\{\sigma_\ell^{(u+1)}\right\}}1\leq \beta\leq n, 1\leq u\leq s-1, \sfd_1,\sfd_2,\cdots,\sfd_u\in \fD_0\cup \fD_1\cup\cdots\cup \fD_{s-1}\right\}.
\end{split}
\end{align}
Moreover, 
%the number of $\sfe_j$ satisfying \eqref{e:notd} is at most $4H-3$, and 
the total number of $\diag$ operations in the expression $\sff_t=\sfe_k\sfe_{k-1}\cdots \sfe_1\sfe_0$ is exactly $s$. We remark that  $x^{(\ell)}_\beta$ depends on time $t$.
%
%\begin{align}
%\sfe_0\in \left\{a_t, \bm1,\{\sqrt m x^{(1)}_\beta, \sqrt m x^{(2)}_\beta, \cdots, \sqrt m x^{(H)}_\beta\}_{\beta\in\{\al_1,\al_2,\cdots,\al_r\}}\right\},
%\end{align}
%and for $1\leq j\leq k$, $\sfe_j$ belongs to one of the sets
%\begin{align}
%\label{e:Wt}& \left\{W_t^{(2)}/\sqrt m,(W_t^{(2)})^\top/\sqrt m,\cdots, W_t^{(H)}/\sqrt m,(W_t^{(H)})^\top/\sqrt m\right\},\\
%\label{e:sig}&\{\sigma_1^{(v)}(x_\beta), \sigma_2^{(v)}(x_\beta), \cdots, \sigma_H^{(v)}(x_\beta)\}_{1\leq v\leq u,\beta\in\{\al_1,\al_2,\cdots,\al_r\}}\\
%\label{e:diagd}&\left\{\diag(\sfd),\quad \sfd\in \fD_0\cup\fD_1\cup\cdots\cup \fD_{u-1}\right\}.
%\end{align}
%Moreover, 
%%the number of $\sfe_j$ satisfying \eqref{e:notd} is at most $4H-3$, and 
%the total number of $\diag$ operations in $\sfe_k\sfe_{k-1}\cdots\sfe_1\sfe_0$ is exactly $u$.
%We notice that 
%\begin{align}\label{e:LinfL2}
%\|\diag(\sff_t)\|_{2\rightarrow 2} =\|\sff_t\|_\infty=\|\sfe_k\sfe_{k-1}\cdots\sfe_1\sfe_0\|_\infty\leq \|\sfe_k\sfe_{k-1}\cdots\sfe_1\sfe_0\|_2=\|\sff_t\|_2.
%\end{align}
%The first inequality in \eqref{e:bigb} follows.

In the following we prove by induction on $s$ that 
\begin{align}\label{e:eL2bound}
\|\sff_t\|_{2}=\|\sfe_k\sfe_{k-1}\cdots\sfe_1\sfe_0\|_2\lesssim M^{s} \sqrt m,
\end{align}
which gives the claim \eqref{e:bigb}.
%Then the claim \eqref{e:bigb} from combining \eqref{e:eL2bound} with \eqref{e:LinfL2}.

For $s=0$, $\sff_t$ does not contain $\diag$ operations, and all the $\sfe_j$ belong to \eqref{e:Wt}. In this case, thanks to Assumption \ref{a:sigmaasup} and Proposition \ref{p:L2bound}, with high probability with respect to the random 
initialization,  $\|\sfe_0\|_2\lesssim \sqrt m$ and $\|\sfe_{j}\|_{2\rightarrow 2}\lesssim 1$ for all $1\leq j\leq k$. Therefore, we have that 
\begin{align*}
\|\sff_t\|_{2} =\|\sfe_k\sfe_{k-1}\cdots\sfe_1\sfe_0\|_2
\leq \|\sfe_k\|_{2\rightarrow 2}\|\sfe_{k-1}\|_{2\rightarrow 2}\cdots\|\sfe_1\|_{2\rightarrow 2}\|\sfe_0\|_2
\lesssim \sqrt m.
\end{align*}
For $1\leq s\leq r$, by our assumption \eqref{e:smallb}
\begin{align*}
 \|\sff_t\|_2=\|\sfe_k\sfe_{k-1}\cdots\sfe_1\sfe_0\|_2\leq \sqrt m\|\sfe_k\sfe_{k-1}\cdots\sfe_1\sfe_0\|_\infty=\sqrt m \|\diag(\sff_t)\|_{2\rightarrow 2}\leq M\sqrt m\leq M^s \sqrt m.
\end{align*}
Thus the claim \eqref{e:eL2bound} holds for any $s\leq r$.

In the following we assume that \eqref{e:eL2bound} holds for $1,2,\cdots, s-1$ and prove it for $s$. 
For each $1\leq j\leq k$, we denote the number of $\diag$ operations in $\sfe_j$ by  $s_j$.  Then the total number of $\diag$ operations in $\sff_t$ is $s_1+s_2+\cdots+s_k=s\leq 2r+2$. As an easy consequence, $s_j\leq s\leq 2r+2$ for any $1\leq j\leq k$. For each term $\sfe_j$, there are several cases:
\begin{enumerate}[(i)]
\item  $\sfe_j$ belongs to \eqref{e:Wt}, then it does not contain any $\diag$ operation, and we have $s_j=0$. In this case, we have proven that $\|\sfe_j\|_{2\rightarrow 2}\lesssim 1$.
\item $\sfe_j=\diag(\sfd)$ belongs to \eqref{e:sig} and $s_j\leq r+1$. Since the expression $\sfd$ contains $s_j-1\leq r$ 
$\diag$ operations,  $\sfd$ also belongs to $ \fD_{s_j-1}\subset \fD_{0}\cup\fD_{1}\cup \cdots \cup\fD_{r}$. 
   By the assumption \eqref{e:smallb}
\begin{align*}
\|\sfe_j\|_{2\rightarrow 2}=\|\diag(\sfd)\|_{2\rightarrow 2}\leq M.
\end{align*}
\item $\sfe_j=\diag(\sfd)$ belongs to \eqref{e:sig} and $s_j>r+1$. In this case we still have that $s_j-1\leq s-1$. And by our induction assumption \eqref{e:eL2bound} and $\sfd\in \fD_{s_j-1}$, it holds
\begin{align}\label{e:estd}
%\|\sfe_j\|_{2\rightarrow 2}=\|\diag(\sfd)\|_{2\rightarrow 2}=\|\sfd\|_\infty\leq 
   \|\sfd\|_{2} %\cob = \| \sfe_j  \bm1  \|_2 \nc 
   \lesssim  M^{s_j-1}\sqrt m.
\end{align}
\item  $\sfe_j=\sigma_\ell^{(u+1)}(x_\beta)\diag(\sfd_1)\diag(\sfd_2)\cdots \diag(\sfd_u)$ belongs to \eqref{e:diagd}, and each of those subexpressions $\diag(\sfd_1), \diag(\sfd_2), \cdots, \diag(\sfd_u)$ contains at most $r+1$ $\diag$ operations. In this case we have $u\leq s_j$ and $\sfd_1, \sfd_2,\cdots,\sfd_u\in \fD_{0}\cup\fD_{1}\cup \cdots \cup\fD_{r}$. By the assumption \eqref{e:smallb} and Assumption \ref{a:sigmaasup}
\begin{align*}\begin{split}
\|\sfe_j\|_{2\rightarrow 2}
&=\|\sigma_\ell^{(u+1)}(x_\beta)\|_{2\rightarrow 2}\|\diag(\sfd_1)\|_{2\rightarrow 2}\|\diag(\sfd_2)\|_{2\rightarrow 2}\cdots \|\diag(\sfd_u)\|_{2\rightarrow 2}\\
&\leq \fc_{u+1}M^u\lesssim M^{s_j}
\end{split}\end{align*}

\item  $\sfe_j=\sigma_\ell^{(u+1)}(x_\beta)\diag(\sfd_1)\diag(\sfd_2)\cdots \diag(\sfd_u)$ belongs to \eqref{e:diagd},  and some of those subexpressions $\diag(\sfd_1), \diag(\sfd_2), \cdots, \diag(\sfd_u)$ contain more than $r+1$ $\diag$ operations. In this case $s_j> r+1$. Since the total number of $\diag$ operations in $\sfe_j$ is $s_j\leq 2r+2$, exact one of $\diag(\sfd_1), \diag(\sfd_2), \cdots, \diag(\sfd_u)$ contains more than $r+1$ $\diag$ operations. Say it is $\diag(\sfd_v)$. For any $1\leq i\neq v\leq u$, $\diag(\sfd_i)$ contains at most $r+1$ $\diag$ operations and $\sfd_i\in \fD_{0}\cup\fD_{1}\cup \cdots \cup\fD_{r}$. By the assumption \eqref{e:smallb} 
\begin{align}\begin{split}\label{e:oterm}
\|\sfd_i\|_\infty=\|\diag(\sfd_i)\|_{2\rightarrow 2}\leq M.
\end{split}\end{align}
For $\sfd_v$, it contains at most  $s_j-u\leq s-1$ $\diag$ operations. Thus by our induction assumption \eqref{e:eL2bound}, it holds
\begin{align}\label{e:vterm}
%\|\diag(\sfd_v)\|_{2\rightarrow 2}=\|\sfd_v\|_\infty\leq 
\|\sfd_v\|_{2}\lesssim  M^{s_j-u}\sqrt m.
\end{align}
%It follows from combining the estimates \eqref{e:oterm} and\eqref{e:vterm}, we have
%\begin{align}\begin{split}
%\|\sfe_j\|_{2\rightarrow 2}
%&=\|\sigma_\ell^{(u+1)}(x_\beta)\|_{2\rightarrow 2}\|\diag(\sfd_1)\|_{2\rightarrow 2}\|\diag(\sfd_2)\|_{2\rightarrow 2}\cdots \|\diag(\sfd_u)\|_{2\rightarrow 2}\\
%&\lesssim \fc_{u+1}M^{u-1} M^{s_j-u}\sqrt m\lesssim M^{s_j-1}\sqrt m.
%\end{split}\end{align}

\end{enumerate}

By our assumption, the total number of $\diag$ operations in $\sff_t=\sfe_k\sfe_{k-1}\cdots\sfe_0$ is  $s_1+s_2+\cdots+s_k=s\leq 2r+2$. At most one of those $s_j$ is bigger than $r+1$. Especially at most one of those $\sfe_j$ belongs to cases (iii) or (v). 

If none of those $\sfe_j$ belongs to cases (iii) or (v), then with the bound $\|\sfe_0\|_2\lesssim \sqrt m$, we have 
\begin{align}\label{e:cc1}
\|\sfe_k\sfe_{k-1}\cdots\sfe_1\sfe_0\|_2
\leq \|\sfe_k\|_{2\rightarrow 2}\|\sfe_{k-1}\|_{2\rightarrow 2}\cdots\|\sfe_1\|_{2\rightarrow 2}\|\sfe_0\|_2
\lesssim \prod_{i=1}^k M^{s_i} \sqrt m=M^{s}\sqrt m.
\end{align}

If for some $1\leq j\leq k$, $\sfe_j$ belongs to the case (iii), we write
\begin{align}\begin{split}\label{e:b1}
\|\sfe_k\sfe_{k-1}\cdots\sfe_1\sfe_0\|_2
&\leq  \|\sfe_k\|_{2\rightarrow 2}\|\sfe_{k-1}\|_{2\rightarrow 2}\cdots\|\sfe_{j+1}\|_{2\rightarrow 2}
\|\sfe_j\sfe_{j-1}\cdots \sfe_1\sfe_0\|_{2}\\
&\leq  \|\sfe_k\|_{2\rightarrow 2}\|\sfe_{k-1}\|_{2\rightarrow 2}\cdots\|\sfe_{j+1}\|_{2\rightarrow 2}
\|\sfd\|_2\|\sfe_{j-1}\sfe_{j-2}\cdots \sfe_1\sfe_0\|_{\infty}.
\end{split}\end{align}
The expression $\sfe_{j-1}\sfe_{j-2}\cdots \sfe_1\sfe_0$ contains at most $s_1+s_2+\cdots+s_{j-1}\leq r$ $\diag$ operators, and $j-1\leq k-1<4H-3$. Therefore, the expression $\sfe_{j-1}\sfe_{j-2}\cdots \sfe_1\sfe_0$ is in the set $\fD_{s_1+s_2+\cdots+s_{j-1}}$, which is contained in $\fD_0\cup\fD_1\cup\cdots\cup\fD_{r}$. Thus by our assumption \eqref{e:smallb}, 
\begin{align}\label{e:b2}
\|\sfe_{j-1}\sfe_{j-2}\cdots \sfe_1\sfe_0\|_\infty=\|\diag(\sfe_{j-1}\sfe_{j-2}\cdots \sfe_1\sfe_0)\|_{2\rightarrow2}
\leq M.
\end{align}
We estimate $\|\sfd\|_2$ using \eqref{e:estd}, and estimate $\|\sfe_k\|_{2\rightarrow 2} \|\sfe_{k-1}\|_{2\rightarrow 2}\cdots\|\sfe_{j+1}\|_{2\rightarrow 2}$ by (i), (ii) and (iv).
By plugging \eqref{e:estd}, \eqref{e:b2} into \eqref{e:b1}, we get
\begin{align*}\begin{split}
&\phantom{{}={}}\|\sfe_k\|_{2\rightarrow 2}\|\sfe_{k-1}\|_{2\rightarrow 2}\cdots\|\sfe_{j+1}\|_{2\rightarrow 2}
\|\sfd\|_2\|\sfe_{j-1}\sfe_{j-2}\cdots \sfe_1\sfe_0\|_{\infty}\\
&\lesssim \prod_{i=j+1}^k M^{s_i} (M^{s_j-1}\sqrt m) M
\lesssim M^{s}\sqrt m.
\end{split}\end{align*}

If for some $1\leq j\leq k$, $\sfe_j$ belongs to the case (iii), we write
\begin{align}\begin{split}\label{e:bb1}
&\phantom{{}={}}\|\sfe_k\sfe_{k-1}\cdots\sfe_1\sfe_0\|_2
\leq  \|\sfe_k\|_{2\rightarrow 2}\|\sfe_{k-1}\|_{2\rightarrow 2}\cdots\|\sfe_{j+1}\|_{2\rightarrow 2}
\|\sfe_j\sfe_{j-1}\cdots \sfe_1\sfe_0\|_{2}\\
&= \|\sfe_k\|_{2\rightarrow 2}\|\sfe_{k-1}\|_{2\rightarrow 2}\cdots\|\sfe_{j+1}\|_{2\rightarrow 2}
\|\sigma_\ell^{(u+1)}(x_\beta)\diag(\sfd_1)\diag(\sfd_2)\cdots \diag(\sfd_u)\sfe_{j-1}\cdots \sfe_1\sfe_0\|_{2}.
\end{split}\end{align}
For the last term in \eqref{e:bb1}, we have
\begin{align}\begin{split}\label{e:bb2}
&\phantom{{}={}}\|\sigma_\ell^{(u+1)}(x_\beta)\diag(\sfd_1)\diag(\sfd_2)\cdots \diag(\sfd_u)\sfe_{j-1}\cdots \sfe_1\sfe_0\|_{2}\\
&\leq
\|\sigma_\ell^{(u+1)}(x_\beta)\diag(\sfd_1)\cdots \diag(\sfd_{v-1})\|_{2\rightarrow 2}\|\sfd_v\|_2\|\diag(\sfd_{v+1})\cdots \diag(\sfd_{u})\sfe_{j-1}\cdots \sfe_1\sfe_0\|_{\infty}\\
&\leq
\|\sigma_\ell^{(u+1)}(x_\beta)\|_{2\rightarrow 2}\|\sfd_1\|_\infty\cdots \|\sfd_{v-1}\|_\infty\|\sfd_v\|_2\|\sfd_{v+1}\|_\infty\cdots \|\sfd_{u}\|_{\infty}\|\sfe_{j-1}\cdots \sfe_1\sfe_0\|_{\infty}.
\end{split}\end{align}
Plugging \eqref{e:oterm}, \eqref{e:vterm} and \eqref{e:b2} into \eqref{e:bb2}, we get
\begin{align}\begin{split}\label{e:bb3}
&\phantom{{}={}}\|\sigma_\ell^{(u+1)}(x_\beta)\diag(\sfd_1)\diag(\sfd_2)\cdots \diag(\sfd_u)\sfe_{j-1}\cdots \sfe_1\sfe_0\|_{2}\\
&\lesssim M^{u-1} \|\sfd_v\|_2\| \sfe_{j-1}\cdots \sfe_1\sfe_0\|_{2}
\lesssim M^{u-1} M^{s_j-u}\sqrt m M= M^{s_j}\sqrt m.
\end{split}\end{align}
We estimate $\|\sfe_k\|_{2\rightarrow 2} \|\sfe_{k-1}\|_{2\rightarrow 2}\cdots\|\sfe_{j+1}\|_{2\rightarrow 2}$ by (i), (ii) and (iv).
By plugging \eqref{e:bb3} into \eqref{e:bb1}, we get
\begin{align*}\begin{split}
&\phantom{{}={}}\|\sfe_k\|_{2\rightarrow 2}\|\sfe_{k-1}\|_{2\rightarrow 2}\cdots\|\sfe_{j+1}\|_{2\rightarrow 2}
\|\sfe_j\sfe_{j-1}\sfe_{j-2}\cdots \sfe_1\sfe_0\|_{\infty}\\
&\lesssim \prod_{i=j+1}^k M^{s_i} (M^{s_j}\sqrt m) 
\lesssim M^{s}\sqrt m.
\end{split}\end{align*}
This finishes the proof of \eqref{e:eL2bound}, and hence Proposition \ref{p:upperb}. 
Notice that the inequalities \eqref{e:estd} and \eqref{e:vterm} which contain the factor $\sqrt m$ were used only once in this proof.
\end{proof}

\begin{proposition}\label{p:ftbound}
We assume Assumptions \ref{a:sigmaasup} and \ref{a:nonlinear}. With high probability, uniformly for any vector $\sff_t\in \fD_0\cup\fD_1\cup
\cdots\cup \fD_{p-1}$, and time $0\leq t\leq m^{\frac{p}{2(p+1)}}/(\ln m)^{\fC'}$ the following holds
\begin{align*}
\|\sff_t\|_\infty\lesssim (\ln m)^\fC.
\end{align*}
\end{proposition}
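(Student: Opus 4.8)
\emph{Setup and a priori inputs.} Following the scheme sketched in Section~\ref{s:prior}, set
\[
\xi(t)=\max\{\|\sff_t\|_\infty:\sff_t\in\fD_0\cup\fD_1\cup\cdots\cup\fD_{p-1}\},
\]
so that the assertion is precisely $\xi(t)\lesssim(\ln m)^\fC$ on the stated window. At $t=0$, Corollary~\ref{c:f0bound}---which in fact bounds $\|\sfv(0)\|_\infty\lesssim(\ln m)^\fC$ for every $\sfv(0)\in\fD_0\cup\cdots\cup\fD_{2p}$, a range I shall use below---gives $\xi(0)\lesssim(\ln m)^\fC$ with high probability. For the running bound, Proposition~\ref{p:L2bound} supplies, for $t\lesssim\sqrt m$, the spectral estimates $\|W_t^{(\ell)}/\sqrt m\|_{2\to2},\|(W_t^{(\ell)})^\top/\sqrt m\|_{2\to2},\|a_t\|_2/\sqrt m,\|x_\beta^{(\ell)}\|_2\lesssim1$, which together with $\|\sigma_\ell^{(u)}(x_\beta)\|_{2\to2}\lesssim1$ (Assumption~\ref{a:sigmaasup}) are exactly the hypotheses of Proposition~\ref{p:upperb}. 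With $M=\max\{1,\xi(t)\}$ the latter gives, for every $\sff_t\in\fD_s$ with $s\le2p$,
\[
\|\sff_t\|_\infty\le\|\sff_t\|_2\lesssim M^{s}\sqrt m ,
\]
and this crude $L^2$ passage---the only place a factor $\sqrt m$ is lost---will be applied only to expressions of depth $p\le s\le2p$; on $\fD_{\le p-1}$ I always use $\|\cdot\|_\infty\le\xi(t)$ directly.

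\emph{A $(p+1)$-st order differential inequality for $\xi$.} The heart of the proof is to establish
\[
\del_t^{(p+1)}\xi(t)\lesssim\frac{\xi(t)^{2p}}{m^{p/2}} ,\qquad \del_t^{(k)}\xi(0)\lesssim(\ln m)^\fC\ \ (0\le k\le p).
\]
For the first bound, fix $\sff_t\in\fD_r$ with $r\le p-1$ and expand $\del_t^{(p+1)}\sff_t$ by the Leibniz rule, using the evolution equations for $\del_t a_t,\del_t W_t^{(\ell)},\del_t x_\beta^{(\ell)},\del_t\sigma_\ell^{(u)}(x_\beta)$ recorded in Section~\ref{s:outline} and Proposition~\ref{p:fDstruc}. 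Each resulting term is a product of one vector in some $\fD_s$ with $s\le r+(p+1)\le2p$, finitely many scalars $\langle\sfu_1(t),\sfu_2(t)\rangle/m$ with $\sfu_i(t)\in\fD_{\le 2p}$, finitely many residual factors $f_{\beta_i}(t)-y_{\beta_i}$ (and their time-derivatives, which via \eqref{e:descent}--\eqref{e:dynamicr} unwind into the higher kernels $K_t^{(j)}$, themselves of the form $m^{-(j/2-1)}\prod\langle\cdot,\cdot\rangle/m$), and an explicit $m$-power coming from the $1/\sqrt m$ emitted by each replacement in \eqref{e:replace}. Bounding the $\fD_s$-vector by Proposition~\ref{p:upperb} (one $\sqrt m$), each scalar by $\langle\sfu_1,\sfu_2\rangle/m\le\|\sfu_1\|_2\|\sfu_2\|_2/m\lesssim\xi(t)^{\OO(1)}$, and each sum $\frac1n\sum_\beta(\cdots)(f_\beta-y_\beta)$ by Cauchy--Schwarz together with $\sum_\beta|f_\beta(t)-y_\beta|^2=\OO(n)$ from \eqref{e:fL2} (so it costs only $\max_\beta\|(\cdots)_\beta\|$), one collects the $m$-powers and finds every term $\lesssim\xi(t)^{2p}m^{-p/2}$. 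The initial bounds follow from the same expansion at $t=0$, where Corollary~\ref{c:f0bound} (not Proposition~\ref{p:upperb}) controls the vectors, so that the $\sqrt m$ loss is absent.

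\emph{Closing the bootstrap.} With the two displays in hand a continuity argument finishes the proof. Let $[0,T)$ be the maximal interval on which $\xi(t)\le 2\fC(\ln m)^\fC$ for a suitable $\fC$. There the differential inequality reads $\del_t^{(p+1)}\xi\lesssim(\ln m)^{2p\fC}/m^{p/2}$; integrating $p+1$ times from the controlled initial data $\del_t^{(k)}\xi(0)\lesssim(\ln m)^\fC$ and optimising over the time variable, the resulting bound for $\xi(t)$ stays strictly below $\fC(\ln m)^\fC$ as long as $t\le m^{\frac{p}{2(p+1)}}/(\ln m)^{\fC'}$, provided $\fC'$ is taken large enough (note $m^{\frac{p}{2(p+1)}}\ll\sqrt m$, so Proposition~\ref{p:L2bound} applies on the whole window). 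Hence $T$ exceeds this threshold, $\xi(t)\lesssim(\ln m)^\fC$ there, and in particular $\|\sff_t\|_\infty\le\xi(t)\lesssim(\ln m)^\fC$ for every $\sff_t\in\fD_0\cup\cdots\cup\fD_{p-1}$, which is Proposition~\ref{p:ftbound} (equivalently, estimate \eqref{e:tprior2}).

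\emph{Main obstacle.} The delicate step is the $(p+1)$-st order differential inequality: through $p+1$ iterated differentiations one must simultaneously (i) keep every expression that appears of depth at most $2p$, so that Proposition~\ref{p:upperb} is applicable and the only $\sqrt m$ conceded is the one already accounted for, and (ii) show that the accompanying factors of $1/\sqrt m$ accumulate to a net power $m^{-p/2}$, \emph{including} for the terms in which many of the differentiations fall on the residual factors $f_\beta-y_\beta$ rather than on building-block vectors. This is a combinatorial bookkeeping about how the replacements \eqref{e:replace} raise the diag-count of an expression and how the kernels $K_t^{(j)}$ reintroduce depth-zero vectors; the a priori estimates of Propositions~\ref{p:L2bound} and~\ref{p:upperb}, together with the $L^2$ decay \eqref{e:fL2}, are precisely what render the scalar products and residual factors that appear harmless.
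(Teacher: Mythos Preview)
Your overall strategy coincides with the paper's: define $\xi(t)$, derive the self-consistent inequality $\partial_t^{(p+1)}\xi\lesssim\xi^{2p}/m^{p/2}$, control the initial data $\partial_t^{(k)}\xi(0)$, and close by comparison. The closing device differs (your bootstrap versus the paper's explicit supersolution $\tilde\xi(t)=A_0 m^{\frac{p}{2(2p-1)}}\big(A_1 m^{\frac{p}{2(p+1)}}/(\ln m)^{\frac{2p-1}{p+1}\fC}-t\big)^{-\frac{p+1}{2p-1}}$), but either works once the inputs are right.

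There is a genuine gap in your initial-data step. You assert $\partial_t^{(k)}\xi(0)\lesssim(\ln m)^\fC$ for $0\le k\le p$ and then ``integrate $p+1$ times.'' With only this bound the bootstrap cannot close on the window $t\lesssim m^{p/(2(p+1))}$: the Taylor polynomial $\sum_{k=0}^{p}\frac{t^k}{k!}\,\partial_t^{(k)}\xi(0)$ is then of order $t^p(\ln m)^\fC\sim m^{p^2/(2(p+1))}(\ln m)^\fC\gg(\ln m)^\fC$. The paper instead proves $|\partial_t^{(r)}\xi(0)|\lesssim(\ln m)^{(2r+1)\fC}/m^{r/2}$: each time derivative carries, in addition to a replacement, an explicit prefactor $1/\sqrt m$ from \eqref{e:der1}--\eqref{e:der2}, so that the Taylor sum becomes $(\ln m)^\fC\sum_k(t/\sqrt m)^k\lesssim(\ln m)^\fC$. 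Your remark that ``the $\sqrt m$ loss is absent'' refers only to avoiding Proposition~\ref{p:upperb} in bounding $\|\sfv_j(0)\|_\infty$; it does not cancel these $1/\sqrt m$ prefactors, which are essential and must be kept.

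A second point you flag but do not resolve: in a literal Leibniz expansion of $\partial_t^{(p+1)}\sff_t$, derivatives landing on $(f_\beta-y_\beta)$ produce $\partial_t f_\beta=-\frac1n\sum_\gamma K_t^{(2)}(x_\beta,x_\gamma)(f_\gamma-y_\gamma)$ with \emph{no} extra $1/\sqrt m$, so a term with one building-block derivative and $p$ residual derivatives is only $\lesssim\frac{1}{\sqrt m}\|\sff_t^{(1)}\|_\infty\lesssim\xi^p$, not $\xi^{2p}/m^{p/2}$. The paper circumvents this by iterating the one-step bounds \eqref{e:der1b}--\eqref{e:der2b} (equivalently, by iterating the integral form of \eqref{e:der1}--\eqref{e:der2} and substituting only into $\sff_s^{(r)}$, never differentiating the residuals). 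Your direct Leibniz route would require a separate argument to dispose of these cross terms.
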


\begin{proof}[Proof of Proposition \ref{p:ftbound}]
Thanks to Corollary \ref{c:f0bound}, with high probability, uniformly for all $\sff_t\in \fD_0\cup\fD_1\cup\cdots\cup \fD_{p-1}$, we have that 
\begin{align*}
\|\sff_0\|_{\infty}\lesssim (\ln m)^{\fC}.
\end{align*}
We denote 
\begin{align}\label{e:defxi2}
\xi(t)=\max\{\|\sff_t\|_\infty: \sff_t\in \fD_0\cup\fD_1\cup\cdots\cup \fD_{p-1}\}.
\end{align}
In the following we derive a self-consistent differential equation of $\xi(t)$. Proposition \ref{p:ftbound} follows from analyzing it.

For any $\sff_t(x_{\al_1}, x_{\al_2},\cdots, x_{\al_s})\in \fD_0\cup\fD_1\cup\cdots\cup \fD_{p-1}$, by taking derivative we have
\begin{align}\label{e:der1}
\del_t\sff_t(x_{\al_1}, x_{\al_2}, \cdots, x_{\al_s})
=-\frac{1}{n}\sum_{\beta=1}^n\frac{1}{\sqrt m}\sff_t^{(1)}(x_{\al_1}, x_{\al_2}, \cdots, x_{\al_s}, x_\beta)(f_\beta(t)-y_\beta).
\end{align}
where $\sff_t^{(1)}(x_{\al_1}, x_{\al_2}, \cdots, x_{\al_s}, x_\beta)$ is obtained from $\sff_t(x_{\al_1}, x_{\al_2}, \cdots, x_{\al_s})$ by the replacements \eqref{e:replace}. 
We define,
\begin{align*}
 \LinProd(\fD_0\cup\fD_1\cup\cdots\cup \fD_{p+r})
=
\left\{\text{linear combinations of }\sfv_0(t)\prod_{j=1}^u \frac{\langle\sfv_{2j-1}(t),\sfv_{2j}(t)\rangle}{m}\right\},
\end{align*}
where $0\leq u\leq r+1$, $\sfv_j(t)\in \fD_{s_j}$,  $s_0, s_1, \cdots, s_{2u}\geq 0$ and $s_0+s_1+\cdots+s_{2u}\leq p+r$.
Thanks to Proposition \ref{p:fDstruc}, $\sff_t^{(1)}\in \LinProd(\fD_0\cup\fD_1\cup\cdots\cup \fD_p)$. More generally, for any integer $1\leq r\leq p+1$,
%
%
%We define 
%\begin{align}
%\LinProd(\fD_0\cup\fD_1\cup \cdots\cup \fD_r)=
%\left\{\text{linear combinations of }\sfv_0(t)\prod_{j=1}^u \frac{\langle\sfv_{2j-1}(t),\sfv_{2j}(t)\rangle}{m}\right\}
%\end{align}
%\cob [is the subscript $r$ should be $p$? ] 
% for some $u\in \{0, 1\}$, \nc  where $\sfv_j(t)\in \fD_{s_j}$, $s_0, s_1, \cdots, s_{2u}\geq 0$ and $s_0+s_1+\cdots+s_{2u}\leq r$.
%Thanks to Proposition \ref{p:fDstruc}, $\sff_t^{(1)}\in \LinProd(\fD_0\cup\fD_1\cup\cdots\cup \fD_p)$. Similarly, we can take more derivatives: We define
%\begin{align}
% \LinProd(\fD_0\cup\fD_1\cup\cdots\cup \fD_{p+r})
%=
%\left\{\text{linear combinations of }\sfv_0(t)\prod_{j=1}^u \frac{\langle\sfv_{2j-1}(t),\sfv_{2j}(t)\rangle}{m}\right\}
%\end{align}
%\cob for some $u\in \{0, 1, \ldots, r+1\} $, \nc 
%where $\sfv_j(t)\in \fD_{s_j}$,  $s_0, s_1, \cdots, s_{2u}\geq 0$ and $s_0+s_1+\cdots+s_{2u}\leq p+r$.
%\cob [the last two expressions are the same. why do we repeat them? why not use the last one as the def and the earlier eq is just a special case with $r=0$? ] \nc 
%For any integer $1\leq r\leq p+1$
\begin{align}\label{e:der2}
\del_t\sff_t^{(r)}(x_{\al_1}, x_{\al_2}, \cdots,x_{\al_{s+r}})
=-\frac{1}{n}\sum_{\beta=1}^n\frac{1}{\sqrt m}\sff_t^{(r+1)}(x_{\al_1}, x_{\al_2}, \cdots, x_{\al_{s+r}}, x_\beta)(f_\beta(t)-y_\beta),
\end{align}
where $\sff_t^{(r+1)}\in \LinProd(\fD_0\cup\fD_1\cup\cdots\cup \fD_{p+r})$.
Using the bound \eqref{e:fL2}, we have
\begin{align*}\begin{split}
&\phantom{{}={}}\left\|\frac{1}{n}\sum_{\beta=1}^n\frac{1}{\sqrt m}\sff_t^{(r+1)}(x_{\al_1}, x_{\al_2}, \cdots, x_{\al_{s+r}}, x_\beta)(f_\beta(t)-y_\beta)\right\|_\infty\\
&\leq 
\frac{1}{\sqrt m}\max_{1\leq \beta \leq n}\|\sff_t^{(r+1)}(x_{\al_1}, x_{\al_2}, \cdots, x_{\al_{s+r}}, x_\beta)\|_\infty\frac{1}{n}\sum_{\beta=1}^n|f_\beta(t)-y_\beta|\\
&\leq \frac{1}{\sqrt m}\max_{1\leq \beta \leq n}\|\sff_t^{(r+1)}(x_{\al_1}, x_{\al_2}, \cdots, x_{\al_{s+r}}, x_\beta)\|_\infty\sqrt{\frac{1}{n}\sum_{\beta=1}^n(f_\beta(t)-y_\beta)^2}\\
&\lesssim \frac{1}{\sqrt m}\max_{1\leq \beta \leq n}\|\sff_t^{(r+1)}(x_{\al_1}, x_{\al_2}, \cdots, x_{\al_{s+r}}, x_\beta)\|_\infty.
\end{split}\end{align*}
Therefore, \eqref{e:der1} and \eqref{e:der2} together give 
\begin{align}
\label{e:der1b}&\|\del_t\sff_t(x_{\al_1}, x_{\al_2}, \cdots,x_{\al_{s}})\|_\infty
\lesssim \frac{1}{\sqrt m}\max_{1\leq \beta \leq n}\|\sff_t^{(1)}(x_{\al_1}, x_{\al_2}, \cdots, x_{\al_{s}}, x_\beta)\|_\infty,\\
\label{e:der2b}&\|\del_t\sff_t^{(r)}(x_{\al_1}, x_{\al_2}, \cdots,x_{\al_{s+r}})\|_\infty
\lesssim \frac{1}{\sqrt m}\max_{1\leq \beta \leq n}\|\sff_t^{(r+1)}(x_{\al_1}, x_{\al_2}, \cdots, x_{\al_{s+r}}, x_\beta)\|_\infty,
\end{align}
for any $1\leq r\leq p+1$. By taking higher derivatives on both sides of \eqref{e:der1b}, and using \eqref{e:der2b} to bound the righthand side, we have that
\begin{align}\begin{split}\label{e:ftder}
&\phantom{{}={}}\del_t^{(p+1)}\|\sff_t(x_{\al_1}, x_{\al_2}, \cdots,x_{\al_{s}})\|_\infty\\
&\lesssim \frac{1}{\sqrt m}\del_t^{(p)}\max_{1\leq \beta_1\leq n}\|\sff_t^{(1)}(x_{\al_1}, x_{\al_2}, \cdots,x_{\al_{s}}, x_{\beta_1})\|_\infty\lesssim \cdots\cdots\\
&\lesssim \frac{1}{m^{(p+1)/2}}\max_{1\leq \beta_1,\beta_2,\cdots,\beta_{p+1} \leq n}\|\sff_t^{(p+1)}(x_{\al_1}, x_{\al_2}, \cdots, x_{\al_{s}}, x_{\beta_1}, x_{\beta_2},\cdots, x_{\beta_{p+1}})\|_\infty.
\end{split}\end{align}

From the discussion above, $\sff_t^{(p+1)}$ is a linear combination of terms in the form
\begin{align}\label{e:form2}
\sfv_0(t)\prod_{j=1}^u \frac{\langle\sfv_{2j-1}(t),\sfv_{2j}(t)\rangle}{m},
\end{align}
where $0\leq u\leq p+1$, $\sfv_j(t)\in \fD_{s_j}$, $s_0, s_1,\cdots, s_{2u}\geq 0$ and $s_0+s_1+\cdots+s_{2u}\leq 2p$. We can use Proposition \ref{p:upperb} for $r=p-1$,
\begin{align}
\|\sfv_0(t)\|_\infty\lesssim \xi(t)^{s_0}\sqrt m,\label{e:vLinfty}
\end{align}
and for $1\leq j\leq 2u$
\begin{align}
\|\sfv_j(t)\|_2\lesssim \xi(t)^{s_j}\sqrt m.\label{e:vL2}
\end{align}
The estimates \eqref{e:vLinfty} and \eqref{e:vL2} together give an upper bound for the $L_\infty$ norm of $\sff_t^{(p+1)}$,
\begin{align}\label{e:ftbound}
\|\sff_t^{(p+1)}(x_{\al_1}, x_{\al_2}, \cdots, x_{\al_{s}}, x_{\beta_1}, x_{\beta_2},\cdots, x_{\beta_{p+1}})\|_\infty\lesssim 
\sqrt m \prod_{j=0}^{u}\xi(t)^{s_j}\lesssim  \xi(t)^{2p} \sqrt m.
\end{align}
We obtain a self-consistent differential equation of $\xi(t)$ by taking maximum on both sides of \eqref{e:ftder} over $1\leq \al_1,\al_2,\cdots,\al_s\leq n$, and using \eqref{e:ftbound}
\begin{align}\label{e:xieq}
\del^{(p+1)}_{t}\xi(t)\lesssim \frac{\xi(t)^{2p}}{m^{p/2}}.
\end{align}

To obtain an upper bound of $\xi(t)$ using \eqref{e:xieq}, we still need an upper bound for the initial data, i.e. $\xi(0)$ and $\{\del_t^{(r)}\xi(0)\}_{1\leq r\leq p}$. Fortunately Corollary \ref{c:f0bound} provides such estimates. In fact, Corollary \ref{c:f0bound} implies that with high probability $\xi(0)\lesssim (\ln m)^{\fC}$. For the derivatives of $\xi(t)$ at $t=0$, we use \eqref{e:ftder}
\begin{align*}\begin{split}
&\phantom{{}={}}|\del_t^{(r)} \xi(0)|
\lesssim 
\left.\max_{1\leq\al_1,\al_2,\cdots,\al_s\leq n}\del_t^{(r)}\|\sff_t(x_{\al_1}, x_{\al_2}, \cdots,x_{\al_{s}})\|_\infty \right|_{t=0} \\
&\lesssim 
\frac{1}{m^{r/2}}\max_{1\leq\al_1,\al_2,\cdots,\al_s\leq n}\max_{1\leq \beta_1,\beta_2,\cdots,\beta_{p+1} \leq n}\|\sff_0^{(r)}(x_{\al_1}, x_{\al_2}, \cdots, x_{\al_{s}}, x_{\beta_1}, x_{\beta_2},\cdots, x_{\beta_{r}})\|_\infty.
\end{split}\end{align*}
Again $\sff_0^{(r)}$ is a linear combination of terms in the form \eqref{e:form2} with $\sfv_j(t)\in \fD_{s_j}$ for some $0\leq u\leq r$, $s_0, s_1,\cdots, s_{2u}\geq 0$ and $s_0+s_1+\cdots+s_{2u}\leq p+r-1$. Using Corollary \ref{c:f0bound}, for $0\leq j\leq 2u$,  $\|\sfv_j(0)\|_\infty\lesssim (\ln m)^{\fC}$. We conclude that
\begin{align*}
|\del_t^{(r)} \xi(0)|
\lesssim \frac{(\ln m)^{(2r+1)\fC}}{m^{r/2}},
\end{align*}
for any $1\leq r\leq p$.

The ordinary differential equation \eqref{e:xieq} has an exact solution in the following form:
\begin{align*}
\tilde \xi(t)=\frac{A_0 m^{\frac{p}{2(2p-1)}}}{\left(A_1m^{\frac{p}{2(p+1)}}/(\ln m)^{\frac{2p-1}{p+1}\fC}-t\right)^{\frac{p+1}{2p-1}}},
\end{align*}
where $A_0, A_1$ are constants depending on $p$, which are chosen such that $\tilde \xi(t)$ is an exact solution of \eqref{e:xieq}, and $\tilde \xi(0)=\xi(0)$. It is easy to check that $\tilde \xi(0)\asymp (\ln m)^{\fC}$, and for $1\leq r\leq p+1$,
\begin{align*}
\del_t^{(r)}\tilde \xi(0)\asymp (\ln m)^{\left(1+\frac{(2p-1)r}{p+1}\right)\fC}m^{-\frac{pr}{2(p+1)}}\gg\frac{(\ln m)^{(2r+1)\fC}}{m^{r/2}}\asymp \del_t^{(r)}\xi(0),
\end{align*}
provided that $m$ is large enough.
Therefore, $\tilde \xi(t)$ provides an upper bound for $\xi(t)$. We conclude that for 
\begin{align*}
t\lesssim m^{\frac{p}{2(p+1)}}/(\ln m)^{\frac{2p-1}{p+1}\fC},
\end{align*}
it holds that
\begin{align*}
\xi(t)\lesssim \tilde \xi(t)\lesssim 
\frac{A_0 m^{\frac{p}{2(2p-1)}}}{\left(A_1m^{\frac{p}{2(p+1)}}/(\ln m)^{\frac{2p-1}{p+1}\fC}\right)^{\frac{p+1}{2p-1}}}\lesssim (\ln m)^{\fC}.
\end{align*}
This finishes the proof of Proposition \ref{p:ftbound}.
\end{proof}

\begin{remark}
By the same argument as for \eqref{e:xieq}, for any $0 \leq r\leq p$, we have
\begin{align*}
\del^{(r+1)}_{t}\xi(t)\lesssim \frac{\xi(t)^{p+r}}{m^{r/2}},
\end{align*}
which gives us that $\xi(t)\lesssim (\ln m)^\fC$ for $t\leq m^{\frac{r}{2(r+1)}}/(\ln m)^{\fC'}$. Therefore, for bigger $r$, we have the a prior estimate $\xi(t)\lesssim (\ln m)^\fC$ for longer time. 

\end{remark}

\begin{proof}[Proof of \eqref{e:tprior2} in Theorem \ref{t:main1}]
From the discussion in Section \ref{s:outline} \eqref{e:form}, we have that each summand in $K_t^{(r)}(x_{\al_1}, x_{\al_2},\cdots, x_{\al_r})$ is of the form
\begin{align}\label{e:form3}
\frac{1}{m^{r/2-1}}\prod_{j=1}^s \frac{\langle \sfv_{2j-1}(t),\sfv_{2j}(t)\rangle}{m}, \quad 1\leq s\leq r,\quad \sfv_{j}\in \fD_0\cup \fD_1\cup \cdots \cup \fD_{r-2}.
\end{align}
If $r\leq p+1$, Proposition \ref{p:ftbound} provides an upper bound on the $L_\infty$ norm of those vectors $\sfv_j(t)$. So we can bound these inner products $\langle \sfv_{2j-1}(t),\sfv_{2j}(t) \rangle$ using Proposition \ref{p:ftbound}. If $r\leq p+1$, then for $0\leq t\leq m^{\frac{p}{2(p+1)}}/(\ln m)^{\fC'}$, it holds that 
\begin{align*}
\|\sfv_j(t)\|_\infty\lesssim (\ln m)^\fC.
\end{align*}
As a consequence, with high probability with respect to the random initialization, 
\begin{align*}
\frac{1}{m^{r/2-1}}\prod_{j=1}^s \frac{\langle \sfv_{2j-1}(t),\sfv_{2j}(t)\rangle}{m}
\lesssim 
\frac{1}{m^{r/2-1}}\prod_{j=1}^s \frac{(\ln m)^{2\fC}m}{m}
=\frac{(\ln m)^{2s\fC}}{m^{r/2-1}}\leq \frac{(\ln m)^{2r\fC}}{m^{r/2-1}}.
\end{align*}
Since $K_t^{(r)}(x_{\al_1}, x_{\al_2},\cdots, x_{\al_r})$ is a linear combination of terms in the form \eqref{e:form3}, the claim \eqref{e:tprior2} follows.
\end{proof}

\section{Proof of Corollary \ref{c:change} and \ref{c:zeroloss}, and Theorem \ref{t:main2}}\label{s:cproof}

\begin{proof}[Proof of Corollary \ref{c:change}]
We first derive an upper bound of the kernel $K_t^{(3)}(\cdot, \cdot, \cdot)$, using its derivative
\begin{align}\label{e:derK3}
\del_tK_t^{(3)}(x_{\al_1},x_{\al_2}, x_{\al_3})=-\frac{1}{n} \sum_{\beta=1}^n K^{(4)}_t(x_{\al_1}, x_{\al_2}, x_{\alpha_3}, x_\beta)(f_\beta(t)-y_\beta).
\end{align}
Thanks to \eqref{e:tprior2}, for $0\leq t\leq m^{\frac{p}{2(p+1)}}/(\ln m)^{\fC'}$, it holds that
\begin{align}\label{e:K4bound}
\|K^{(4)}_t\|_{\infty}\lesssim \frac{(\ln m)^\fC}{m}.
\end{align}
\eqref{e:K4bound} combining with \eqref{e:fL2} implies an upper bound of the righthand side of \eqref{e:derK3},
\begin{align}\label{e:derK3b}
\left|\del_tK_t^{(3)}(x_{\al_1},x_{\al_2}, x_{\al_3})\right|\leq \max_{1\leq \beta \leq n}|K^{(4)}_t(x_{\al_1}, x_{\al_2}, x_{\alpha_3}, x_\beta)|\frac{1}{n} \sum_{\beta=1}^n |f_\beta(t)-y_\beta|
\lesssim \frac{(\ln m)^{\fC}}{m}.
\end{align}
\eqref{e:tprior1} gives an upper bound of $K_0^{(3)}(x_{\al_1},x_{\al_2}, x_{\al_3})\lesssim (\ln m)^{\fC}/m$, and \eqref{e:derK3b} gives an upper bound of the derivative of $K_t^{(3)}(x_{\al_1},x_{\al_2}, x_{\al_3})$. They together implies that with high probability
\begin{align}\label{e:Kt3bound}
|K_t^{(3)}(x_{\al_1},x_{\al_2}, x_{\al_3})|\lesssim \frac{(1+t)(\ln m)^{\fC}}{m},
\end{align}
for any $1\leq \al_1, \al_2,\al_3\leq n$.
We recall that
\begin{align}\label{e:derK2}
\del_tK_t^{(2)}(x_{\al_1},x_{\al_2})=-\frac{1}{n} \sum_{\beta=1}^n K^{(3)}_t(x_{\al_1}, x_{\al_2}, x_\beta)(f_\beta(t)-y_\beta).
\end{align}
Similarly as in \eqref{e:derK3b}, we can use \eqref{e:Kt3bound} to upper bound the righthand side of \eqref{e:derK2},
\begin{align*}
\left|\del_tK_t^{(2)}(x_{\al_1},x_{\al_2})\right|\leq \max_{1\leq \beta\leq n}|K^{(3)}_t(x_{\al_1}, x_{\al_2}, x_\beta)|\frac{1}{n} \sum_{\beta=1}^n |f_\beta(t)-y_\beta|\lesssim \frac{(1+t)(\ln m)^{\fC}}{m}.
\end{align*}
This finishes the proof of Corollary \eqref{c:change}.
\end{proof}

\begin{proof}[Proof of Corollary \ref{c:zeroloss}]
Corollary \ref{c:change} gives the change rate for each entry of the NTK up to time $t\leq m^{ \frac{p}{2(p+1)}}/(\ln m)^{\fC'}$,
\begin{align}\label{e:Linfbound}
\|\del_t K_t^{(2)}\|_\infty\lesssim \frac{(1+t)(\ln m)^\fC}{m}.
\end{align}
By integrating both sides of \eqref{e:Linfbound} from $0$ to $t$, we get an $L_\infty$ bound of the change of the NTK,
\begin{align}\label{e:Linfbound2}
\|K_t^{(2)}-K_0^{(2)}\|_{\infty}
\lesssim \frac{t(1+t)(\ln m)^\fC}{m}.
\end{align}
The $L_\infty$ bound in \eqref{e:Linfbound2} can be used to derive a norm bound of the change of the NTK,
\begin{align*}
\|K_t^{(2)}-K_0^{(2)}\|_{2\rightarrow 2}
\leq \|K_t^{(2)}-K_0^{(2)}\|_{\rm F}
\leq n \|K_t^{(2)}-K_0^{(2)}\|_{\infty}
\lesssim \frac{t(1+t)(\ln m)^\fC n}{m}.
\end{align*}

%
%\cob
%[This is a key place we lose a factor so that we can only do $m \ge n^3$.  If we can overcome this part, the rest does not seem to require 
%$m \ge n^3$.
%
%The eq \eqref{38} and the estimate \eqref{e:tprior2} together gives almost that 
%\begin{align}\label{39}
%\| \del_t^s K_t^{(r)}\|_\infty \le m \Big ( \frac 1 {\sqrt m} \Big )^{r+ s} 
%\end{align}
%in particular, 
%\begin{align}%\label{39}
%\| \del_t^s K_t^{(2)}\|_\infty \le  \Big ( \frac 1 {\sqrt m} \Big )^{s} 
%\end{align}
%Integrating in time, we get 
%\begin{align}%\label{e:Linfbound2}
%\|K_t^{(2)}-K_0^{(2)}\|_{\infty}
%\lesssim \Big [ \frac{t}{  \sqrt m} \Big ]^s
%\end{align}
%The trouble is that all initial data of time derivatives of $K_t^{(2)}$ at $t=0$ blocked this estimate. 
%Although this does not work, it will be useful later on. We will not inclede  this remark in the paper. 
%
%] 
%\nc

The change of the smallest eigenvalue of the NTK is upper bounded by the change of its norm. If $0\leq t\leq \fc\sqrt{\la m/n}/(\ln m)^{\fC/2}$, with some $\fc>0$ small enough, the change of the norm $\|K_t^{(2)}-K_0^{(2)}\|_{2\rightarrow 2}\leq \la/2$. Combining with
\eqref{e:eigasup}, we conclude that for $0\leq t\leq \fc\sqrt{\la m/n}/(\ln m)^{\fC/2}$
\begin{align}\label{e:eigminbound}
\la_{\min} \left[K_t^{(2)}(x_\al, x_\beta)\right]_{1\leq \al, \beta\leq n}
\geq \la_{\min} \left[K_0^{(2)}(x_\al, x_\beta)\right]_{1\leq \al, \beta\leq n}
-\|K_t^{(2)}-K_0^{(2)}\|_{2\rightarrow 2}\geq \la/2.
\end{align}

From the defining relation \eqref{e:descent} of the NTK and using \eqref{e:eigminbound}, we have
\begin{align}\begin{split}\label{e:L2upK}
\del_t \sum_{\beta=1}^n\|f_\beta(t)-y_\beta\|^2
&=-\frac{1}{n}\sum_{\al, \beta=1}^nK_t^{(2)}(x_\al, x_\beta)(f_\al(t)-y_\al)(f_\beta(t)-y_\beta)\\
&\leq -\frac{\la}{2n}\sum_{\beta=1}^n\|f_\beta(t)-y_\beta\|^2,
\end{split}\end{align}
for $0\leq t\leq \fc\sqrt{\la m/n}/(\ln m)^{\fC/2}$. Especially, \eqref{e:L2upK} implies an exponential decay of the training error,
\begin{align}\label{e:expdecay}
\sum_{\beta=1}^n\|f_\beta(t)-y_\beta\|^2
\leq e^{-\frac{\la t}{2n}}\sum_{\beta=1}^n\|f_\beta(0)-y_\beta\|^2
\lesssim n e^{-\frac{\la t}{2n}},
\end{align}
for $0\leq t\leq  \fc\sqrt{\la m/n}/(\ln m)^{\fC/2}$. It takes time $t\asymp (2n/\la)\ln(n/\varepsilon)$, for the training error in \eqref{e:expdecay} to reach $\varepsilon$. Therefore if 
\begin{align}\label{e:mbound}
\fc\sqrt{\la m/n}/(\ln m)^{\fC/2}\gtrsim(2n/\la)\ln(n/\varepsilon),
\end{align}
the dynamic \eqref{e:dynamic} finds a global minimum, the training error reaches $\varepsilon$ at time $t\asymp (n/\la)\ln(n/\varepsilon)$. For \eqref{e:mbound} to hold, the neural network needs to be wide
\begin{align*}
m\geq \fC'\left(\frac{n}{\la}\right)^3(\ln m)^{\fC}\ln(n/\varepsilon)^2,
\end{align*} 
with some large constant $\fC'$. This finishes the proof of Corollary \ref{c:zeroloss}.
\end{proof}

\begin{proof}[Proof of Theorem \ref{t:main2}]
We have proven in \eqref{e:fL2} that 
\begin{align}\label{e:ftL2}
\sum_{\beta=1}^n (f_\beta(t)-y_\beta)^2=\OO(n).
\end{align}
We recall the a priori estimate \eqref{e:tprior2} that with high probability with respect to the random initialization, for $0\leq t\leq m^{\frac{p}{2(p+1)}}/(\ln m)^{\fC'}$, it holds that 
\begin{align}\label{e:priorcopy}
\|K_t^{(r)}\|_{\infty}\lesssim \frac{(\ln m)^{\fC}}{m^{r/2-1}}.
\end{align}
We have better estimates if $r$ is odd. In fact, thanks to the equations for the dynamic of the NTK \eqref{e:dynamicr},
\begin{align}\begin{split}\label{e:diffb}
\left|\del_tK_t^{(r)}(x_{\al_1},x_{\al_2},\cdots, x_{\al_r})\right|
&\leq \max_{1\leq \beta \leq n}|K^{(r+1)}_t(x_{\al_1}, x_{\al_2}, \cdots, x_{\alpha_r}, x_\beta)|\frac{1}{n} \sum_{\beta} |f_\beta(t)-y_\beta|\\
&\lesssim \frac{(\ln m)^{\fC}}{m^{(r-1)/2}}\sqrt{\frac{1}{n} \sum_{\beta} |f_\beta(t)-y_\beta|^2}\lesssim \frac{(\ln m)^{\fC}}{m^{(r-1)/2}}.
\end{split}\end{align}
Moreover, thanks to \eqref{e:tprior1}, if $r$ is odd, $\fK^{(r)}=0$ and 
\begin{align}\label{e:initialb}
\|K_0^{(r)}\|_{\infty}\lesssim \frac{(\ln m)^{\fC}}{m^{(r-1)/2}}.
\end{align}
The estimates \eqref{e:diffb} and \eqref{e:initialb} together imply that if $r$ is odd, for $t\leq m^{\frac{p}{2(p+1)}}/(\ln m)^{\fC'}$
\begin{align}\label{e:roddbound}
\|K_t^{(r)}\|_{\infty}\lesssim \frac{(1+t)(\ln m)^{\fC}}{m^{(r-1)/2}},
\end{align}
which is slightly better than the estimate \eqref{e:priorcopy}.
%
%In the following we show that the same estimates \eqref{e:ftL2}, \eqref{e:priorcopy} and \eqref{e:roddbound} hold for the truncated dynamics \eqref{e:truncdynamicr}.
%
%
%for $t$
%\begin{align}
%\sum_{\beta=1}^n (\tilde f_\beta(t)-y_\beta)^2=\OO(n),
%\end{align}
%
%If it holds that 
%\begin{align}
%\sum_{\beta=1}^n (\tilde f_\beta(t)-y_\beta)^2=\OO(n),
%\end{align}
%then iteratively, we can show that 
%\begin{align}
%\|K_t^{(r)}\|_\infty, \|\tilde K_t^{(r)}\|_\infty\lesssim \frac{1}{m^{r/2-1}},\quad 2\mid r, \quad \|K_t^{(r)}\|_\infty, \|\tilde K_t^{(r)}\|_\infty\lesssim \frac{1}{m^{r/2-1}}\frac{t}{\sqrt m},\quad 2\nmid r.
%\end{align}
%Then using \eqref{e:dynamicr} with $r=2$, we have
%\begin{align}
%\del_t \|K_t^{(2)}\|_\infty\lesssim \|K_t^{(3)}\|_\infty\lesssim \frac{t}{m},
%\end{align}
%and 
%\begin{align}
%\|K_t^{(2)}-K_0^{(2)}\|_{2\rightarrow 2}\leq n\|K_t^{(2)}-K_0^{(2)}\|_{\infty}\lesssim \frac{t^2n}{m}\ll \la,
%\end{align}
%provided we have $t\ll \sqrt \la \sqrt{m/n}$. Therefore, if $t\ll \sqrt \la \sqrt{m/n}$ then $K_t^{(2)}$ is positive definite with spectral gap at least $\Omega(\la)$. The same statement holds for $\tilde K_t^{(2)}$.
%

We denote the vector
\begin{align*}
\Delta f(t)=(f_1(t)-\tilde f_1(t), f_2(t)-\tilde f_2(t),\cdots,f_n(t)-\tilde f_n(t))^\top,
\end{align*}
At $t=0$, we have $\Delta f(t)=0$. We denote time $T$ the first time that $\|\Delta f(t)\|_2\geq \sqrt n$, i.e. $T=\inf_{t\geq 0}\{t: \|\Delta f(t)\|_2\geq \sqrt n\}$.
Then for $t\leq T$, we have that 
\begin{align*}
\sum_{\beta=1}^n ( f_\beta(t)-y_\beta)^2=\OO(n),\quad \sum_{\beta=1}^n (\tilde f_\beta(t)-y_\beta)^2=\OO(n).
\end{align*}
Next we study the difference of the original dynamic and the truncated dynamic for $t\leq T$. We show that $\|\Delta f(t)\|_2$ is much smaller than $\sqrt n$, when $t\leq \min\{\fc \sqrt{\la m/n}/(\ln m)^{\fC/2}, m^{\frac{p}{2(p+1)}}/(\ln m)^{\fC'}, T\}$. As a consequence $T\geq \min\{\fc \sqrt{\la m/n}/(\ln m)^{\fC/2}, m^{\frac{p}{2(p+1)}}/(\ln m)^{\fC'}\}$.

Thanks to \eqref{e:tprior2} and \eqref{e:fL2}, we have that
\begin{align*}\begin{split}
&\phantom{{}={}}\left|\del_t (K_t^{(p)}(x_{\al_1}, x_{\al_2}, \cdots, x_{\al_p})-\tilde K_t^{(p)}(x_{\al_1}, x_{\al_2}, \cdots, x_{\al_p}))\right|\\
&\leq \max_{1\leq \beta\leq n} |K_t^{(p+1)}(x_{\al_1}, x_{\al_2},\cdots, x_{\al_p},x_{\beta})|\frac{1}{n}\sum_{\beta=1}^n |f_\beta(t)-y(t)|
\lesssim \frac{1+t}{m^{p/2}}.
\end{split}\end{align*}
Thus for $t\leq m^{\frac{p}{2(p+1)}}/(\ln m)^{\fC'}$ we have
\begin{align*}
\|K_t^{(p)}-\tilde K_t^{(p)}\|_\infty\lesssim \frac{(1+t)t}{m^{p/2}}.
\end{align*}

By taking difference of \eqref{e:dynamicr} and \eqref{e:truncdynamicr}, we have
\begin{align}\begin{split}\label{e:diff}
&\phantom{{}={}}\del_t \left(K_t^{(r)}(x_{\al_1}, x_{\al_2}, \cdots, x_{\al_r})- \tilde K_t^{(r)}(x_{\al_1}, x_{\al_2}, \cdots, x_{\al_r})\right)\\
&=-\frac{1}{n}\sum_{\beta=1}^n\left(K_t^{(r+1)}(x_{\al_1},x_{\al_2}, \cdots, x_{\al_r},x_\beta)(f_\beta(t)-y_\beta)-\tilde K_t^{(r+1)}(x_{\al_1},x_{\al_2}, \cdots, x_{\al_r},x_\beta)(\tilde f_\beta(t)-y_\beta)\right)\\
&=-\frac{1}{n}\sum_{\beta=1}^n\left(\left(K_t^{(r+1)}(x_{\al_1},x_{\al_2}, \cdots, x_{\al_r},x_\beta)-\tilde K_t^{(r+1)}(x_{\al_1},x_{\al_2}, \cdots, x_{\al_r},x_\beta)\right)(\tilde f_\beta(t)-y_\beta)\right.\\
&\phantom{{}={}}\left.+K_t^{(r+1)}(x_{\al_1},x_{\al_2}, \cdots, x_{\al_r},x_\beta)(f_\beta(t)-\tilde f_\beta(t))\right).
\end{split}\end{align}
We estimate the first term on the righthand side of \eqref{e:diff} as
\begin{align}\begin{split}\label{e:tt1}
&\phantom{{}={}}\left|\frac{1}{n}\sum_{\beta=1}^n\left(K_t^{(r+1)}(x_{\al_1},x_{\al_2}, \cdots, x_{\al_r},x_\beta)-\tilde K_t^{(r+1)}(x_{\al_1},x_{\al_2}, \cdots, x_{\al_r},x_\beta)\right)(\tilde f_\beta(t)-y_\beta)\right|\\
&\leq  \|K_t^{(r+1)}-\tilde K_t^{(r+1)}\|_\infty\frac{1}{n}\sum_{\beta=1}^n
|\tilde f_\beta(t)-y_\beta|\lesssim \|K_t^{(r+1)}-\tilde K_t^{(r+1)}\|_\infty,
\end{split}\end{align}
provided that $t\leq \min\{m^{\frac{p}{2(p+1)}}/(\ln m)^{\fC'}, T\}$.
For the second term on the righthand side of \eqref{e:diff}, for $t\leq m^{\frac{p}{2(p+1)}}/(\ln m)^{\fC'}$, it holds
\begin{align}\begin{split}\label{e:tt2}
&\phantom{{}={}}\left|\frac{1}{n}\sum_{\beta=1}^nK_t^{(r+1)}(x_{\al_1},x_{\al_2}, \cdots, x_{\al_r},x_\beta)(f_\beta(t)-\tilde f_\beta(t))\right|\\
&\leq \|K_t^{(r+1)}\|_{\infty}\frac{1}{n}\sum_{\beta=1}^n |f_\beta(t)-\tilde f_{\beta}(t)|
\leq \|K_t^{(r+1)}\|_{\infty}\sqrt{\frac{1}{n}\sum_{\beta=1}^n |f_\beta(t)-\tilde f_{\beta}(t)|^2 }\\
&= \|K_t^{(r+1)}\|_{\infty}\frac{\|\Delta f(t)\|_2}{\sqrt n}\lesssim 
\frac{(\ln m)^\fC}{m^{(r-1)/2}}\left(\frac{1+t}{\sqrt m}\right)^{\bm1_{2\mid r}}\frac{\|\Delta f(t)\|_2}{\sqrt n},
\end{split}\end{align}
where we used \eqref{e:priorcopy} and \eqref{e:roddbound}. The estimates \eqref{e:tt1} and \eqref{e:tt2} together imply
\begin{align*}\begin{split}
&\phantom{{}={}}\left|\del_t (K_t^{(r)}(x_{\al_1}, x_{\al_2}, \cdots, x_{\al_r})- \tilde K_t^{(r)}  (x_{\al_1}, x_{\al_2}, \cdots, x_{\al_r}))\right|\\
&\lesssim  \|K_t^{(r+1)}-\tilde K_t^{(r+1)}\|_\infty+\frac{(\ln m)^\fC}{m^{(r-1)/2}}\left(\frac{1+t}{\sqrt m}\right)^{\bm1_{2\mid r}}\frac{\|\Delta f(t)\|_2}{\sqrt n}.
\end{split}\end{align*}
%\begin{align}\begin{split}
%&\lesssim \|K_t^{(r+1)}-\tilde K_t^{(r+1)}\|_\infty+\| K_t^{(r+1)}\|_\infty \frac{\|\Delta f_t\|_2}{\sqrt n}\\
%&\lesssim  \|K_t^{(r+1)}-\tilde K_t^{(r+1)}\|_\infty+\frac{(\ln m)^\fC}{m^{(r-1)/2}}\left(\frac{1+t}{\sqrt m}\right)^{\bm1_{2\mid r}}\frac{\|\Delta f_t\|_2}{\sqrt n}
%\end{split}\end{align}
We integrate both sides, and get
\begin{align*}
\|K_t^{(r)}- \tilde K_t^{(r)} \|_\infty\lesssim \int_{0}^t\|K_s^{(r+1)}-\tilde K_s^{(r+1)}\|_\infty\rd s+\frac{(\ln m)^\fC}{m^{(r-1)/2}}\left(\frac{1+t}{\sqrt m}\right)^{\bm1_{2\mid r}}\frac{\int_0^t \|\Delta f(s)\|_2\rd s}{\sqrt n},
\end{align*}
for $t\leq \min\{m^{\frac{p}{2(p+1)}}/(\ln m)^{\fC'}, T\}$. 
We notice that in our setting $t$ is much smaller than $\sqrt m$. 
Using  $ \tilde K_t^{(p)}=0$ and \eqref{e:priorcopy} for $r=p-1$ and recursively with  $r= (p-2), \cdots, 2,$  we have 
\begin{align*}
\|K_t^{(r)}-\tilde K_t^{(r)} \|_\infty\lesssim \frac{(1+t)t^{p+1-r}}{m^{p/2}}+\frac{(\ln m)^\fC}{m^{(r-1)/2}\sqrt n}\left(\frac{1+t}{\sqrt m}\right)^{\bm1_{2\mid r}}\int_0^t \|\Delta f(s)\|_2\rd s.
\end{align*}
And especially,
\begin{align}\label{e:L2K}
\|K_t^{(2)}-  \tilde K_t^{(2)}  \|_\infty\lesssim \frac{(1+t)t^{p-1}}{m^{p/2}}+\frac{(1+t)(\ln m)^{\fC}}{m\sqrt {n}}\int_0^t \|\Delta f(s)\|_2\rd s.
\end{align}

By taking difference of \eqref{e:dynamic} and \eqref{e:truncdynamicr} we have
\begin{align}\label{e:derDelta}
\del_t \Delta f_\al (t)=\frac{1}{n}\sum_{\beta=1}^n(\tilde K^{(2)}_t(x_\al,x_\beta)-K^{(2)}_t(x_\al, x_\beta))(\tilde f_\beta(t)-y_\beta)-\frac{1}{n}\sum_{\beta=1}^n K^{(2)}_t(x_\al, x_\beta)\Delta f_\beta(t).
\end{align}
We multiply the vector $\Delta f(t)$ on both sides of \eqref{e:derDelta}
\begin{align}\label{e:inner}
\del_t \|\Delta f(t)\|_2^2
&=\frac{1}{n}\langle \Delta f(t), \sum_{\beta=1}^n(\tilde K^{(2)}_t(x_\al,x_\beta)-K^{(2)}_t(x_\al, x_\beta))(\tilde f_\beta(t)-y_\beta)\rangle-\frac{1}{n}\langle \Delta f(t), K^{(2)}_t\Delta f(t)\rangle.
\end{align}
Here we have abused the notation so that 
$\sum_{\beta=1}^n(\tilde K^{(2)}_t(x_\al,x_\beta)-K^{(2)}_t(x_\al, x_\beta))(\tilde f_\beta(t)-y_\beta)$ 
in the above expression is understood  as a vector with the $\alpha$-component being $\sum_{\beta=1}^n(\tilde K^{(2)}_t(x_\al,x_\beta)-K^{(2)}_t(x_\al, x_\beta))(\tilde f_\beta(t)-y_\beta)$. 
For the first term on the righthand side of \eqref{e:inner}, we estimate it using \eqref{e:L2K},
\begin{align}\begin{split}\label{e:ttt1}
&\phantom{{}={}}\frac{1}{n}\langle \Delta f(t), \sum_{\beta=1}^n(\tilde K^{(2)}_t(x_\al,x_\beta)-K^{(2)}_t(x_\al, x_\beta))(\tilde f_\beta(t)-y_\beta)\rangle\\
&\leq \sum_{\al=1}^n |\Delta f_\al(t)|\|\tilde K_t^{(2)}-K^{(2)}_t\|_\infty\frac{1}{n}\sum_{\beta=1}^n|\tilde f_\beta(t)-y_\beta|\\
&\lesssim \left( \frac{(1+t)t^{p-1}}{m^{p/2}}+\frac{(1+t) (\ln m)^{\fC}}{m\sqrt {n}}\int_0^t \|\Delta f(s)\|_2\rd s\right)\sum_{\al=1}^n |\Delta f_\al(t)|\\
&\lesssim \left(\frac{(1+t)t^{p-1}}{m^{p/2}}+\frac{(1+t) (\ln m)^{\fC}}{m\sqrt {n}}\int_0^t \|\Delta f(s)\|_2\rd s \right)\sqrt n\|\Delta f(t)\|_2.
\end{split}\end{align}
%\cob 
%[We have used the following many times: 
%\begin{align}
%\sum_{ij} u_i A_{ij} v_j \le  \sum_{ij} A_{ij} [u_i^2 + v_j^2] \le   \sup_i \sum_{j} A_{ij}  \| u \| \| v \| \le n \|A \|_\infty \| u \| \| v \|
%\end{align}
%It's cleaner to just cite it. 
%]
%\nc
%{\cor in the above estimates, we used that
%}

For the second term on the righthand side of \eqref{e:inner}, we use the fact that $[K_t^{(2)}(x_\al,x_\beta)]_{1\leq \al, \beta\leq n}$ is positive definite. In fact, in \eqref{e:eigminbound}, we have proven that for $0\leq t\leq \fc \sqrt{\la m /n }/(\ln m)^{\fC/2}$, 
\begin{align*}
\la_{\min} \left[K_t^{(2)}(x_\al, x_\beta)\right]_{1\leq \al, \beta\leq n}
\geq  \la/2.
\end{align*}
Therefore, 
\begin{align}\label{e:ttt2}
-\frac{1}{n}\langle \Delta f(t), K^{(2)}_t\Delta f(t)\rangle
\leq -\frac{\la}{2n}\|\Delta f(t)\|_2^2.
\end{align}
By plugging \eqref{e:ttt1} and \eqref{e:ttt2} into \eqref{e:inner}, and divide both sides by $2\|\Delta f(t)\|_2$, we get
\begin{align}\label{e:derDft}
\del_t \|\Delta f(t)\|_2
\lesssim \sqrt n\left(\frac{(1+t)t^{p-1}}{m^{p/2}}+\frac{(1+t)(\ln m)^{\fC}}{m\sqrt {n}}\int_0^t \|\Delta f(s)\|_2\rd s \right)-\frac{\la }{2n}\|\Delta f(t)\|,
\end{align}
for $t\leq \min\{\fc \sqrt{\la m/n}/(\ln m)^{\fC/2}, m^{\frac{p}{2(p+1)}}/(\ln m)^{\fC'}, T\}$.
To analyze \eqref{e:derDft}, we introduce a new quantity,
\begin{align*}
\Delta(t)=\max_{0\leq s\leq t}\|\Delta f(t)\|_2.
\end{align*}
Then \eqref{e:derDft} implies
\begin{align}\begin{split}\label{e:boundDelta}
\del_t\Delta(t)
&\lesssim \max\left\{0, \sqrt n\left(\frac{(1+t)t^{p-1}}{m^{p/2}}+\frac{(1+t)(\ln m)^{\fC}}{m\sqrt {n}}\int_0^t \Delta(s)\rd s \right)-\frac{\la }{2n}\Delta(t)\right\}\\
&\lesssim\max\left\{0, \left(\frac{(1+t)t^{p-1}\sqrt n}{m^{p/2}}+\frac{(1+t)t(\ln m)^{\fC}}{m} \Delta(t) \right)-\frac{\la }{2n}\Delta(t)\right\}\\
&\lesssim\max\left\{0, \frac{(1+t)t^{p-1}\sqrt n}{m^{p/2}}+\left(\frac{(1+t)t(\ln m)^{\fC}}{m}-\frac{\la}{2n}\right) \Delta(t) \right\},
\end{split}\end{align}
where we used that $\Delta(t)$ is monotonic increasing. We can further simplify the righthand side of \eqref{e:boundDelta}, for $t\leq \min\{\fc \sqrt{\la m/n}/(\ln m)^{\fC/2}, m^{\frac{p}{2(p+1)}}/(\ln m)^{\fC'}, T\}$, where $\fc>0$ is small enough, 
\begin{align}\label{e:boundDelta2}
\del_t\Delta(t)
&\lesssim\max\left\{0, \frac{(1+t)t^{p-1}\sqrt n}{m^{p/2}}-\frac{\la}{4n}\Delta(t) \right\}.
\end{align}
We recall that $\Delta(0)=0$. We can  solve \eqref{e:boundDelta2},
\begin{align*}
\Delta(t)\lesssim e^{-\la t/4n}\int_0^s\frac{(1+s)s^{p-1}\sqrt n}{m^{p/2}} e^{\la s/4n}\rd s\lesssim \frac{(1+t)t^{p-1}\sqrt n}{m^{p/2}} \min\{t, n/\la\}.
\end{align*}

It follows that for $t\leq \min\{\fc \sqrt{\la m/n}/(\ln m)^{\fC/2}, m^{\frac{p}{2(p+1)}}/(\ln m)^{\fC'}, T\}$
\begin{align}\label{e:dfbound}
\|\Delta f(t)\|_2\lesssim \frac{(1+t)t^{p-1}\sqrt n}{m^{p/2}} \min\{t, n/\la\},\end{align}
and 
\begin{align}\label{e:diffKbound}
\|K_t^{(2)}- \tilde K_t^{(2)}  \|_\infty\lesssim 
\frac{(1+t)t^{p-1}}{m^{p/2}}\left(1+\frac{(1+t)t(\ln m)^{\fC}}{m}\min\left\{t, \frac{n}{\la}\right\}\right).
\end{align}
We notice that for $t\leq \min\{\fc \sqrt{\la m/n}/(\ln m)^{\fC/2}, m^{\frac{p}{2(p+1)}}/(\ln m)^{\fC'}, T\}$, and $p\geq 3$, the righthand side of \eqref{e:dfbound} is much smaller than $\sqrt n$. From the definition of $T$, it is necessary that  $T\geq \min\{\fc \sqrt{\la m/n}/(\ln m)^{\fC/2}, m^{\frac{p}{2(p+1)}}/(\ln m)^{\fC'}\}$. Thus we can conclude that \eqref{e:dfbound} and \eqref{e:diffKbound} hold for any $t\leq \min\{\fc \sqrt{\la m/n}/(\ln m)^{\fC}, m^{\frac{p}{2(p+1)}}/(\ln m)^{\fC'}\}$. This finishes the proof of Theorem \ref{t:main2}

\end{proof}

\end{document}